\newcommand{\QED}{\hfill\blacksquare}
\newcommand{\elambda}{\hat{\lambda}}
\newcommand{\emu}{\hat{\mu}}
\newcommand{\eL}{\hat{L}}
\newcommand{\eD}{\hat{D}}
\newcommand{\etheta}{\hat{\theta}}
\newcommand{\eTheta}{\hat{\Theta}}
\newcommand{\fP}{\tilde{P}}
\newcommand{\gradLip}{G}
\newcommand{\gradBound}{R}
\icmltitlerunning{Learning Globally Smooth Functions on Manifolds}
\begin{document}

\twocolumn[
\icmltitle{Learning Globally Smooth Functions on Manifolds}




\begin{icmlauthorlist}
\icmlauthor{Juan Cervi\~no}{upenn}
\icmlauthor{Luiz F.O. Chamon}{stut}
\icmlauthor{Benjamin D. Haeffele}{jhu}
\icmlauthor{Rene Vidal}{upenn}
\icmlauthor{Alejandro Ribeiro}{upenn}
\end{icmlauthorlist}
\icmlaffiliation{upenn}{University of Pennsylvania}
\icmlaffiliation{stut}{University of Stuttgart}
\icmlaffiliation{jhu}{Johns Hopkins University}

\icmlcorrespondingauthor{Juan Cervi\~no}{jcervino@seas.upenn.edu}

\icmlkeywords{Machine Learning, ICML}

\vskip 0.3in
]



\printAffiliationsAndNotice{\icmlEqualContribution} 

\begin{abstract} 
Smoothness and low dimensional structures play central roles in improving generalization and stability in learning and statistics. This work combines techniques from semi-infinite constrained learning and manifold regularization to learn representations that are globally smooth on a manifold. To do so, it shows that under typical conditions the problem of learning a Lipschitz continuous function on a manifold is equivalent to a dynamically weighted manifold regularization problem. This observation leads to a practical algorithm based on a weighted Laplacian penalty whose weights are adapted using stochastic gradient techniques. It is shown that under mild conditions, this method estimates the Lipschitz constant of the solution, learning a globally smooth solution as a byproduct. Experiments on real world data illustrate the advantages of the proposed method relative to existing alternatives.
\end{abstract}



\section{Introduction}

Learning smooth functions has been shown to be advantageous in general and is of particular interest in physical systems. This is because of the general observation that close input features tend to be associated with close outputs and of the particular fact that in physical systems, Lipschitz continuity of input-output maps translates to stability and safety \citep{oberman2018lipschitz, finlay2018improved,couellan2021coupling, finlay2018lipschitz, pauli2021training, krishnan2020lipschitz, shi2019neural, lindemann2021learning, arghal2021robust}. 

To learn smooth functions one can require the parameterization to be smooth. Such is the idea, e.g., of spectral normalization of weights in neural networks \citep{miyato2018spectral, zhao2020spectral}. Smooth parameterizations have the advantage of being globally smooth, but they may be restrictive because they impose smoothness for inputs that are not necessarily realized in the data. This drawback motivates the use of Lipschitz \textit{penalties} in risk minimization \citep{oberman2018lipschitz, finlay2018improved, couellan2021coupling, pauli2021training, bungert2021clip}, which offers the opposite tradeoff. Since penalties encourage but do not enforce small Lipschitz constants, we may learn functions that are smooth on average, but with no global guarantees of smoothness at every point in the support of the data~\citep{bubeck2021a,bubeck2021law}. Formulations that guarantee \emph{global} smoothness can be obtained if the risk minimization problem is modified by the addition of a Lipschitz constant \textit{constraint} \citep{krishnan2020lipschitz, shi2019neural, lindemann2021learning, arghal2021robust}. This yields formulations that guarantee Lipschitz smoothness in all possible inputs without the drawback of enforcing smoothness outside of the input data distribution. 
Several empirical studies \citep{krishnan2020lipschitz, shi2019neural, lindemann2021learning, arghal2021robust} demonstrated the advantage of imposing global smoothness constraints on observed inputs.

\begin{figure*}
	\centering
		\begin{subfigure}[b]{0.24\textwidth}
		\centering
		\includegraphics[width=\textwidth]{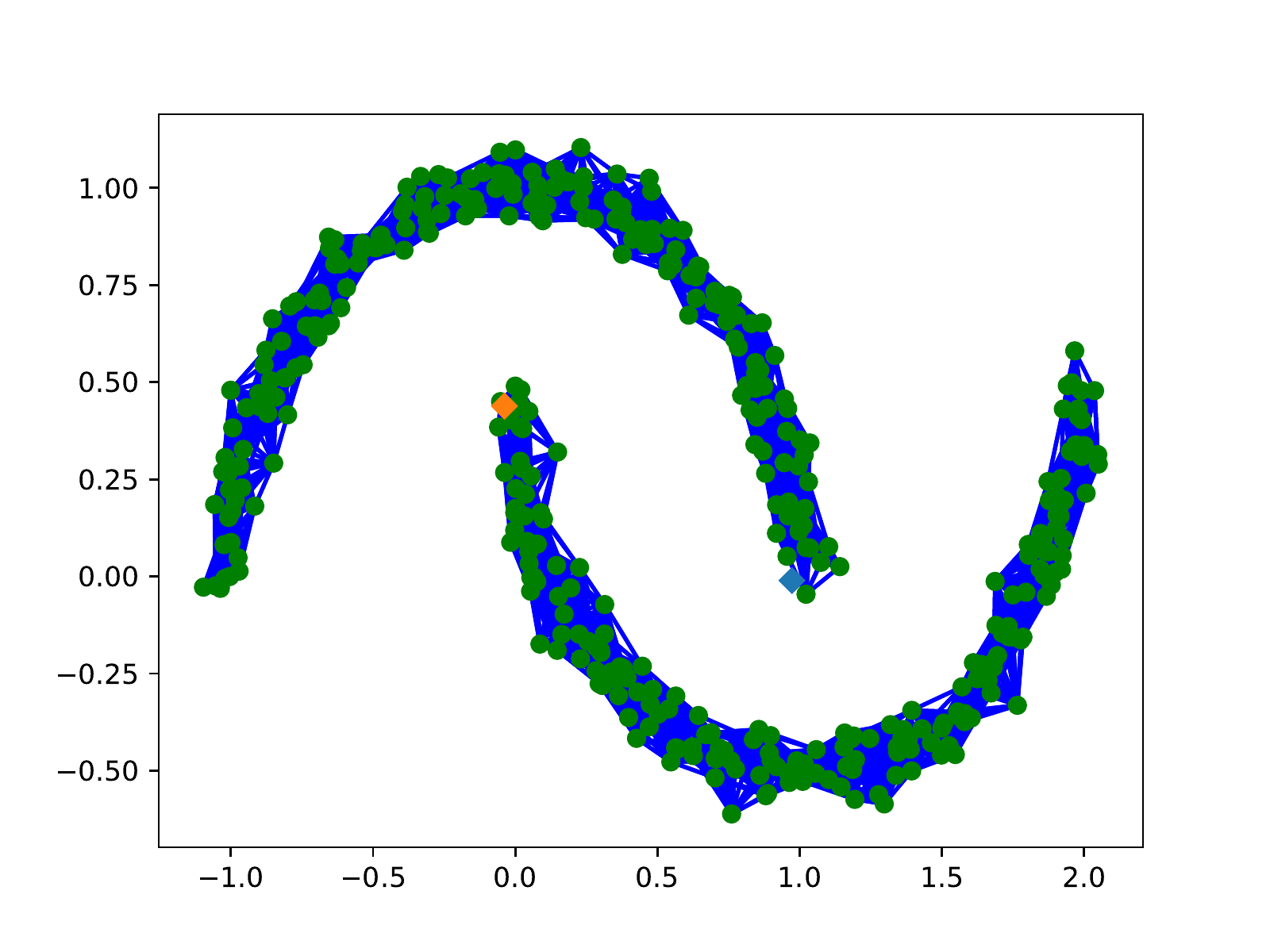}
	\end{subfigure}
	\hfill
	\begin{subfigure}[b]{0.24\textwidth}
		\centering
		\includegraphics[width=\textwidth]{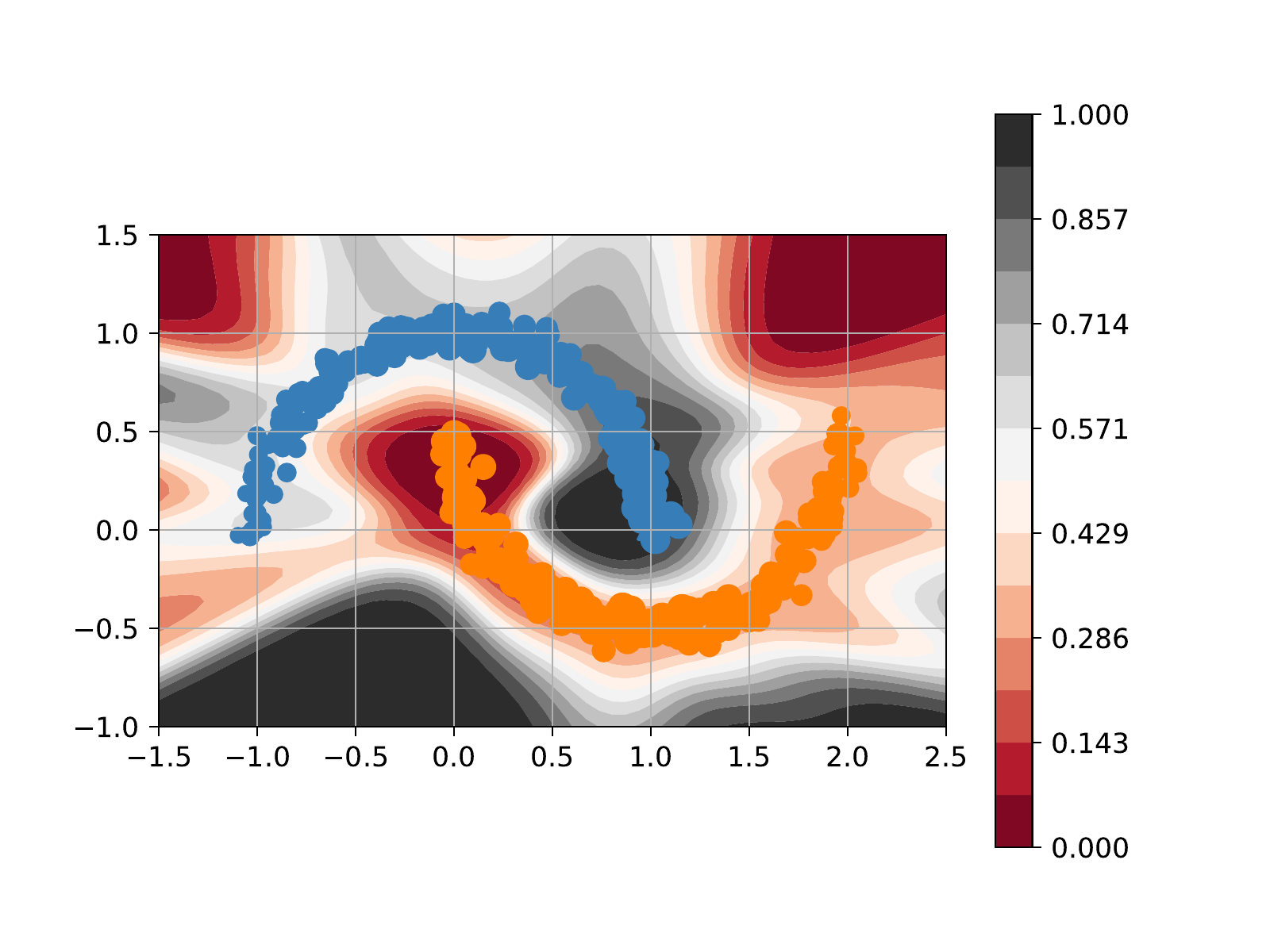}
	\end{subfigure}
	\hfill
	\begin{subfigure}[b]{0.24\textwidth}
		\centering
		\includegraphics[width=\textwidth]{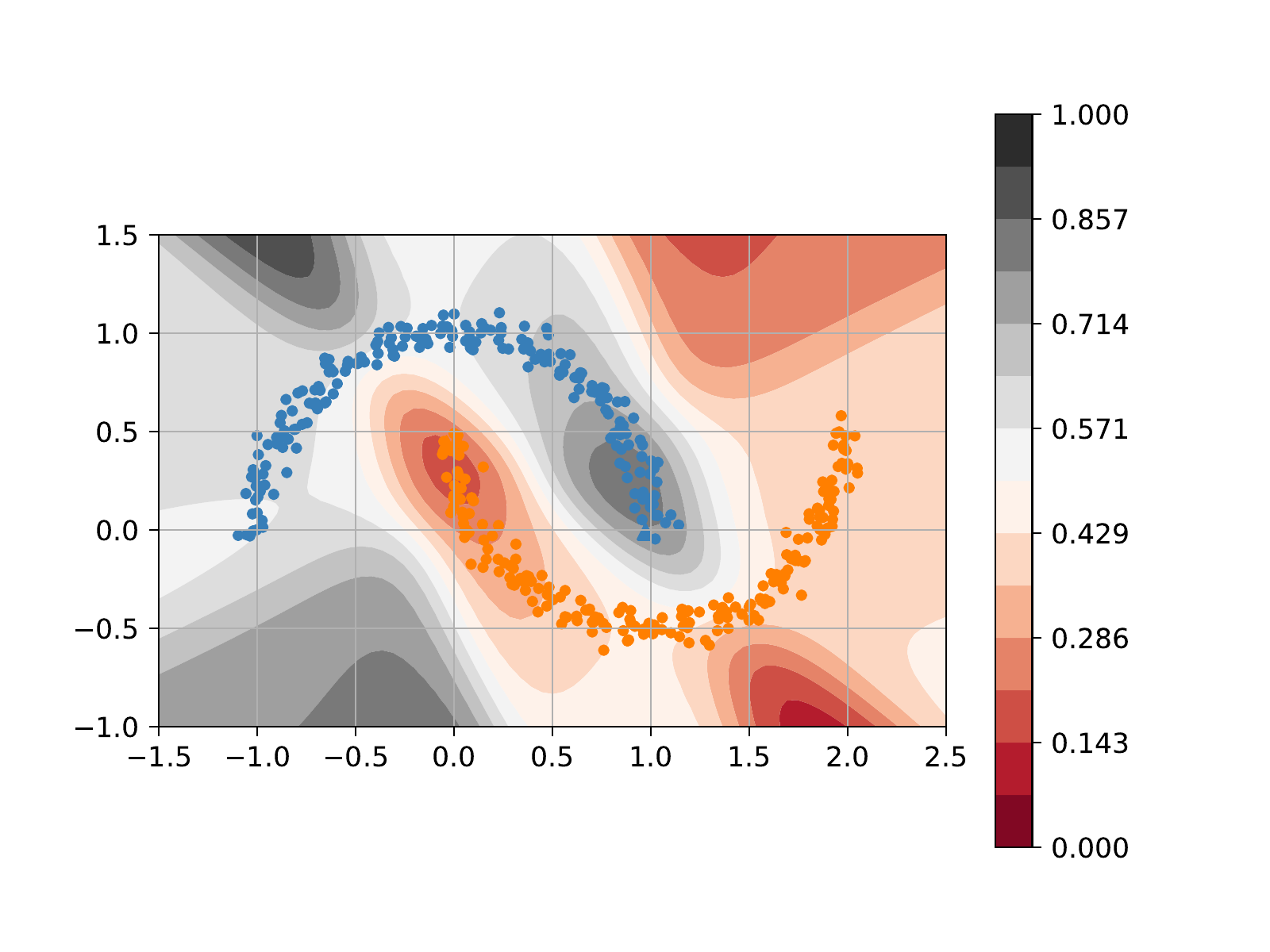}
	\end{subfigure}
	\hfill
	\begin{subfigure}[b]{0.24\textwidth}
		\centering
		\includegraphics[width=\textwidth]{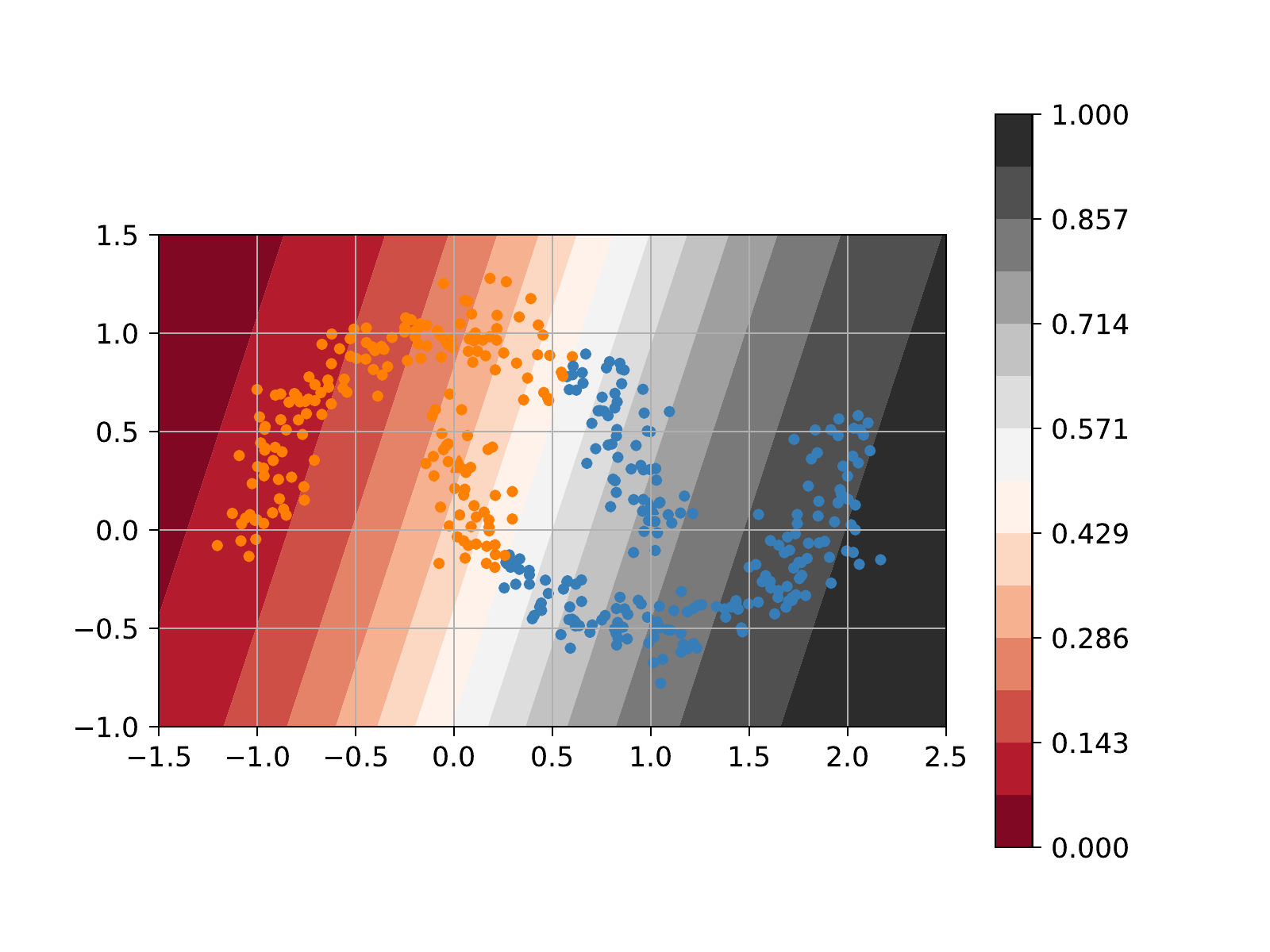}
	\end{subfigure}
	\hfill
	\begin{subfigure}[b]{0.24\textwidth}
		\centering
		\includegraphics[width=\textwidth]{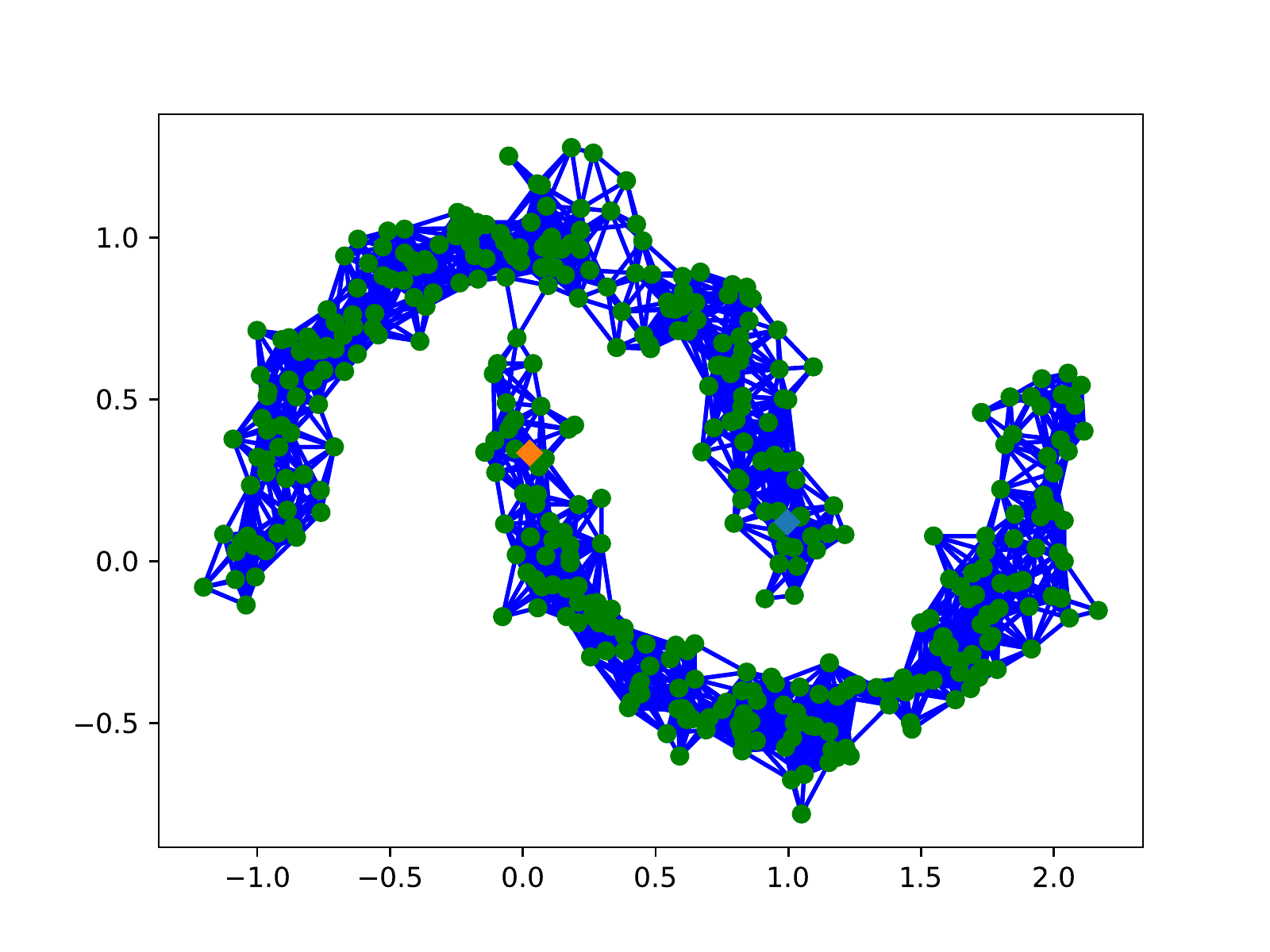}
		\caption{\tiny Dataset\label{subfig:dataset}}
	\end{subfigure}
	\hfill
	\begin{subfigure}[b]{0.24\textwidth}
		\centering
		\includegraphics[width=\textwidth]{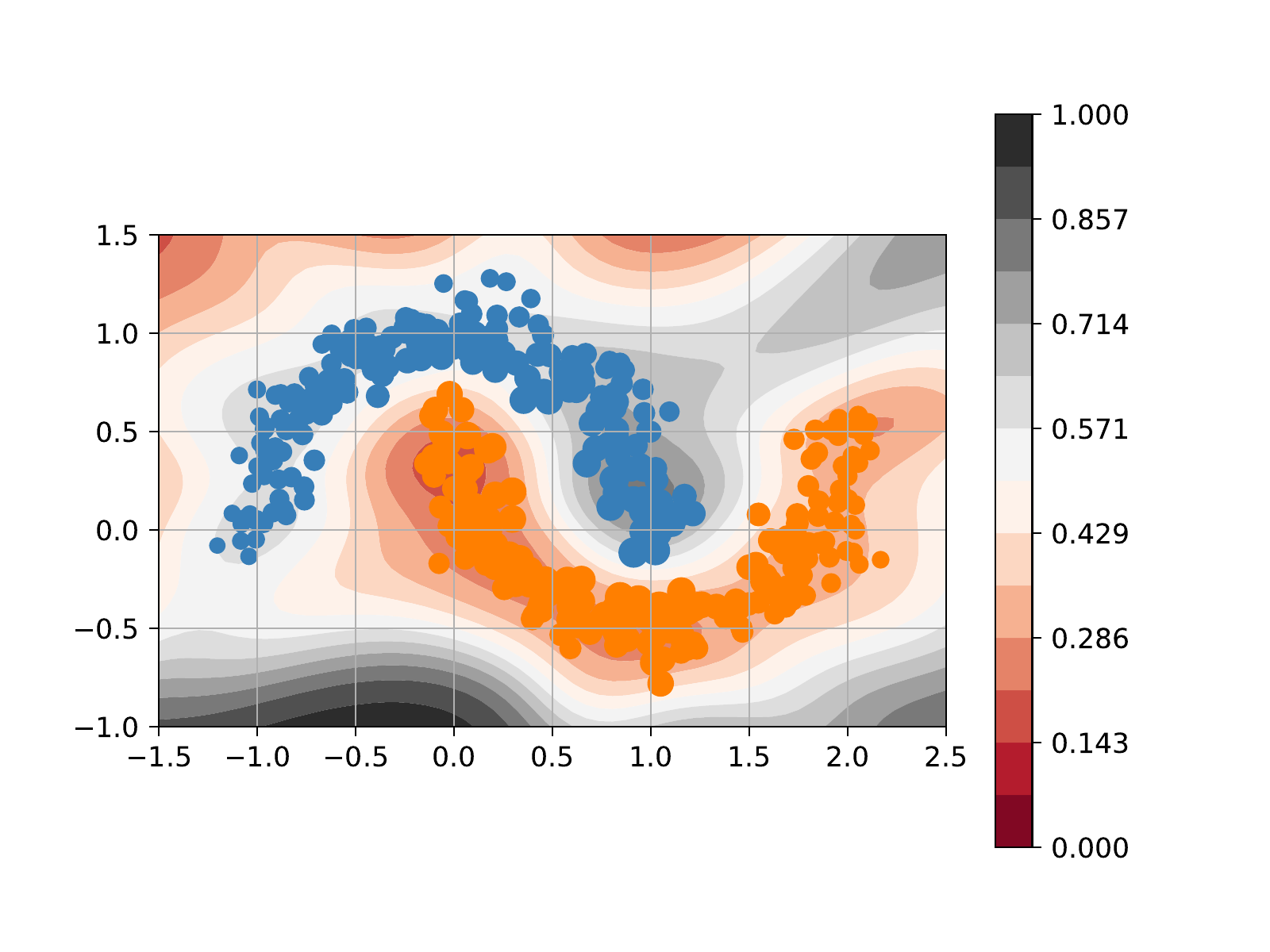}
		\caption{\tiny Manifold Lipschitz (ours)}
	\end{subfigure}
	\hfill
	\begin{subfigure}[b]{0.24\textwidth}
		\centering
		\includegraphics[width=\textwidth]{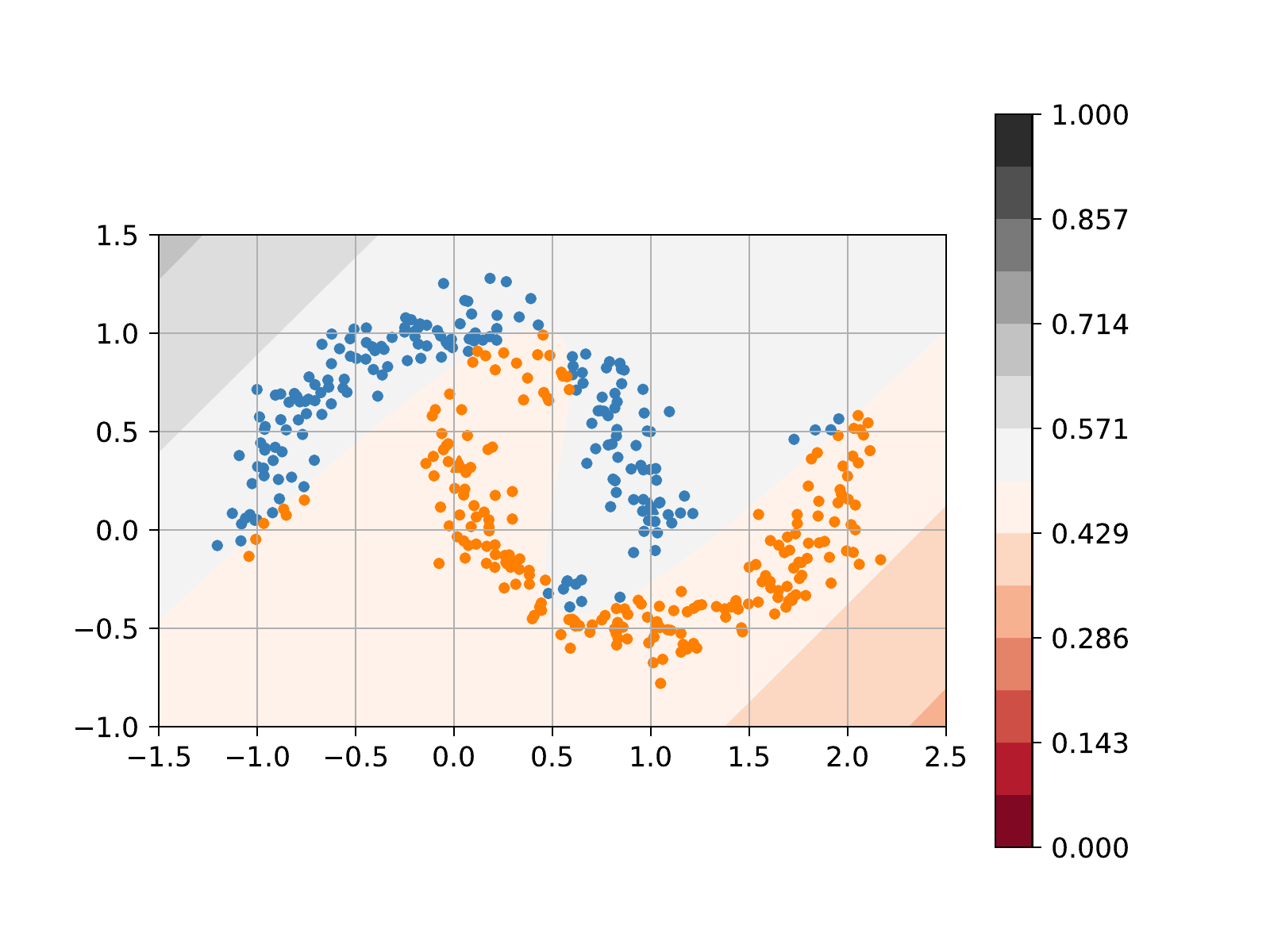}
		\caption{\tiny Manifold Regularization}
	\end{subfigure} 
	\hfill
	\begin{subfigure}[b]{0.24\textwidth}
		\centering
		\includegraphics[width=\textwidth]{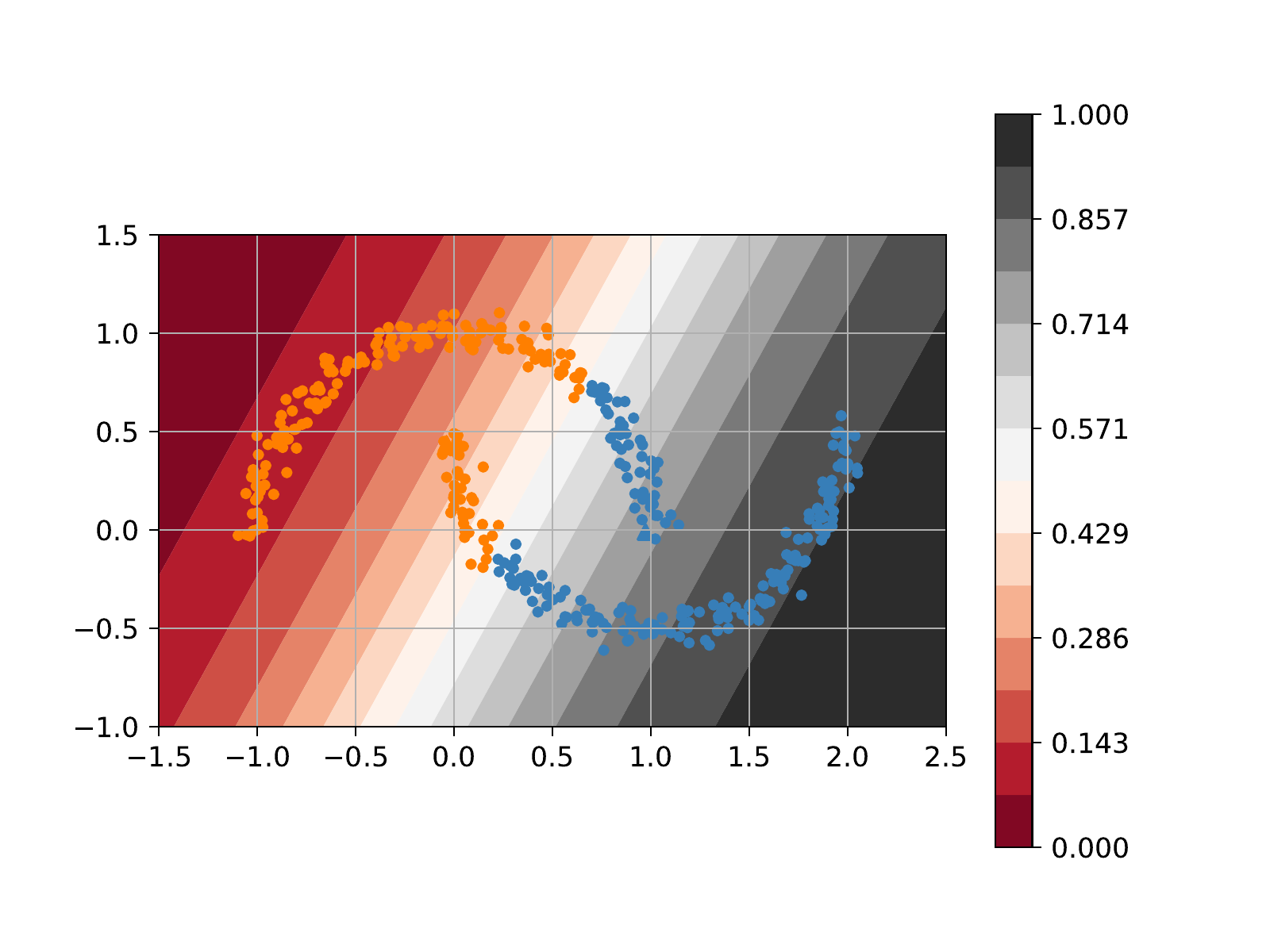}
		\caption{\tiny Ambient Regularization}
	\end{subfigure} 
	\caption{Two moons dataset. The setting consists of a two dimensional classification problem of two classes with $1$ labeled, and $200$ unlabeled samples per class. The objective is to correctly classify the $200$ unlabeled samples. We consider two cases (top) the estimated manifold has two connected components, and (bottom) the manifold is weakly connected (cf. Figure \ref{fig:two_moons}). We plot the output of a one layer neural network trained using Manifold Regularization, Manifold/Ambient Lipschitz.
	}
	\label{fig:two_moons}
\end{figure*}

In this paper we 
exploit the fact
that data can be often modeled as 
points in a low-dimensional
manifold. 
We therefore consider manifold Lipschitz constants in which function smoothness is assessed with respect to distances measured over the data manifold (Definition \ref{def_manifold_lipschitz}). Although this looks like a minor difference, controlling Lipschitz constants over data manifolds is quite different from controlling Lipschitz constants in the ambient space. In Figure \ref{fig:two_moons}, we look at a classification problem with classes arranged in two separate half moons. Constraining Lipschitz constants in the ambient space effectively assumes the underlying data is uniformly distributed in space [cf. Figure \ref{fig:two_moons}(d)]. Constraining Lipschitz constants in the data manifold, however, properly accounts for the data distribution [cf. Figure \ref{fig:two_moons}(a)]. 

This example also illustrates how constraining manifold Lipschitz constants is related to manifold regularization \citep{belkin2005manifold, niyogi2013manifold, li2022neural}. The difference is that manifold regularization penalizes the average norm of the manifold gradient. This distinction is significant because regularizing is more brittle than imposing constraints. In the example in Figure \ref{fig:two_moons}, manifold regularization fails to separate the dataset when the moons are close [cf. Figure \ref{fig:two_moons}-(c), bottom]. Classification with a manifold Lipschitz constant constraint is more robust to this change in the data distribution [cf. Figure \ref{fig:two_moons}-(a), bottom]. 

The advantages of constraining Lipschitz constants on manifolds that we showcase in the synthetic example of Figure \ref{fig:two_moons} manifest in the wild. To illustrate it, we empirically demonstrate this fact with two physical experiments. The first experiment entails learning a model for a differential drive ground robot in a mix of complex terrains. The second experiment consists of learning a dynamical model for a quadrotor. In both of these experiments constraining Lipschitz constants on manifolds improves upon standard risk minimization, manifold regularization, and the imposition of Lipschitz constraints in ambient space.

Global constraints on the manifold gradient yield a statistical constrained learning problem with an infinite and dense number of constraints. This is a challenging problem to approximate and solve. Here, we approach the solution of this problem in the Lagrangian dual domain and establish connections with manifold regularization that allow for the use of point cloud Laplacians. Our contributions include:


\begin{enumerate}[leftmargin=25pt,label=(C\arabic*)]
\item We introduce a constrained statistical risk minimization problem in which we learn a function that: (i) attains a target loss and (ii) attains the smallest possible manifold Lipschitz constant among functions that achieve this target loss (Section \ref{sec_problem_formulation}).

\item We introduce the Lagrangian dual problem and show that its empirical version is a statistically consistent approximation of the primal. These results do \emph{not} require the learning parametrization to be linear (Section \ref{sec_empirical_dual}).

\item We generalize results from the manifold regularization literature to show that under regularity conditions, the evaluation of manifold Lipschitz constants can be recast in a more amenable form utilizing a weighted point cloud Laplacian (Proposition \ref{P:pointcloud} in Section \ref{sec_laplacian}). 

\item We present a dual ascent algorithm to find optimal multipliers. The function that attains the target loss and minimizes the manifold Lipschitz constant follows as a byproduct (Section \ref{subsec:dual_ascent}).

\item We illustrate the merits of learning with global manifold smoothness guarantees with respect to ambient space and standard manifold regularization through two physical experiments: (i) learning model mismatches in differential drive steering over non-ideal surfaces and (ii) learning the motion dynamics of a quadrotor  (Section \ref{sec_numerical_results}).

\end{enumerate}


\subsection*{Related Work}
This paper is at the intersection of learning with Lipschitz constant constraints \citep{oberman2018lipschitz, finlay2018improved, couellan2021coupling, pauli2021training, bungert2021clip, miyato2018spectral, zhao2020spectral, krishnan2020lipschitz, shi2019neural, lindemann2021learning, arghal2021robust} and manifold regularization \citep{belkin2005manifold, niyogi2013manifold, li2022neural, hein2005graphs, belkin2005towards}. Relative to learning with Lipschitz constraints, we offer the ability to leverage data manifolds. Since data manifolds are often obtained from unlabeled data, as in \cite{kejani2020graph, belkin2004semi, jiang2019semi, kipf2016semi, yang2016revisiting, zhu2005semi, lecouat2018semi, ouali2020overview, cabannes2021overcoming} we also use point-cloud Laplacian techniques to compute the integral of the norm of the gradient. 

Relative to the literature on manifold regularization, we offer global smoothness assurances instead of an average penalty of large manifold gradients.
Similar to us, \citep{krishnan2020lipschitz} poses the problem of minimizing a Lipschitz constant. However, they utilize a softer surrogate (i.e. $p$-norm loss) which is a smoother version of the Lipschitz constant. Their approach therefore, tradeoffs numerical stability (small $p$) with accurate Lipschitz constant estimation ($p=\infty$). We do not work with surrogates and seek to minimize the maximum norm of the gradient utilizing an epigraph technique.

\section{Globally Constraining Manifold Lipschitz Constants}\label{sec_problem_formulation}

We consider data pairs $(x,y)$ in which the input features $x\in \ccalM \subset \reals^D$ lie in a compact oriented Riemannian manifold $\ccalM$ and the output features are real valued $y\in\reals$. We study the regression problem of finding a function $f_\theta:\ccalM \to \reals$, parameterized by $\theta\in\Theta\subset\reals^Q$ that minimizes the expectation of a nonnegative loss $\ell:\reals\times\reals\to\reals_+$, where $\ell(f_\theta(x),y)$ represents the loss of predicting output $f_\theta(x)$ when the world realizes the pair $(x,y)$. Data pairs $(x,y)$ are drawn according to an unknown probability distribution $p(x,y)$ on $\ccalM \times \reals$ which we can factor as  $p(x,y)=p(x)p(y|x)$.

We are interested in learning smooth functions, i.e. functions with \emph{controlled} variability over the manifold $\ccalM$. We therefore let $\nabla_\ccalM f_\theta(x)$ represent the manifold gradient of $f_\theta$ and introduce the following definition.



\begin{definition}[Manifold Lipschitz Constant]\label{def_manifold_lipschitz}
	Given a Riemannian manifold $\ccalM$, the function $f_\theta : \ccalM \to \reals$ is said to be $L$-Lipschitz continuous if there exists a strictly positive constant $L>0$ such that for all pairs of points $x_1,x_2 \in \ccalM$, 
	\begin{align}\label{eqn_def_manifold_lipschitz_10}
		|f_\theta(x_1)-f_\theta(x_2)| \leq L\, d_{\ccalM} (x_1,x_2),
	\end{align}
	where $d_\ccalM (x_1,x_2)$ denotes the distance between $x_1$ and $x_2$ in the manifold $\ccalM$. If the function $f_\theta$ is differentiable on the manifold, \eqref{eqn_def_manifold_lipschitz_10} is equivalent to requiring the gradient norm to be bounded by~$L$,
	\begin{align}\label{eqn_def_manifold_lipschitz_20}
		\| \nabla_\ccalM f_\theta(x) \| 
		~ &:=  ~ \lim_{\delta \to 0} ~
		\sup_{ \substack{ x' \in \ccalM ~:~ x' \neq x, \\ d_\ccalM (x,x') \leq \delta}} ~
		\frac{|f_\theta(x)-f_\theta(x')|}{d_\ccalM(x,x')} \nonumber \\
		 &\leq ~ L , \text{\quad for all~}  x \in \ccalM.
	\end{align}
\end{definition}


With Definition \ref{def_manifold_lipschitz} in place and restricting attention to differentiable functions $f_\theta$, our stated goal of learning functions $f_\theta$ with controlled variability over the manifold $\ccalM$ can be written as
\begin{align}
	\label{prob:manifold_lipschitz}
	P^* = \ \ \ \ \underset{\theta\in\Theta,\rho\geq 0}{\min}  \quad   
	& \rho, \\
	\text{subject to}  \quad 
	& \mbE_{p(x,y)} [\ell \left(f_\theta(x),y\right)]\leq \epsilon,\nonumber \\
	& \| \nabla_\ccalM f_\theta(z) \|^2 \leq \rho , p(z)\text{-a.e.},  z \in \ccalM \nonumber. 
\end{align}
In this formulation, the statistical loss $\mbE_{p(x,y)} [\ell \left(f_\theta(x),y\right)]$ is required to be below a target level $\eps$. Of all functions $f_\theta$ that satisfy this loss requirement, Problem \eqref{prob:manifold_lipschitz} defines as optimal those that have the smallest Lipschitz constant $L=\sqrt{\rho}$.

The goal of this paper is to develop methodologies to solve~\eqref{prob:manifold_lipschitz} when the data distribution and the manifold are unknown. To characterize the distribution, we are given sample pairs $(x_n,y_n)$ drawn independently from the joint distribution $p(x,y)$. To characterize the manifold, we are given samples $z_n$ drawn from the marginal distribution $p(x)$. This includes the samples $x_n$ from the (labeled) data pairs  $(x_n,y_n)$, but may also include additionally~(unlabeled) samples $z_n$.

Observe from \eqref{prob:manifold_lipschitz} that the problems of manifold regularization \citep{belkin2005manifold, niyogi2013manifold, kejani2020graph, li2022neural} and Lipschitz constant control \citep{oberman2018lipschitz, finlay2018improved,couellan2021coupling, finlay2018lipschitz, pauli2021training, krishnan2020lipschitz, shi2019neural, lindemann2021learning, arghal2021robust} are related. This connection is important to understand the merit of \eqref{prob:manifold_lipschitz}. To explain this better observe that there are three motivations for the problem formulation in \eqref{prob:manifold_lipschitz}: (i) it is often the case that if samples $x_1$ and $x_2$ are close, then the conditional distributions $p(y \mid x_1)$ and $p(y \mid x_2)$ are close as well. A function $f_\theta$ with small Lipschitz constant leverages this property, (ii) the Lipschitz constant of $f_\theta$ is \emph{guaranteed} to be smaller than $L=\sqrt{\rho}$, this provides advantages in, e.g., physical systems where Lipschitz constant guarantees translates to stability and safety assurances, (iii) it leverages the intrinsic low-dimensional structure of the manifold $\ccalM$ embedded in the ambient space. In particular, this permits taking advantage of unlabeled data. 

Motivations (i) and (ii) are tropes of the Lipschitz regularization literature; e.g., \cite{oberman2018lipschitz, finlay2018improved,couellan2021coupling, finlay2018lipschitz, pauli2021training, krishnan2020lipschitz, shi2019neural, lindemann2021learning, arghal2021robust}. Indeed, the problem formulation in \eqref{prob:manifold_lipschitz} is inspired in similar problem formulations in which the Lipschitz constant is regularized in the ambient space,
\begin{align}
	\label{prob:ambient_lipschitz}
	\underset{\theta\in\Theta,\rho\geq 0}{\minimize}  \quad   
	& L, \\
	\text{subject to}  \quad 
	& \mbE_{p(x,y)} [\ell \left(f_\theta(x),y\right)]\leq \epsilon,\nonumber\\
	&  |f_\theta(w)-f_\theta(z)| \leq L\, \| w - z \|,\nonumber \\
    &\quad(w , z)\sim p(w)\times p(z).\nonumber
\end{align}
A key difference between \eqref{prob:manifold_lipschitz} and \eqref{prob:ambient_lipschitz} is that the latter uses a Lipschitz condition that does not require differentiability. A more important difference is that in \eqref{prob:ambient_lipschitz}, the Lipschitz constant is regularized in the ambient space. The distance between features $w$ and $z$ in \eqref{prob:ambient_lipschitz} is the Euclidean distance $\| w - z \|$. This is disparate from the manifold metric $d_\ccalM(w,z)$ that is implicit in the manifold gradient constraint in \eqref{prob:manifold_lipschitz}. Thus, the formulation in \eqref{prob:manifold_lipschitz} improves upon \eqref{prob:ambient_lipschitz} because it leverages the structure of the manifold $\ccalM$ [cf. motivation~(iii)].

Motivations (i) and (iii) are themes of the manifold regularization literature \citep{belkin2005manifold, niyogi2013manifold, kejani2020graph, li2022neural}. And, indeed, it is ready to conclude by invoking Green's first identity (see Section \ref{sec_laplacian}) that the formulation in \eqref{prob:manifold_lipschitz} is also inspired in the manifold regularization problem,
\begin{align}\label{prob:manifold_regularization}
	\underset{\theta\in\Theta}{\minimize}  \quad&  
	\mbE_{p(x,y)} [\ell \left(f_\theta(x),y\right)] \\
	&+ \gamma \int_\ccalM \| \nabla_\ccalM f_\theta(z) \|^2 p(z) dV(z).\nonumber
\end{align}
The difference between \eqref{prob:manifold_lipschitz} and \eqref{prob:manifold_regularization} is that the latter adds the manifold Lipschitz constant as a regularization penalty. This is disparate from the imposition of a manifold Lipschitz constraint in \eqref{prob:manifold_lipschitz}. The regularization in \eqref{prob:manifold_regularization}  \emph{favors} solutions with small Lipschitz constant by penalizing large Lipschitz constants, while the constraint in \eqref{prob:manifold_lipschitz}  \emph{guarantees} that the Lipschitz constant is bounded by $\sqrt{\rho}$. This is the distinction between \emph{regularizing} a Lipschitz constant and \emph{constraining} a Lipschitz constant. The constraint in \eqref{prob:manifold_regularization} is also imposed at \emph{all} points in the manifold, whereas the regularization in \eqref{prob:manifold_regularization} is an average over the manifold. Taking an average allows for large Lipschitz constants at some specific points if this is canceled out by small Lipschitz constants in other points of the manifold \citep{bubeck2021a,bubeck2021law}. Both of these observations imply that \eqref{prob:manifold_lipschitz} improves upon \eqref{prob:manifold_regularization} because it offers global smoothness guarantees that are important in, e.g., physical systems [cf. motivation~(iii)].


\begin{remark}[Manifold Lipschitz formulations] \normalfont
	There are three arbitrary choices in \eqref{prob:manifold_lipschitz}: (a) We choose to constrain the average statistical loss $\mbE_{p(x,y)} [\ell \left(f_\theta(x),y\right)]\leq \epsilon$; (b) we choose to constrain the pointwise Lipschitz constant $\| \nabla_\ccalM f_\theta(z) \|^2 \leq \rho$; and (c) we choose as our objective to require a target loss $\eps$ and minimize the Lipschitz constant $L=\sqrt{\rho}$. We can alternatively choose to constrain the pointwise loss $\ell \left(f_\theta(x),y\right)\leq \epsilon$, to constrain the average Lipschitz constant $\int_\ccalM \| \nabla_\ccalM f_\theta(z) \|^2 p(z) dV(z) \leq \rho$ [cf \eqref{prob:manifold_regularization}], or to require a target smoothness $L=\sqrt{\rho}$ and minimize the loss $\eps$. All of the possible eight combinations of choices are of interest. We formulate \eqref{prob:manifold_lipschitz} because it is the most natural intersection between the regularization of Lipschitz constants in ambient spaces [cf. \eqref{prob:ambient_lipschitz}] and manifold regularization [cf. \eqref{prob:manifold_regularization}]. The techniques we develop in this paper can be adapted to any of the other seven alternative formulations.
\end{remark}


\section{Learning with Global Lipschitz Constraints} \label{sec_algorithm}

Problem~\eqref{prob:manifold_lipschitz} is a constrained learning problem that we will solve in the dual domain~\citep{chamon2020probably}. To that end, observe that \eqref{prob:manifold_lipschitz} has statistical and pointwise constraints. The loss constraint $\mbE_{p(x,y)} [\ell \left(f_\theta(x),y\right)]\leq \epsilon$ is said to be statistical because it restricts the expected loss over the data distribution. The Lipschitz constant constraints $\| \nabla_\ccalM f_\theta(z) \|^2 \leq \rho$ are said to be pointwise because they are imposed for all individual points in the manifold except perhaps for a set of zero measure. Consider then a Lagrange multiplier $\mu$ associated with the statistical constraint $\mbE_{p(x,y)} [\ell \left(f_\theta(x),y\right)]\leq \epsilon$ and a Lagrange multiplier distribution $\lambda(z)$ associated with the set of pointwise constraints $\| \nabla_\ccalM f_\theta(z) \|^2 \leq \rho$. We define the Lagrangian $L(\theta,\mu,\lambda) $ associated with \eqref{prob:manifold_lipschitz} as 
\begin{align}
L(\theta,\mu,\lambda) :=&\mu \Big(\mbE [\ell \big(f_\theta(x),y\big)]-\epsilon \Big)\nonumber \\
		&+ \int_\ccalM \lambda(z) \|\nabla_\ccalM f_\theta(z)\|^2 p(z) dV(z).
\end{align}
The dual problem associated with \eqref{prob:manifold_lipschitz} can then be written as
	\begin{align}
		D^*= \underset{\mu,\lambda \geq 0}{\text{max}}\ \min_{\theta}\ &L(\theta,\mu,\lambda) ,\label{prob:manifold_dual} \\
		\text{subject to }&\ \int_\ccalM \lambda(z) p(z) dV(z) = 1. \nonumber
	\end{align}
We point out that in \eqref{prob:manifold_dual} we remove $\rho$ from the Lagrangian by incorporating the dual variable constraint $\int_\ccalM \lambda(x)  p(x) dV(x) = 1$ (see Appendix \ref{sec:dual_problem_formulation} for details). 

We henceforth use the dual problem \eqref{prob:manifold_dual} in lieu of \eqref{prob:manifold_lipschitz}. Since we are interested in situations in which we do not have access to the data distribution $p(x,y)$, we further consider empirical versions of \eqref{prob:manifold_dual}. Given $N$ i.i.d. samples $(x_n,y_n)$ drawn from $p(x,y)$, define the empirical Lagrangian $\eL(\theta,\emu,\elambda)$ as
\begin{align}
    \eL(\theta,\emu,\elambda) :=& \emu \bigg(\dfrac{1}{N} \sum_{n = 1}^N \ell \big( f_\theta(x_n),y_n \big) -\epsilon \bigg)\nonumber\\
    &+\frac{1}{N}\sum_{n=1}^N \elambda(x_n) \|\nabla_\ccalM f_\theta(x_n)\|^2.
\end{align}
and the empirical dual problem as
	\begin{align}
		\eD^\star = \max_{\emu,\elambda \geq 0}\ \min_{\theta}\ &	\eL(\theta,\emu,\elambda) \label{prob:emp_dual}
			\\
			{} \text{subject to }& \frac{1}{N}\sum_{n=1}^N \elambda(x_n)  =1\nonumber
			\text{,}
	\end{align}
where, to simplify notation, we assume no unlabeled samples are available. If unlabeled samples are given, the modification is straightforward (see Appendix \ref{sec_empricial_unlabeled_samples}). 

The remainder of this section provides three technical contributions:


In section \ref{sec_empirical_dual} we address contribution C$2$: To justify the use of \eqref{prob:emp_dual} we must show statistical consistency with respect to the primal problem \eqref{prob:manifold_lipschitz}. This is challenging for two reasons: (i) since we do not assume the use of a linear parameterization, \eqref{prob:manifold_lipschitz} is not a linear problem in $\theta$. Thus, the primal \ref{prob:manifold_lipschitz} and dual \ref{prob:manifold_dual} are not necessarily equivalent, (ii) since we are maximizing over the dual variables $\mu$ and $\lambda(z)$, we do not know if the empirical dual formulation in \eqref{prob:emp_dual} is close to the statistical dual formulation in \eqref{prob:manifold_dual}. We overcome these two challenges and show that the empirical dual problem \eqref{prob:emp_dual} is a consistent approximation of the statistical primal problem (Proposition \ref{P:statistical_empirical_bound}). 

In section \ref{sec_laplacian} we address contribution C$3$: Solving \eqref{prob:emp_dual} requires evaluating the gradient norm sum $\sum_{n=1}^N \elambda(x_n) \|\nabla_\ccalM f_\theta(x_n)\|^2$. We will generalize results from the manifold regularization literature to show that under regularity conditions on~$\lambda$, the gradient norm integral can be computed in a more amenable form utilizing a weighted point-cloud Laplacian (Proposition \ref{P:pointcloud}). (Contribution C$3$).

In section \ref{subsec:dual_ascent} we address contribution C$4$: We introduce a primal-dual algorithm to solve \eqref{prob:emp_dual}. 


\subsection{Statistical consistency of the empirical dual problem}\label{sec_empirical_dual}

In this section, we show that \eqref{prob:emp_dual} is close to \eqref{prob:manifold_lipschitz} under the following assumptions:
\begin{assumption}\label{A:loss}
	The loss~$\ell$ is $M$-Lipschitz continuous, $B$-bounded, and convex.
\end{assumption}
%
%
\begin{assumption}\label{A:phi}
	Let $\ccalH = \{ f_\theta \mid \theta \in \Theta\subset\reals^Q \}$ with compact~$\Theta$ be the hypothesis class, and let $\bar{\ccalH} = \overline{\textup{conv}}(\ccalH)$ be the closure of its convex hull. For each $\nu>0$ and $\varphi \in \bar{\ccalH}$, there exists~$\theta \in \Theta$ such that simultaneously $\sup_{z \in \ccalM} \vert \varphi(z) - f_\theta(z) \vert \leq \nu$ and $\sup_{z \in \ccalM} \| \nabla_\ccalM \varphi(z) - \nabla_\ccalM f_\theta(z) \| \leq \nu$.
\end{assumption}
\begin{assumption}\label{A:phi_gradient}	
The functions in the hypothesis class $\ccalH$ have $\gradLip$-Lipschitz gradients, i.e. $ \|\nabla_\ccalM f_{\theta_1}(z)-\nabla_\ccalM f_{\theta_2}(z)\| \leq G|f_{\theta_1}(z) - f_{\theta_2}(z)| $, for all $\theta_1,\theta_2\in\Theta$.
\end{assumption}
\begin{assumption}\label{A:rate}
	There exists~$\zeta(N,\delta) \geq 0$, and $\hat\zeta(N,\delta)\geq 0$ monotonically decreasing with~$N$, such that
	\begin{align}\label{E:emp_bound}
		&\bigg\vert \mbE [\ell \left(f_\theta(x),y\right)] - \dfrac{1}{N} \sum_{n = 1}^N \ell \big( f_\theta(x_n),y_n \big) \bigg\vert
		\leq \zeta(N,\delta)
		\text{,}\nonumber\\
		&\bigg\vert \mbE [f_\theta(x)] - \dfrac{1}{N} \sum_{n = 1}^N  f_\theta(x_n)  \bigg\vert
		\leq \hat\zeta(N,\delta)
		\text{,}
	\end{align}
	for all~$\theta \in \Theta$, with probability~$1-\delta$ over independent draws~$(x_n,y_n) \sim p$.
\end{assumption}
\begin{assumption}\label{A:feasible}
	There exists a feasible solution~$\tilde{\theta} \in \Theta$ such that~$\mbE [ \ell(f_{\tilde{\theta}}(x),y) ] < \epsilon - M\nu$.
\end{assumption}
Assumption \ref{A:loss} holds for most losses utilized in practice. Assumption~\ref{A:phi} is a requirement on the richness of the parametrization. In the particular case of neural networks, the covering constant $\nu$ is upper bounded by the universal approximation bound of the neural network. Assumption~\ref{A:phi_gradient} also holds for neural networks with smooth nonlinearities -- e.g., hyperbolic tangents. The \emph{uniform convergence} property~\eqref{E:emp_bound} is customary in learning theory to prove PAC learnability and is implied by bounds on complexity measures such as VC dimension or Rademacher complexity~\citep{mohri2018foundations,vapnik1999nature,shalev2014understanding}. In fact, if it has bounded Rademacher complexity, both $\zeta$ and $\hat\zeta$ are bounded given that $\ell$ is Lipschitz continuous (Assumption~\ref{A:loss}). The following proposition provides the desired bound. 
\begin{proposition}\label{P:statistical_empirical_bound}
	Let~$\emu^\star,\elambda^\star$ be solutions of the empirical dual problem \eqref{prob:emp_dual}. Under assumptions~\ref{A:loss}--\ref{A:feasible}, there exists~$\etheta^\star \in \argmin_{\theta} \eL(\theta,\emu^\star,\elambda^\star)$ such that, with probability~$1-5\delta$, 
	\begin{equation}\label{E:statistical_empirical_bound}
		\begin{gathered}
			\vert P^\star -\eD^\star \vert \leq \ccalO(\nu) + (1+\Delta) \zeta(N,\delta) + \ccalO( \hat \zeta(N,\delta))\text{,}
		\end{gathered}
	\end{equation}
	where~$\Delta = \max(\emu^\star,\mu^\star) \leq C$ for a constant~$C < \infty$ and~$\mu^\star,\emu^\star$ are solutions of~\eqref{prob:manifold_dual}, \eqref{prob:emp_dual} respectively.
\end{proposition}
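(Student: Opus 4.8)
The plan is to bound $|P^\star - \eD^\star|$ by inserting intermediate quantities and controlling each gap separately. Let $D^\star$ be the value of the statistical dual problem~\eqref{prob:manifold_dual}. I would write
\[
|P^\star - \eD^\star| \leq |P^\star - D^\star| + |D^\star - \eD^\star|,
\]
and attack the two terms with different tools. The first gap, $|P^\star - D^\star|$, is a duality-gap estimate: the primal~\eqref{prob:manifold_lipschitz} is not convex in $\theta$ (no linear parametrization is assumed), so strong duality does not hold exactly. However, by Assumption~\ref{A:phi} the hypothesis class is $\nu$-dense in its closed convex hull $\bar{\ccalH}$ in both function value and manifold gradient, and over $\bar{\ccalH}$ the relevant functionals (the expected loss, which is convex by Assumption~\ref{A:loss}, and the gradient-norm penalty, which is convex in $f$) give a convex problem with zero duality gap. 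The standard constrained-learning argument (as in \citealt{chamon2020probably}) then shows that the duality gap of the original nonconvex problem is $\ccalO(M\nu)$: the $\nu$-covering in function value transfers through the $M$-Lipschitz loss, and Assumption~\ref{A:feasible} with the strict margin $\epsilon - M\nu$ guarantees Slater's condition for the perturbed problem so the optimal multiplier $\mu^\star$ is bounded by a constant $C$. The gradient term needs Assumption~\ref{A:phi_gradient} to relate gradient perturbations back to function-value perturbations.

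The second gap, $|D^\star - \eD^\star|$, is the empirical-process part. Here both the objective and the constraint in the dual are replaced by sample averages, and we are \emph{maximizing} over $(\mu,\lambda)$, so we cannot simply appeal to uniform convergence of the inner minimization. The approach is: (i) show the empirical and statistical dual \emph{functions} are uniformly close, i.e. $\sup_{\theta,\mu,\lambda}|L(\theta,\mu,\lambda) - \eL(\theta,\mu,\lambda)|$ is controlled, using Assumption~\ref{A:rate} for the loss term (contributing $\mu \cdot \zeta(N,\delta)$, hence the factor $(1+\Delta)$ after accounting for the normalization constraint that couples $\mu$ to the $\lambda$ mass) and a companion concentration bound $\hat\zeta(N,\delta)$ for the term $\mbE[f_\theta]$ that arises when one rewrites the gradient-norm penalty via the Laplacian/Green's-identity identity from Section~\ref{sec_laplacian} (this is why $\hat\zeta$ appears); (ii) use the elementary fact that if two functions are uniformly $\eta$-close then their $\max\min$ values differ by at most $\eta$, so the value gap inherits the same bound; (iii) handle the discrepancy between the empirical and population normalization constraints $\frac1N\sum\lambda(x_n)=1$ versus $\int\lambda p\,dV=1$ by a rescaling argument, which again costs a $\hat\zeta$-order term times the multiplier magnitude, and bound $\emu^\star$ by a constant via the empirical Slater condition (Assumption~\ref{A:feasible} again, now with the empirical loss, valid on the $1-\delta$ event of Assumption~\ref{A:rate}). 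Collecting the probabilities: two events from Assumption~\ref{A:rate} (for $\zeta$ and $\hat\zeta$), and the near-feasibility / boundedness arguments consume the remaining slack, giving the stated confidence $1-5\delta$ by a union bound.

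Finally, one extracts the near-minimizer: take any $\etheta^\star \in \argmin_\theta \eL(\theta,\emu^\star,\elambda^\star)$, which exists because $\Theta$ is compact and $f_\theta$ depends continuously on $\theta$ (implicit in Assumptions~\ref{A:phi}–\ref{A:phi_gradient}); the chain of inequalities above certifies that its value is within the claimed additive error of $P^\star$, and that $\Delta = \max(\emu^\star,\mu^\star)\le C$.

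The main obstacle I expect is step~(i)–(iii) of the second gap — the empirical-process control of the \emph{dual} problem. Because the maximization is over the infinite-dimensional multiplier $\lambda(\cdot)$ subject to a normalization that itself is estimated, one cannot treat $\lambda$ as a fixed weight; the argument must show that the optimal $\lambda$ stays in a set over which uniform convergence holds (e.g. bounded total mass) and that the normalization mismatch is lower-order. Keeping the dependence on the multiplier magnitude $\Delta$ explicit — rather than hiding it in $\ccalO(\cdot)$ — is the delicate bookkeeping that produces exactly the form $(1+\Delta)\zeta(N,\delta)$ in~\eqref{E:statistical_empirical_bound}.
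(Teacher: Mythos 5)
Your decomposition $\vert P^\star-\eD^\star\vert\le\vert P^\star-D^\star\vert+\vert D^\star-\eD^\star\vert$ --- controlling the first gap by the $\nu$-dense convex-hull relaxation, zero duality gap of the functional problem, and Slater's condition (giving the bounded multiplier and the $\ccalO(\nu)$ term), and the second by uniform convergence of the Lagrangian over the dual variables with the normalization mismatch handled by rescaling $\lambda$ --- is exactly the route the paper takes (Lemma~\ref{lemma:duality_gap_grad_lip} followed by the empirical-process argument). The only cosmetic difference is the mechanism producing the $\hat\zeta$ term: the paper does not invoke Green's identity here, but instead uses Assumption~\ref{A:phi_gradient} together with compactness to bound $\bigl\vert \|\nabla_\ccalM f_{\theta_1}\|^2-\|\nabla_\ccalM f_{\theta_2}\|^2\bigr\vert$ by a multiple of $\vert f_{\theta_1}-f_{\theta_2}\vert$ and then applies Talagrand's contraction lemma to reduce the gradient-penalty fluctuation to the function-value concentration of Assumption~\ref{A:rate}.
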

Proposition~\ref{P:statistical_empirical_bound} shows that the empirical dual problem \ref{prob:emp_dual} is statistically consistent. That is to say, for any realization of $N$ according to $p$, the difference between the empirical dual problem~\eqref{prob:emp_dual} and the statistical smooth learning problem~\eqref{prob:manifold_lipschitz} decreases as $N$ increases. This difference is bounded in terms of the richness of the parametrization~($\nu$), the difficulty of the fit requirement~(as expressed by the optimal dual variables~$\mu^\star,\emu^\star$), and the number of samples~($N$). The guarantee has a form typical for constrained learning problems~\citep{chamon2022constrained}. Proposition \ref{P:statistical_empirical_bound} states that we are able to predict what is the minimum norm of the gradient that a function class can have while achieving an expected loss of at most~$\epsilon$. This is important because we do not require access to the distribution $p$, only a set of $N$ samples from this distribution. On the other hand, Proposition \ref{P:statistical_empirical_bound} does not state that by solving the dual problem \eqref{prob:emp_dual} we will obtain a solution of the primal problem \eqref{prob:manifold_lipschitz}. The following proposition provides a bound on the near feasibility of the solution of \eqref{prob:emp_dual} with respect to the solution of \eqref{prob:manifold_lipschitz}.
\begin{proposition}\label{P:solution_bound}
	Let~$\emu^\star,\elambda^\star$ be solutions of the empirical dual problem~\eqref{prob:emp_dual}. Under assumptions~\ref{A:loss}--\ref{A:feasible}, there exists~$\etheta^\star \in \argmin_{\theta} \eL(\theta,\emu^\star,\elambda^\star)$ such that, with probability~$1-5\delta$, 
	\begin{align}\label{E:solution_bound}
		\max_{z\in\ccalM} \|\nabla_\ccalM f_{\etheta^\star} (z)\|^2 \leq & P^* +\ccalO(\nu)+\ccalO(\hat\zeta(N,\delta)) \\
   &+ \emu^*\bigg\vert\frac{1}{N}\sum_{n=1}^N \ell(f_{\etheta^*} (x_n),y_n) -\epsilon\bigg\vert \nonumber\\
		\text{and }\mbE [ \ell(f_{\etheta^\star}(x),y) ] \leq & \epsilon + \zeta(N,\delta) .
	\end{align}
\end{proposition}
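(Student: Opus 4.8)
The plan is to leverage the machinery already developed for Proposition~\ref{P:statistical_empirical_bound} and to extract from it both a feasibility bound and a near-optimality bound on $\etheta^\star$. The starting point is the observation that $\etheta^\star \in \argmin_\theta \eL(\theta,\emu^\star,\elambda^\star)$ together with the constraint $\frac{1}{N}\sum_n \elambda^\star(x_n) = 1$ gives us, by the definition of the empirical Lagrangian,
\begin{equation*}
	\eD^\star = \eL(\etheta^\star,\emu^\star,\elambda^\star) = \emu^\star\Big(\tfrac1N\textstyle\sum_n \ell(f_{\etheta^\star}(x_n),y_n)-\epsilon\Big) + \tfrac1N\textstyle\sum_n \elambda^\star(x_n)\|\nabla_\ccalM f_{\etheta^\star}(x_n)\|^2 .
\end{equation*}
Since $\elambda^\star \ge 0$ sums to $1$ over the samples, the second term is a weighted average of squared gradient norms and is therefore bounded above by $\max_{z\in\ccalM}\|\nabla_\ccalM f_{\etheta^\star}(z)\|^2$; but I want the reverse direction, so instead I will bound $\max_z \|\nabla_\ccalM f_{\etheta^\star}(z)\|^2$ by relating it, via Assumption~\ref{A:phi_gradient} and the uniform-convergence bound $\hat\zeta$, to the empirical weighted average, and then rearrange the display above to replace that average by $\eD^\star - \emu^\star(\cdots)$. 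Combining this with $|\eD^\star - P^\star| = \ccalO(\nu) + \ccalO(\zeta) + \ccalO(\hat\zeta)$ from Proposition~\ref{P:statistical_empirical_bound} yields the first inequality of \eqref{E:solution_bound}, with the residual term $\emu^\star\big|\frac1N\sum_n\ell(f_{\etheta^\star}(x_n),y_n)-\epsilon\big|$ appearing exactly because we drop (rather than evaluate) the loss slack.

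For the second inequality, the argument is a standard complementary-slackness-plus-uniform-convergence step: since $\etheta^\star$ minimizes $\eL(\cdot,\emu^\star,\elambda^\star)$ and any primal-feasible $\theta$ gives a Lagrangian value that upper bounds $\eD^\star$ (weak duality at the empirical level), one obtains that the empirical loss at $\etheta^\star$ cannot substantially exceed $\epsilon$; more precisely one shows $\frac1N\sum_n \ell(f_{\etheta^\star}(x_n),y_n) \le \epsilon + \ccalO(\cdot)$ using that the optimal empirical multiplier $\emu^\star$ is bounded (the constant $C$ in the statement, whose finiteness is guaranteed by the strict feasibility Assumption~\ref{A:feasible} exactly as in the proof of Proposition~\ref{P:statistical_empirical_bound}). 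Then one applies the uniform-convergence bound from Assumption~\ref{A:rate}, $|\mbE[\ell(f_\theta(x),y)] - \frac1N\sum_n\ell(f_\theta(x_n),y_n)| \le \zeta(N,\delta)$ with probability $1-\delta$, to pass from the empirical loss to the population loss, obtaining $\mbE[\ell(f_{\etheta^\star}(x),y)] \le \epsilon + \zeta(N,\delta)$. The probability $1-5\delta$ is inherited by taking a union bound over the same five high-probability events used in Proposition~\ref{P:statistical_empirical_bound} (the loss and mean uniform-convergence events, plus the events needed to control the dual gap).

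The step I expect to be the main obstacle is controlling $\max_{z\in\ccalM}\|\nabla_\ccalM f_{\etheta^\star}(z)\|^2$ by a quantity that only involves sample points. A priori, the supremum over the whole manifold of the gradient norm of a single fixed function $f_{\etheta^\star}$ need not be close to a finite-sample average of gradient norms of that same function. The resolution should go through Assumptions~\ref{A:phi}--\ref{A:phi_gradient}: Assumption~\ref{A:phi_gradient} says the map $\theta\mapsto\nabla_\ccalM f_\theta(z)$ is Lipschitz in a function-value sense, which (together with the covering/approximation property of Assumption~\ref{A:phi} and the mean-convergence bound $\hat\zeta$ applied to the scalar functions $z\mapsto\|\nabla_\ccalM f_\theta(z)\|^2$, which are Lipschitz in $\theta$ and bounded on compact $\ccalM$) lets one transfer a bound on a weighted empirical average of $\|\nabla_\ccalM f_\theta\|^2$ to a bound on its supremum, up to $\ccalO(\nu) + \ccalO(\hat\zeta(N,\delta))$ error. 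Getting the bookkeeping right here — in particular making sure the weights $\elambda^\star$ do not blow up the transfer, which is why the bound is stated for $\max_z$ rather than a weighted max, and why only $\hat\zeta$ (the \emph{mean} rate, not the loss rate $\zeta$) and $\nu$ enter this first term — is the delicate part; everything else is a rearrangement of the identity for $\eD^\star$ above combined with Proposition~\ref{P:statistical_empirical_bound}.
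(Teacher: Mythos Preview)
Your plan has the right high-level structure --- evaluate $\eD^\star$ at $(\etheta^\star,\emu^\star,\elambda^\star)$, rearrange, and invoke Proposition~\ref{P:statistical_empirical_bound} --- but the key step you flag as ``the main obstacle'' does not go through as you describe it, and the mechanism you propose is the wrong one.

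\medskip
\textbf{The gradient bound.} You want to pass from the weighted empirical term $\tfrac{1}{N}\sum_n \elambda^\star(x_n)\|\nabla_\ccalM f_{\etheta^\star}(x_n)\|^2$ to $\max_{z\in\ccalM}\|\nabla_\ccalM f_{\etheta^\star}(z)\|^2$, and you propose doing this via Assumption~\ref{A:phi_gradient} and the rate $\hat\zeta$. This cannot work: $\hat\zeta$ controls the gap between an expectation and an empirical \emph{mean}, not between a supremum and a weighted average; and Assumption~\ref{A:phi_gradient} is Lipschitzness of $\nabla_\ccalM f_\theta(z)$ across \emph{parameters} $\theta$ (through function values), not across \emph{points} $z$, so it gives no control over how the gradient at an unseen $z$ relates to the gradient at a nearby sample $x_n$. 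The paper's argument is different in both pieces. First, it uses the \emph{optimality of $\elambda^\star$}: since $\elambda$ enters the Lagrangian linearly subject to $\tfrac{1}{N}\sum_n\elambda(x_n)=1$, a Danskin-type supergradient argument (as in Lemma~\ref{lemma:duality_gap_grad_lip}) shows that at the optimum the weighted sum equals $\max_{n\in[N]}\|\nabla_\ccalM f_{\etheta^\star}(x_n)\|^2$, i.e., $\elambda^\star$ concentrates on the sample(s) of maximal gradient norm. Second, to go from the maximum over the $N$ sample points to the supremum over all of $\ccalM$, the paper uses a \emph{covering/density} argument: compactness of $\ccalM$ and Lipschitzness of $z\mapsto\nabla_\ccalM f_{\etheta^\star}(z)$ give $\sup_{z}\|\nabla_\ccalM f_{\etheta^\star}(z)\| \le \max_n\|\nabla_\ccalM f_{\etheta^\star}(x_n)\| + G\,\tilde\zeta(N)$, where $\tilde\zeta(N)$ is the mesh size (largest distance from a point of $\ccalM$ to its nearest sample), which decreases with $N$. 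This covering term is what gets absorbed into the $\ccalO(\hat\zeta(N,\delta))$ in the statement, not a uniform-convergence-of-means bound.

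\medskip
\textbf{The loss bound.} Your ``weak duality $+$ bounded $\emu^\star$ $\Rightarrow$ empirical loss $\le \epsilon + \ccalO(\cdot)$'' sketch is too vague to be correct in the nonconvex parametric setting (there is no strong duality to invoke complementary slackness). The paper instead argues directly on the dual optimality condition: by Danskin's theorem the superdifferential of the dual at $\emu^\star$ is the convex hull of the empirical loss slacks $\tfrac{1}{N}\sum_n\ell(f_{\theta}(x_n),y_n)-\epsilon$ over $\theta\in\argmin_\theta\eL(\theta,\emu^\star,\elambda^\star)$; optimality of $\emu^\star$ forces $0$ to lie in this set, so \emph{some} minimizer $\etheta^\star$ has empirical loss $\le\epsilon$ exactly. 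Then a single application of Assumption~\ref{A:rate} gives $\mbE[\ell(f_{\etheta^\star}(x),y)]\le\epsilon+\zeta(N,\delta)$. Your final uniform-convergence step is right; it is the existence of an empirically feasible minimizer that needs the Danskin argument rather than the weak-duality heuristic you describe.
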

Proposition \ref{P:solution_bound} provides near optimality and near feasibility conditions for solutions $\hat \theta^*$ obtained through the empirical dual problem \eqref{prob:emp_dual}. The difference  between the maximum gradient of the obtained solution $\theta$ and the optimal value $P^*$ is bounded by the number of samples $N$ as well as  the empirical constraint violation. Notice that even though the optimal dual variable $\hat \mu$ is not known, the constraint violation can be evaluated in practice as it only requires to evaluate the obtained function over the $N$ given samples. 
\begin{remark}[Interpolators]
	In practice, the number of parameters in a parametric function (e.g., Neural Network) tends to exceed the dimension of the input, which allows functions to interpolate the data, i.e., to attain zero loss on the dataset. Proposition \ref{P:solution_bound} presents a connection to interpolating functions. By setting $\epsilon=0$, if the function achieves zero error over the empirical distribution i.e. $\ell(f_{\hat\theta^*}(x_n),y_n)=0,\ \forall \ n \in [N]$, then the dependency on $\mu^*$ disappears. This implies that within the classifiers that interpolate the data, the one with the minimum Lipschitz constant over the available samples, will probably be the one with the minimum Lipschitz constant over the whole manifold.
\end{remark}

\subsection{From Manifold Gradient to Discrete Laplacian}\label{sec_laplacian}

We derive an alternative way of computing the integral of the norm of the gradient utilizing samples. To do so, we define the normalized point-cloud Laplacian according a probability distribution $\lambda$.

\begin{definition}[Point-cloud Laplacian]\label{def:point_cloud_laplacian}
	Consider a set of points~$x_1,\dots,x_N \in \ccalM$, sampled according to probability $\lambda:\ccalM\to\reals$. The normalized graph Laplacian of~$f_\theta$ at~$z \in \ccalM$ is defined as
	\begin{equation}\label{E:graph_laplacian}
		\bbL_{\lambda,N}^t f_\theta(z) = \frac{1}{N} \sum_{n=1}^N W(z,x_n) \big( f_\theta(z) - f_\theta(x_n) \big)
		\text{,}
	\end{equation}
	\begin{align*}
		\text{for }W(z,x_n) &= \frac{1}{t} \frac{G_t(z,x_n)}{ \sqrt{\hat{w}(z) \hat{W}(x_n)} }
		\text{,} \quad \\
  &\text{with} \quad
		G_t(z,x_n) = \frac{1}{(4 \pi t)^{d/2}} \, e^{-\frac{\| z - x_n \|^2}{4t}}
		\text{,}\nonumber
		\\
		\hat{w}(z) &= \frac{1}{N} \sum_{n = 1}^N G_t(z,x_n)
		\text{,} \quad\\
  &\text{and} \quad
		\hat{W}(x_n) = \frac{1}{N-1}\sum_{m \neq n} G_t(x_m,x_n)
		\text{.} \nonumber
	\end{align*}
\end{definition}

As long as the function considered is smooth enough, the following convergence result holds:


\begin{proposition}[point-cloud Estimate]\label{P:pointcloud}
	Let $\Lambda$ be the set of probability distributions defined a compact $d$-dimensional differentiable manifold $\ccalM$ isometrically embedded in $\reals^D$ such that $\Lambda=\{\lambda: 0<a\leq\lambda(z)\leq b<\infty, | \frac{\partial \lambda}{\partial x}|\leq c < \infty \text{ and } |\frac{\partial^2 \lambda}{\partial x^2}|\leq d <\infty \ \forall\ z \in \ccalM \}$, and let $f_\theta$, with $\theta \in \Theta$, be a family of functions with uniformly bounded derivatives up to order $3$ vanishing at the boundary $\partial\ccalM$.  For any $\epsilon>0,\delta>0$, for all $N>N_0$,
	\begin{align}\label{P:pointcloud_equation}
 \begin{split}
		&P\bigg[\sup_{\lambda\in\Lambda,\theta \in \Theta}\bigg|\int \|\nabla_\ccalM f_\theta(z)\|^2 \lambda(z)dV(z) \\
  &\quad-\frac{1}{N}\sum_{i=1}^N f_\theta(z_i)\bbL^t_{\lambda, N} f_\theta(z_i)\lambda(z_i)\bigg|>\epsilon \bigg]\leq \delta 
  \end{split}
	\end{align}
	where the point-cloud laplacian $\bbL^t_{\lambda, N} f_\theta $ is as defined in \ref{def:point_cloud_laplacian}, with $t=N^{-\frac{1}{d+2+\alpha}}$ for any $\alpha>0$. 
\end{proposition}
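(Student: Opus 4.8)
The plan is to establish the uniform convergence in \eqref{P:pointcloud_equation} by decomposing the error into two pieces: a \emph{bias} term, controlled by pointwise convergence of the weighted point-cloud Laplacian operator to the weighted Laplace--Beltrami operator, and a \emph{concentration} term, controlled by a uniform (in $\lambda$ and $\theta$) deviation bound. The starting identity is Green's first identity on the closed manifold $\ccalM$ (using the vanishing-at-$\partial\ccalM$ hypothesis to kill the boundary term), which gives $\int_\ccalM \|\nabla_\ccalM f_\theta(z)\|^2 \lambda(z)\, dV(z) = \int_\ccalM f_\theta(z)\, \Delta_{\lambda} f_\theta(z)\, dV(z)$, where $\Delta_\lambda$ is the weighted Laplacian with weight $\lambda$. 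So it suffices to show that $\frac{1}{N}\sum_{i=1}^N f_\theta(z_i)\, \bbL^t_{\lambda,N} f_\theta(z_i)\,\lambda(z_i)$ concentrates uniformly around $\int_\ccalM f_\theta\, \Delta_\lambda f_\theta\, \lambda^{-1}\,\lambda\, dV$ — here I would double-check the exact power of $\lambda$ produced by the normalization in Definition~\ref{def:point_cloud_laplacian}, since the $\hat w, \hat W$ normalizers are precisely what convert the raw Gaussian-weighted Laplacian into the $\lambda$-reweighted operator; this bookkeeping is where the specific form of $W(z,x_n)$ matters.

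First I would invoke the classical point-cloud Laplacian consistency results (in the style of Belkin--Niyogi, Hein--Audibert--von Luxburg, Coifman--Lafon, and Gin\'e--Koltchinskii), which state that for a \emph{fixed} smooth $f$ and \emph{fixed} density $\lambda \in \Lambda$, with bandwidth $t = t_N \to 0$ and $N t_N^{d/2+1} \to \infty$, the operator $\bbL^t_{\lambda,N}$ applied to $f$ converges to the weighted Laplacian, with a bias of order $O(t)$ (from the Gaussian kernel's Taylor expansion on the manifold) and a stochastic fluctuation of order $O\big((N t^{d/2+1})^{-1/2}\big)$ up to logarithmic factors. The choice $t = N^{-1/(d+2+\alpha)}$ is exactly tuned so that both contributions vanish: $t \to 0$ gives bias $\to 0$, and $N t^{d/2+1} = N^{1 - (d/2+1)/(d+2+\alpha)} \to \infty$ since $(d/2+1)/(d+2+\alpha) < 1$, so the variance term also vanishes. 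Substituting this into the empirical average over $z_1,\dots,z_N$ and combining with a standard Monte Carlo bound for $\frac1N\sum_i g_\theta(z_i) \to \int g_\theta\, \lambda\, dV$ (applied to $g_\theta = f_\theta \Delta_\lambda f_\theta \cdot(\text{weight})$) closes the pointwise statement.

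The main obstacle is upgrading pointwise convergence to the \emph{uniform} statement over $\lambda \in \Lambda$ \emph{and} $\theta \in \Theta$ simultaneously. For this I would use a covering/chaining argument: the uniform $C^3$-boundedness of $\{f_\theta\}$ together with compactness of $\Theta$ and Assumption-style Lipschitz control of $\theta \mapsto f_\theta$ and $\theta\mapsto \nabla_\ccalM f_\theta$ lets me build a finite $\eta$-net over $\Theta$; similarly, the uniform bounds $a \le \lambda \le b$, $|\partial\lambda| \le c$, $|\partial^2\lambda|\le d$ make $\Lambda$ a compact (in $C^1$ or $C^2$) family admitting a finite $\eta$-net. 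I would show the map $(\lambda,\theta) \mapsto \big[\text{both sides of the inequality}\big]$ is Lipschitz in $(\lambda,\theta)$ with a constant that is polynomial in the bandwidth $t^{-1}$ (the kernel derivative blows up like $t^{-1}$, so each net radius must shrink like a power of $t$, hence like a power of $N$), so the net has size polynomial in $N$; then a union bound over the net, with the per-element failure probability obtained from a Bernstein/McDiarmid inequality (using $B$-boundedness of the relevant summands on compact $\ccalM$), gives total failure probability $\le \delta$ for $N$ large. Tracking how the net cardinality (a power of $N$) interacts with the exponentially small per-element tail — and checking the $\alpha>0$ slack in the exponent is exactly what buys the room to absorb the $\log N$ from the union bound — is the delicate part; the rest is the routine kernel Taylor expansion and concentration bookkeeping.
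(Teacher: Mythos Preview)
Your proposal follows the same skeleton as the paper: Green's identity to pass from $\int\|\nabla_\ccalM f_\theta\|^2\lambda\,dV$ to a weighted-Laplacian quadratic form, then a bias/variance split where the bias is controlled by Taylor-expanding the heat kernel on the manifold (the paper's Lemma~\ref{lemma:convergence_euclidean_to_laplacian}) and the stochastic term by Hoeffding-type concentration (the paper's Lemma~\ref{lemma:bounded_diffence_between_laplacians}). Two small calibrations to check against the paper: the bias in Lemma~\ref{lemma:convergence_euclidean_to_laplacian} comes out as $O(t^{1/2})$ rather than $O(t)$ (the third-order Taylor remainder integrated against the Gaussian gives a $t^{1/2}$), and the concentration constant is $B_t\sim t^{-1-d/2}$, so the paper requires $N/B_t^2\sim Nt^{d+2}\to\infty$ rather than your $Nt^{d/2+1}\to\infty$; both vanish under $t=N^{-1/(d+2+\alpha)}$, but the paper's is the one that actually arises once you bound the full summand and not just the operator at a point.

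The one substantive divergence is how uniformity in $(\lambda,\theta)$ is obtained. The paper does \emph{not} run a covering or chaining argument: it observes that every constant entering the Hoeffding and Taylor bounds (the $a,b,c,d$ defining $\Lambda$, the uniform $C^3$ bound on $f_\theta$, the volume $\bbV$, the sup $\bbF$) is already independent of the particular $(\lambda,\theta)$, and then states the conclusion with the supremum inside the probability. This leaves the passage from ``for each fixed $(\lambda,\theta)$, with probability $\ge 1-\delta$'' to ``with probability $\ge 1-\delta$, simultaneously for all $(\lambda,\theta)$'' implicit. Your covering route makes that step explicit and is the more rigorous treatment; it costs you the extra bookkeeping of net cardinality (polynomial in $N$ through the $t^{-1}$-Lipschitz dependence of the kernel) against the exponential tail, which is precisely where the $\alpha>0$ slack gets spent. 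Either route reaches the same conclusion.
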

The proof of Proposition \ref{P:pointcloud} relies on two steps. First, we relate the integral of the norm of the gradient over the manifold with the integral of the continuous Laplace-Beltrami operator by virtue of Green's identity. Second, we approximate the value of the Laplace-Beltrami operator by the point-cloud Laplacian. Proposition \ref{P:pointcloud} connects the integral of the norm of the gradient of function $f_{\theta}$ with a point-cloud Laplacian operator. This result connects the dual problem \eqref{prob:emp_dual}, with the primal \eqref{prob:manifold_lipschitz} while allowing for a more amenable way of computing the integral. 
\begin{remark}
[Laplacian Regularization]
   The dual problem~\eqref{prob:manifold_dual} is closely related to the manifold regularization problem~\eqref{prob:manifold_regularization}. In particular, the two become equivalent by substituting $\rho=\mu^{-1}$ and utilizing a uniform distribution for $\lambda$. The key difference between the problems is given by the dual variable $\lambda$, which can be though as a probability distribution over the manifold that penalizes regions of the manifold where the norm of the gradient of $f_\theta$ is larger. The standard procedure in Laplacian regularization is to calculate the graph-Laplacian of the set of points $\bbL$ utilizing the heat kernel and compute the integral by $\bbf^T \bbL \bbf$, where $\bbf=[f_\theta(x_1),\dots,f_\theta(x_n)]^T$. In the case of Manifold Lipschitz, the same product can be computed, but utilizing the re-weighted point-cloud laplacian.
\end{remark}
\subsection{Dual Ascent Algorithm}\label{subsec:dual_ascent}
%


We outline an iterative and empirical primal-dual algorithm to solve the dual problem. Upon the initialization of $\theta_0,\lambda_0$~and~$\mu_0$, we set out to minimize the dual function using, 
\begin{align}\label{eqn:primal_update}
\theta_{k+1} = \theta_{k} - \eta_\theta\nabla_\theta L(\theta_k,\mu_k,\lambda_k),
\end{align}
where $\eta_\theta$ is a positive stepsize. Note that to compute~\eqref{eqn:primal_update}, we can either utilize the gradient version of the Lagrangian~\eqref{prob:emp_dual} or the point-cloud Laplacian~\eqref{P:pointcloud_equation}. Consequent to updating $\theta$, we update the dual variables by 
\begin{align}\label{eqn:dual_update}
    \mu_{k+1}&=\bigg[\mu_{k} + \eta_\mu \bigg(\frac{1}{N}\sum_{n=1}^N \ell(f_\theta(x_n),y_n)-\epsilon \bigg)\bigg]_+,\\
    \tilde\lambda_{k+1}(x_n)&=\lambda_{k}(x_n) + \eta_\lambda \| \nabla_\ccalM f_\theta(x_n)\|^2,n=1,\dots,N\label{eqn:dual_step_lambda_tilde}\\
    \lambda_{k+1}&=\argmin_{\lambda} \|\tilde \lambda_{k+1}-\lambda_{k+1}\|,\\
    &\quad \text{ subject to }\sum_{n=1}^N\lambda_{k+1}(x_n)=N, \nonumber
\end{align}
where $\eta_\mu,\eta_\lambda$ are again a positive stepsize. Note that we require a convex projection over $\tilde \lambda$ to satisfy the normalizing constraint. In step \eqref{eqn:dual_step_lambda_tilde}, we need to estimate the norm of the gradient at data point $x_n$, which we do using neighboring data points. Intuitively, the primal-dual procedure increases the value of $\lambda(x_n)$ at points in which the norm of the gradient is larger. The role of $\mu$ is to enforce the loss $\ell$ to be smaller than $\epsilon$, by adjusting the relative importance of the loss $\ell$ over the norm of the integral (see Appendix \ref{appendix:algorithms}).

\section{Experiments} \label{sec_numerical_results}
To demonstrate the effectiveness of our method, we conduct two real world experiments with physical systems. In Section \ref{ssec:ground_robot}, we tackle a regression problem in which we try to predict the error made when estimating the state of a ground robot making turns in both pavement and grass. In Section \ref{ssec:quadrotor} we learn the dynamics of a quadrotor when taking off and making circles in open air. In both experiments, data is acquired from real world robots and we compare our method against standard Empirical Risk Minimization (ERM), Ambient Regularization, and Manifold Regularization.



%
\subsection{Ground Robot Error Prediction}\label{ssec:ground_robot}
In this experiment, we seek to learn the error that a model would make in predicting the dynamics of a ground robot by looking at the states throughout a trajectory \cite{7759118}.

 The data acquisition of this experiment involves an \textit{iRobot} Packbot equipped with high resolution camera. The setting of the data acquisition is a robot making turns on both pavement and grass. The dynamics of the system are govern by the discrete-time nonlinear state-space system of equations
\begin{align}
x_{k+1} =     f(x_k,u_k)+g(u_k),
\end{align}
where $x_k$ is the state of the system,  $u_k$ is the control input, $f(x_k,u_k)$ is the model prediction, and $g(u_k)$ is the non-modelable error of the prediction. In this setting, the robot state involves the position and the control inputs are linear and angular velocities. The dynamics of the system are modeled by $f(x_k,u_k)$ and they involve the mass of the robot, the radius of the wheel, and other known parameters of the robot. 
In practice, the model $f(x_k,u_k)$ is not perfect and the model mismatch is given by the difference in friction with the ground, delays in communications with the sensors, and discrepancies in the robot specifications. To make matters worse, some of the discrepancies are difficult to model or intractable to compute in practice. 
 
We possess trajectories in the form of time series of the control signals, i.e., linear and angular velocity of the robot. For each trajectory, the average and variance of the mismatch between the real and the model-predicted states is quantified. Given a dataset of trajectories and errors made by the model, the objective is to predict the error that the model will make on a given trajectory. 

In order to construct the point-cloud Laplacian, we measured the euclidean distance between samples and computed \eqref{E:graph_laplacian}. In this case we added a threshold, i.e. minimum magnitude of $\bbL_{ij}$ below which the value of $\bbL_{ij}$ becomes $0$. This is to disregard the effect of samples that are far away, by only considering neighborhoods of each sample. By doing this we are constructing a manifold in which samples that are sufficiently close will be forced to have similar outputs. However, if samples are not sufficiently close, the similarity between the outputs of the function will not be forced to be small (to exemplify this point we have a toy example in Appendix \ref{Appendix:NavigationControls}).
The manifold in this case is constructed by time series that are sufficiently similar to each other. 
For further details of the experiment and sample trajectories can be found in Appendix \ref{Appendix:GroundRobot}. 
We show the results of the ground robot prediction experiment in Table \ref{table:ground_robot}.

\begin{table}[tbh]
     \centering
	\begin{tabular}{||c c c||} 
		\hline
		Method & Grass & Pavement  \\ [0.5ex] 
		\hline\hline
		ERM & $0.42$  & $0.0120$ \\ 
		\hline
		Ambient Regularization &  $0.31$  & $0.0065$ \\
		\hline
		Manifold Regularization & $0.38$  & $0.0045$ \\
		\hline
		Manifold Lipschitz (ours) & $0.25$  & $0.0032$ \\ 
		\hline
	\end{tabular}
	\caption{Error prediction accuracy for the Ground Robot Experiment.\label{table:ground_robot}}
 \end{table}

As seen on Table \ref{table:ground_robot}, regularization improves the accuracy over the standard ERM framework. 
This is related to the fact that given that the underlying predictive model is the same in all trajectories, similar trajectories will have similar errors.
However, our method improves upon both regularization techniques. 
This is related to the idea that between experiments only the velocities change (the environment is fixed), so we can intuitively imagine a continuous transition in the error made by the model. This explains why our method (Manifold Lipschitz) by forcing the function to be smooth over the trajectory space is significantly better in both experiments. Our method also improves upon Ambient regularization because the euclidean distance is able to approximate the distance on the manifold \textit{locally}, but it is not able to approximate it \textit{globally}. For a better clarification on this last point see Appendix \ref{Appendix:NavigationControls}.
In all, in this experiment 
we show that our method improves the generalization capabilities of a function.


\subsection{Quadrotor Model Mismatch}\label{ssec:quadrotor}
In this section introduce
a state prediction problem based real world dynamics. The setting consists of a quadrotor taking off and flying in circles for $12$ seconds.
\begin{figure}[H]
    \centering
    \includegraphics[scale=0.5]{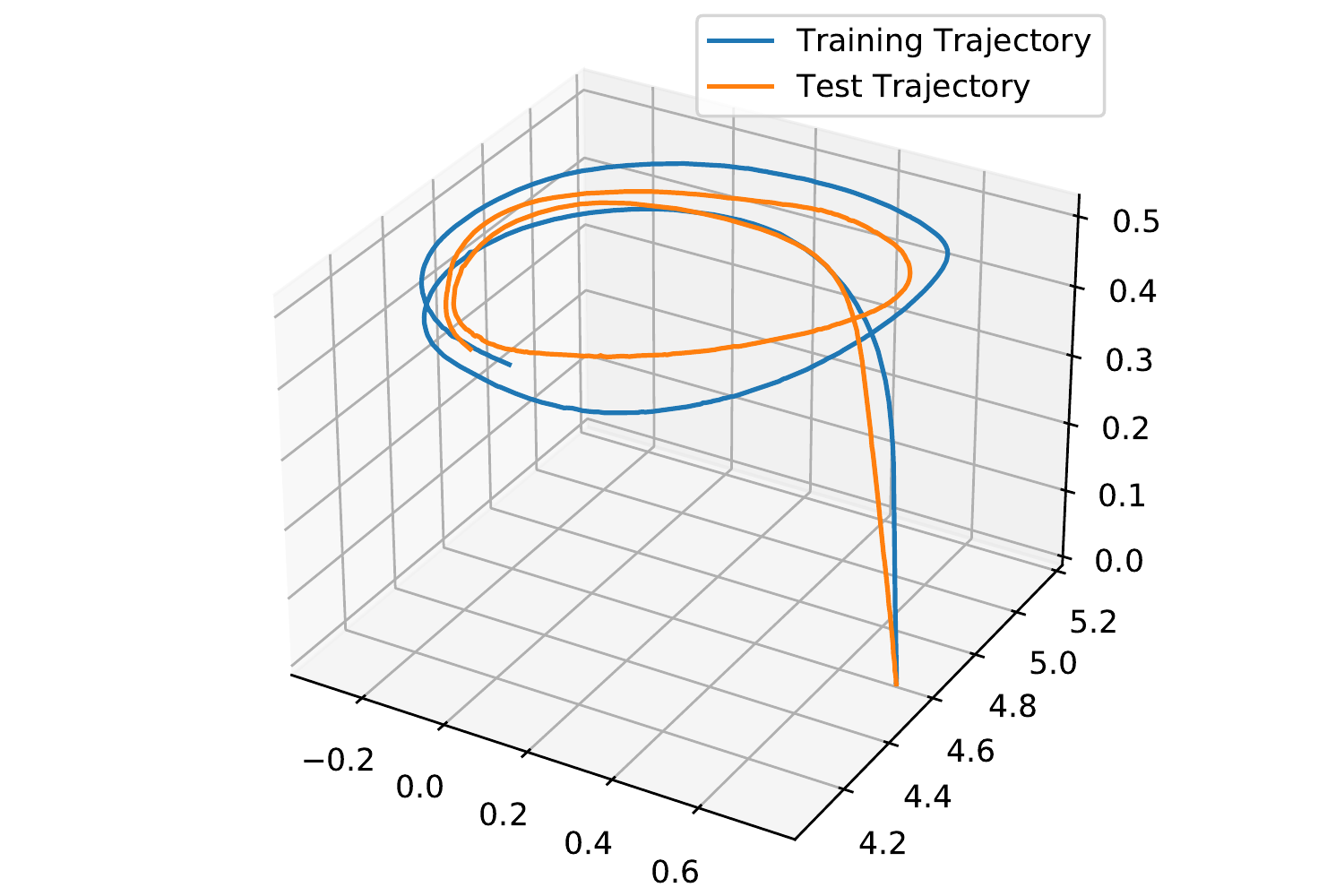}
    \caption{Quadrotor Sample Trajectories}
    \label{fig:quadrotor}
\end{figure}
The experimental setup is to target a speed of $0.4$m/s and to track a circular trajectory of radius $0.5$m. We consider $2$ trajectories of $12$ seconds each, and the starting position of the quadrotor is the same for both trajectories (the ground). For each time stamp $t$, we have measurements of position, velocity and acceleration in $\reals^3$, i.e. $[x_t,y_t,z_t,\dot x_t, \dot y_t, \dot z_t, \ddot x_t, \ddot y_t, \ddot z_t]$.

For the learning procedure we utilize the $6000$ samples, and we seek to minimize the mean square error loss between the next state and the prediction given the current state. We train a two layer neural network with different methods as seen in Table \ref{table:quadrotor}. To test the accuracy of the learned function, we compute the difference between the predicted state and the next state on the test trajectory. 

Analogous to experiment \ref{ssec:ground_robot}, to construct the point-cloud Laplacian, we compute the euclidean distances in a neighborhood of each sample. Intuitively, by looking at the trajectories \ref{fig:quadrotor}, we can expect next states to be similiar only on samples that are sufficiently close. The euclidean distance is able to approximate the manifold \textit{locally} but fails when it is too large. By looking at Figure \ref{fig:quadrotor} we can see that even though the training and testing trajectories are not the same, there is a close resemblance between the two of them. This allows a manifold model to be effective in predicting the next state. 
\begin{table}
\centering
	\begin{tabular}{||c c||} 
		\hline
		Method &  Error on Test Trajectory  \\ [0.5ex] 
		\hline\hline
        ERM & $0.00666$ \\ 
		\hline
		Ambient Regularization & $0.00734$ \\
		\hline
		Manifold Regularization & $0.00625$ \\
		\hline
		Manifold Lipschitz (ours) & $0.00237$ \\ 
		\hline
	\end{tabular}
	\caption{State prediction error of a quadrotor flying in circles on an unseen trajectory.\label{table:quadrotor}}
\end{table}
However, by looking at the results in Table \ref{table:quadrotor}, we see that adding regularization does not always help, as ambient regularization does not improve upon ERM. This is related to the fact that the simple euclidean distance is not necessarily a good measure of the how related two states are. 
Instead, one conclusion of the results shown in Table~\ref{table:quadrotor} is that adding regularization on the manifold space always improves upon ERM. In particular, our method give performance almost three times better than standard ERM and more than twice that of standard Laplacian Regularization. This is a consequence of adding extra information about the problem by grouping similar states together, which reduces the impact of the noise in a particular sample. 
To conclude, we show that our method obtains an improvement over all the techniques considered and that by predicting the next state of a quadrotor from the current state utilizing smooth functions improves generalization for noisy measurements.


%

\section{Conclusion}
In this work, we presented a constraint learning method to obtain smooth functions over manifold data. We showed that under mild conditions, the problem of finding smooth functions over a manifold can be reformulated as a weighted point-cloud Laplacian penalty over varying probability distributions whose dynamics are govern by the constraint violations. Two experiments on real world data validate the empirical advantages of obtaining functions that vary smoothly over the data.



\newpage
\bibliography{bib}

\begin{thebibliography}{67}
\providecommand{\natexlab}[1]{#1}
\providecommand{\url}[1]{\texttt{#1}}
\expandafter\ifx\csname urlstyle\endcsname\relax
  \providecommand{\doi}[1]{doi: #1}\else
  \providecommand{\doi}{doi: \begingroup \urlstyle{rm}\Url}\fi

\bibitem[Arghal et~al.(2021)Arghal, Lei, and Bidokhti]{arghal2021robust}
Arghal, R., Lei, E., and Bidokhti, S.~S.
\newblock Robust graph neural networks via probabilistic lipschitz constraints.
\newblock \emph{arXiv preprint arXiv:2112.07575}, 2021.

\bibitem[Bai et~al.(2021)Bai, Bedi, Agarwal, Koppel, and
  Aggarwal]{bai2021achieving}
Bai, Q., Bedi, A.~S., Agarwal, M., Koppel, A., and Aggarwal, V.
\newblock Achieving zero constraint violation for constrained reinforcement
  learning via primal-dual approach.
\newblock \emph{arXiv preprint arXiv:2109.06332}, 2021.

\bibitem[Belkin \& Niyogi(2004)Belkin and Niyogi]{belkin2004semi}
Belkin, M. and Niyogi, P.
\newblock Semi-supervised learning on riemannian manifolds.
\newblock \emph{Machine learning}, 56\penalty0 (1):\penalty0 209--239, 2004.

\bibitem[Belkin \& Niyogi(2005)Belkin and Niyogi]{belkin2005towards}
Belkin, M. and Niyogi, P.
\newblock Towards a theoretical foundation for laplacian-based manifold
  methods.
\newblock In \emph{International Conference on Computational Learning Theory},
  pp.\  486--500. Springer, 2005.

\bibitem[Belkin et~al.(2005)Belkin, Niyogi, and Sindhwani]{belkin2005manifold}
Belkin, M., Niyogi, P., and Sindhwani, V.
\newblock On manifold regularization.
\newblock In \emph{International Workshop on Artificial Intelligence and
  Statistics}, pp.\  17--24. PMLR, 2005.

\bibitem[Boyd et~al.(2004)Boyd, Boyd, and Vandenberghe]{boyd2004convex}
Boyd, S., Boyd, S.~P., and Vandenberghe, L.
\newblock \emph{Convex optimization}.
\newblock Cambridge university press, 2004.

\bibitem[Bubeck \& Sellke(2021)Bubeck and Sellke]{bubeck2021a}
Bubeck, S. and Sellke, M.
\newblock A universal law of robustness via isoperimetry.
\newblock In Beygelzimer, A., Dauphin, Y., Liang, P., and Vaughan, J.~W.
  (eds.), \emph{Advances in Neural Information Processing Systems}, 2021.
\newblock URL \url{https://openreview.net/forum?id=z71OSKqTFh7}.

\bibitem[Bubeck et~al.(2021)Bubeck, Li, and Nagaraj]{bubeck2021law}
Bubeck, S., Li, Y., and Nagaraj, D.~M.
\newblock A law of robustness for two-layers neural networks.
\newblock In \emph{Conference on Learning Theory}, pp.\  804--820. PMLR, 2021.

\bibitem[Bungert et~al.(2021)Bungert, Raab, Roith, Schwinn, and
  Tenbrinck]{bungert2021clip}
Bungert, L., Raab, R., Roith, T., Schwinn, L., and Tenbrinck, D.
\newblock Clip: Cheap lipschitz training of neural networks.
\newblock In \emph{International Conference on Scale Space and Variational
  Methods in Computer Vision}, pp.\  307--319. Springer, 2021.

\bibitem[Cabannes et~al.(2021)Cabannes, Pillaud-Vivien, Bach, and
  Rudi]{cabannes2021overcoming}
Cabannes, V., Pillaud-Vivien, L., Bach, F., and Rudi, A.
\newblock Overcoming the curse of dimensionality with laplacian regularization
  in semi-supervised learning.
\newblock \emph{Advances in Neural Information Processing Systems}, 34, 2021.

\bibitem[Castellano et~al.(2021)Castellano, Min, Bazerque, and
  Mallada]{castellano2021reinforcement}
Castellano, A., Min, H., Bazerque, J., and Mallada, E.
\newblock Reinforcement learning with almost sure constraints.
\newblock \emph{arXiv preprint arXiv:2112.05198}, 2021.

\bibitem[Cervino et~al.(2022)Cervino, Ruiz, and Ribeiro]{cervino2022training}
Cervino, J., Ruiz, L., and Ribeiro, A.
\newblock Training stable graph neural networks through constrained learning.
\newblock In \emph{ICASSP 2022-2022 IEEE International Conference on Acoustics,
  Speech and Signal Processing (ICASSP)}, pp.\  4223--4227. IEEE, 2022.

\bibitem[Chamon \& Ribeiro(2020)Chamon and Ribeiro]{chamon2020probably}
Chamon, L. and Ribeiro, A.
\newblock Probably approximately correct constrained learning.
\newblock \emph{Advances in Neural Information Processing Systems},
  33:\penalty0 16722--16735, 2020.

\bibitem[Chamon et~al.(2022)Chamon, Paternain, Calvo-Fullana, and
  Ribeiro]{chamon2022constrained}
Chamon, L.~F., Paternain, S., Calvo-Fullana, M., and Ribeiro, A.
\newblock Constrained learning with non-convex losses.
\newblock \emph{IEEE Transactions on Information Theory}, 2022.

\bibitem[Chee et~al.(2022)Chee, Jiahao, and Hsieh]{chee2022knode}
Chee, K.~Y., Jiahao, T.~Z., and Hsieh, M.~A.
\newblock Knode-mpc: A knowledge-based data-driven predictive control framework
  for aerial robots.
\newblock \emph{IEEE Robotics and Automation Letters}, 7\penalty0 (2):\penalty0
  2819--2826, 2022.

\bibitem[Couellan(2021)]{couellan2021coupling}
Couellan, N.
\newblock The coupling effect of lipschitz regularization in neural networks.
\newblock \emph{SN Computer Science}, 2\penalty0 (2):\penalty0 1--9, 2021.

\bibitem[Do~Carmo \& Flaherty~Francis(1992)Do~Carmo and
  Flaherty~Francis]{do1992riemannian}
Do~Carmo, M.~P. and Flaherty~Francis, J.
\newblock \emph{Riemannian geometry}, volume~6.
\newblock Springer, 1992.

\bibitem[Dunson et~al.(2021)Dunson, Wu, and Wu]{dunson2021spectral}
Dunson, D.~B., Wu, H.-T., and Wu, N.
\newblock Spectral convergence of graph laplacian and heat kernel
  reconstruction in $l_\infty$ from random samples.
\newblock \emph{Applied and Computational Harmonic Analysis}, 55:\penalty0
  282--336, 2021.

\bibitem[Eisen et~al.(2019)Eisen, Zhang, Chamon, Lee, and
  Ribeiro]{eisen2019learning}
Eisen, M., Zhang, C., Chamon, L.~F., Lee, D.~D., and Ribeiro, A.
\newblock Learning optimal resource allocations in wireless systems.
\newblock \emph{IEEE Transactions on Signal Processing}, 67\penalty0
  (10):\penalty0 2775--2790, 2019.

\bibitem[Elenter et~al.(2022)Elenter, NaderiAlizadeh, and
  Ribeiro]{elenter2022lagrangian}
Elenter, J., NaderiAlizadeh, N., and Ribeiro, A.
\newblock A lagrangian duality approach to active learning.
\newblock \emph{arXiv preprint arXiv:2202.04108}, 2022.

\bibitem[Fazlyab et~al.(2019)Fazlyab, Robey, Hassani, Morari, and
  Pappas]{fazlyab2019efficient}
Fazlyab, M., Robey, A., Hassani, H., Morari, M., and Pappas, G.
\newblock Efficient and accurate estimation of lipschitz constants for deep
  neural networks.
\newblock \emph{Advances in Neural Information Processing Systems}, 32, 2019.

\bibitem[Finlay et~al.(2018{\natexlab{a}})Finlay, Calder, Abbasi, and
  Oberman]{finlay2018lipschitz}
Finlay, C., Calder, J., Abbasi, B., and Oberman, A.
\newblock Lipschitz regularized deep neural networks generalize and are
  adversarially robust.
\newblock \emph{arXiv preprint arXiv:1808.09540}, 2018{\natexlab{a}}.

\bibitem[Finlay et~al.(2018{\natexlab{b}})Finlay, Oberman, and
  Abbasi]{finlay2018improved}
Finlay, C., Oberman, A.~M., and Abbasi, B.
\newblock Improved robustness to adversarial examples using lipschitz
  regularization of the loss.
\newblock \emph{CoRR}, abs/1810.00953, 2018{\natexlab{b}}.
\newblock URL \url{http://arxiv.org/abs/1810.00953}.

\bibitem[Hasanbeig et~al.(2018)Hasanbeig, Abate, and
  Kroening]{hasanbeig2018logically}
Hasanbeig, M., Abate, A., and Kroening, D.
\newblock Logically-constrained reinforcement learning.
\newblock \emph{arXiv preprint arXiv:1801.08099}, 2018.

\bibitem[Hein et~al.(2005)Hein, Audibert, and Luxburg]{hein2005graphs}
Hein, M., Audibert, J.-Y., and Luxburg, U.~v.
\newblock From graphs to manifolds--weak and strong pointwise consistency of
  graph laplacians.
\newblock In \emph{International Conference on Computational Learning Theory},
  pp.\  470--485. Springer, 2005.

\bibitem[Hein et~al.(2007)Hein, Audibert, and Luxburg]{hein2007graph}
Hein, M., Audibert, J.-Y., and Luxburg, U.~v.
\newblock Graph laplacians and their convergence on random neighborhood graphs.
\newblock \emph{Journal of Machine Learning Research}, 8\penalty0 (6), 2007.

\bibitem[Jiahao et~al.(2021)Jiahao, Hsieh, and Forgoston]{jiahao2021knowledge}
Jiahao, T.~Z., Hsieh, M.~A., and Forgoston, E.
\newblock Knowledge-based learning of nonlinear dynamics and chaos.
\newblock \emph{Chaos: An Interdisciplinary Journal of Nonlinear Science},
  31\penalty0 (11):\penalty0 111101, 2021.

\bibitem[Jiahao et~al.(2022)Jiahao, Chee, and Hsieh]{jiahao2022online}
Jiahao, T.~Z., Chee, K.~Y., and Hsieh, M.~A.
\newblock Online dynamics learning for predictive control with an application
  to aerial robots.
\newblock \emph{arXiv preprint arXiv:2207.09344}, 2022.

\bibitem[Jiang et~al.(2019)Jiang, Zhang, Lin, Tang, and Luo]{jiang2019semi}
Jiang, B., Zhang, Z., Lin, D., Tang, J., and Luo, B.
\newblock Semi-supervised learning with graph learning-convolutional networks.
\newblock In \emph{Proceedings of the IEEE/CVF conference on computer vision
  and pattern recognition}, pp.\  11313--11320, 2019.

\bibitem[Jin \& Rinard(2020)Jin and Rinard]{jin2020manifold}
Jin, C. and Rinard, M.
\newblock Manifold regularization for locally stable deep neural networks.
\newblock \emph{arXiv preprint arXiv:2003.04286}, 2020.

\bibitem[Kejani et~al.(2020)Kejani, Dornaika, and Talebi]{kejani2020graph}
Kejani, M.~T., Dornaika, F., and Talebi, H.
\newblock Graph convolution networks with manifold regularization for
  semi-supervised learning.
\newblock \emph{Neural Networks}, 127:\penalty0 160--167, 2020.

\bibitem[Khoury \& Hadfield-Menell(2018)Khoury and
  Hadfield-Menell]{khoury2018geometry}
Khoury, M. and Hadfield-Menell, D.
\newblock On the geometry of adversarial examples.
\newblock \emph{arXiv preprint arXiv:1811.00525}, 2018.

\bibitem[Khrulkov et~al.(2020)Khrulkov, Mirvakhabova, Ustinova, Oseledets, and
  Lempitsky]{Khrulkov_2020_CVPR}
Khrulkov, V., Mirvakhabova, L., Ustinova, E., Oseledets, I., and Lempitsky, V.
\newblock Hyperbolic image embeddings.
\newblock In \emph{Proceedings of the IEEE/CVF Conference on Computer Vision
  and Pattern Recognition (CVPR)}, June 2020.

\bibitem[Kipf \& Welling(2016)Kipf and Welling]{kipf2016semi}
Kipf, T.~N. and Welling, M.
\newblock Semi-supervised classification with graph convolutional networks.
\newblock \emph{arXiv preprint arXiv:1609.02907}, 2016.

\bibitem[Koppel et~al.(2016)Koppel, Fink, Warnell, Stump, and Ribeiro]{7759118}
Koppel, A., Fink, J., Warnell, G., Stump, E., and Ribeiro, A.
\newblock Online learning for characterizing unknown environments in ground
  robotic vehicle models.
\newblock In \emph{2016 IEEE/RSJ International Conference on Intelligent Robots
  and Systems (IROS)}, pp.\  626--633, 2016.
\newblock \doi{10.1109/IROS.2016.7759118}.

\bibitem[Krishnan et~al.(2020)Krishnan, Makdah, AlRahman, and
  Pasqualetti]{krishnan2020lipschitz}
Krishnan, V., Makdah, A., AlRahman, A., and Pasqualetti, F.
\newblock Lipschitz bounds and provably robust training by laplacian smoothing.
\newblock \emph{Advances in Neural Information Processing Systems},
  33:\penalty0 10924--10935, 2020.

\bibitem[Lassance et~al.(2021)Lassance, Gripon, and
  Ortega]{lassance2021laplacian}
Lassance, C., Gripon, V., and Ortega, A.
\newblock Laplacian networks: Bounding indicator function smoothness for neural
  networks robustness.
\newblock \emph{APSIPA Transactions on Signal and Information Processing}, 10,
  2021.

\bibitem[Lecouat et~al.(2018)Lecouat, Foo, Zenati, and
  Chandrasekhar]{lecouat2018semi}
Lecouat, B., Foo, C.-S., Zenati, H., and Chandrasekhar, V.~R.
\newblock Semi-supervised learning with gans: Revisiting manifold
  regularization.
\newblock \emph{arXiv preprint arXiv:1805.08957}, 2018.

\bibitem[Li et~al.(2022)Li, Chen, LeCun, and Sommer]{li2022neural}
Li, Z., Chen, Y., LeCun, Y., and Sommer, F.~T.
\newblock Neural manifold clustering and embedding.
\newblock \emph{arXiv preprint arXiv:2201.10000}, 2022.

\bibitem[Lindemann et~al.(2021)Lindemann, Hu, Robey, Zhang, Dimarogonas, Tu,
  and Matni]{lindemann2021learning}
Lindemann, L., Hu, H., Robey, A., Zhang, H., Dimarogonas, D., Tu, S., and
  Matni, N.
\newblock Learning hybrid control barrier functions from data.
\newblock In \emph{Conference on Robot Learning}, pp.\  1351--1370. PMLR, 2021.

\bibitem[Ma et~al.(2018)Ma, Li, Wang, Erfani, Wijewickrema, Schoenebeck, Song,
  Houle, and Bailey]{ma2018characterizing}
Ma, X., Li, B., Wang, Y., Erfani, S.~M., Wijewickrema, S., Schoenebeck, G.,
  Song, D., Houle, M.~E., and Bailey, J.
\newblock Characterizing adversarial subspaces using local intrinsic
  dimensionality.
\newblock \emph{arXiv preprint arXiv:1801.02613}, 2018.

\bibitem[Miyato et~al.(2018)Miyato, Kataoka, Koyama, and
  Yoshida]{miyato2018spectral}
Miyato, T., Kataoka, T., Koyama, M., and Yoshida, Y.
\newblock Spectral normalization for generative adversarial networks.
\newblock \emph{arXiv preprint arXiv:1802.05957}, 2018.

\bibitem[Mohri et~al.(2018)Mohri, Rostamizadeh, and
  Talwalkar]{mohri2018foundations}
Mohri, M., Rostamizadeh, A., and Talwalkar, A.
\newblock \emph{Foundations of machine learning}.
\newblock MIT press, 2018.

\bibitem[Moosavi-Dezfooli et~al.(2019)Moosavi-Dezfooli, Fawzi, Uesato, and
  Frossard]{moosavi2019robustness}
Moosavi-Dezfooli, S.-M., Fawzi, A., Uesato, J., and Frossard, P.
\newblock Robustness via curvature regularization, and vice versa.
\newblock In \emph{Proceedings of the IEEE/CVF Conference on Computer Vision
  and Pattern Recognition}, pp.\  9078--9086, 2019.

\bibitem[Niyogi(2013)]{niyogi2013manifold}
Niyogi, P.
\newblock Manifold regularization and semi-supervised learning: Some
  theoretical analyses.
\newblock \emph{Journal of Machine Learning Research}, 14\penalty0 (5), 2013.

\bibitem[Oberman \& Calder(2018)Oberman and Calder]{oberman2018lipschitz}
Oberman, A.~M. and Calder, J.
\newblock Lipschitz regularized deep neural networks converge and generalize.
\newblock \emph{arXiv preprint arXiv:1808.09540}, 2018.

\bibitem[Ouali et~al.(2020)Ouali, Hudelot, and Tami]{ouali2020overview}
Ouali, Y., Hudelot, C., and Tami, M.
\newblock An overview of deep semi-supervised learning.
\newblock \emph{arXiv preprint arXiv:2006.05278}, 2020.

\bibitem[Paternain et~al.(2019)Paternain, Chamon, Calvo-Fullana, and
  Ribeiro]{paternain2019constrained}
Paternain, S., Chamon, L., Calvo-Fullana, M., and Ribeiro, A.
\newblock Constrained reinforcement learning has zero duality gap.
\newblock \emph{Advances in Neural Information Processing Systems}, 32, 2019.

\bibitem[Paternain et~al.(2022)Paternain, Calvo-Fullana, Chamon, and
  Ribeiro]{paternain2022safe}
Paternain, S., Calvo-Fullana, M., Chamon, L.~F., and Ribeiro, A.
\newblock Safe policies for reinforcement learning via primal-dual methods.
\newblock \emph{IEEE Transactions on Automatic Control}, 2022.

\bibitem[Pauli et~al.(2021)Pauli, Koch, Berberich, Kohler, and
  Allg{\"o}wer]{pauli2021training}
Pauli, P., Koch, A., Berberich, J., Kohler, P., and Allg{\"o}wer, F.
\newblock Training robust neural networks using lipschitz bounds.
\newblock \emph{IEEE Control Systems Letters}, 6:\penalty0 121--126, 2021.

\bibitem[Robey et~al.(2021)Robey, Chamon, Pappas, Hassani, and
  Ribeiro]{robey2021adversarial}
Robey, A., Chamon, L., Pappas, G.~J., Hassani, H., and Ribeiro, A.
\newblock Adversarial robustness with semi-infinite constrained learning.
\newblock \emph{Advances in Neural Information Processing Systems},
  34:\penalty0 6198--6215, 2021.

\bibitem[Rosca et~al.(2020)Rosca, Weber, Gretton, and
  Mohamed]{pmlr-v137-rosca20a}
Rosca, M., Weber, T., Gretton, A., and Mohamed, S.
\newblock A case for new neural network smoothness constraints.
\newblock In Zosa~Forde, J., Ruiz, F., Pradier, M.~F., and Schein, A. (eds.),
  \emph{Proceedings on "I Can't Believe It's Not Better!" at NeurIPS
  Workshops}, volume 137 of \emph{Proceedings of Machine Learning Research},
  pp.\  21--32. PMLR, 12 Dec 2020.
\newblock URL \url{https://proceedings.mlr.press/v137/rosca20a.html}.

\bibitem[Rosenberg(1997)]{rosenberg_1997}
Rosenberg, S.
\newblock \emph{The Laplacian on a Riemannian Manifold: An Introduction to
  Analysis on Manifolds}.
\newblock London Mathematical Society Student Texts. Cambridge University
  Press, 1997.
\newblock \doi{10.1017/CBO9780511623783}.

\bibitem[Ruszczynski(2011)]{ruszczynski2011nonlinear}
Ruszczynski, A.
\newblock \emph{Nonlinear optimization}.
\newblock Princeton university press, 2011.

\bibitem[Shalev-Shwartz \& Ben-David(2014)Shalev-Shwartz and
  Ben-David]{shalev2014understanding}
Shalev-Shwartz, S. and Ben-David, S.
\newblock \emph{Understanding machine learning: From theory to algorithms}.
\newblock Cambridge university press, 2014.

\bibitem[Shen et~al.(2021)Shen, Cervino, Hassani, and
  Ribeiro]{shen2021agnostic}
Shen, Z., Cervino, J., Hassani, H., and Ribeiro, A.
\newblock An agnostic approach to federated learning with class imbalance.
\newblock In \emph{International Conference on Learning Representations}, 2021.

\bibitem[Shi et~al.(2019)Shi, Shi, O'Connell, Yu, Azizzadenesheli, Anandkumar,
  Yue, and Chung]{shi2019neural}
Shi, G., Shi, X., O'Connell, M., Yu, R., Azizzadenesheli, K., Anandkumar, A.,
  Yue, Y., and Chung, S.-J.
\newblock Neural lander: Stable drone landing control using learned dynamics.
\newblock In \emph{2019 International Conference on Robotics and Automation
  (ICRA)}, pp.\  9784--9790. IEEE, 2019.

\bibitem[Stutz et~al.(2019)Stutz, Hein, and Schiele]{stutz2019disentangling}
Stutz, D., Hein, M., and Schiele, B.
\newblock Disentangling adversarial robustness and generalization.
\newblock In \emph{Proceedings of the IEEE/CVF Conference on Computer Vision
  and Pattern Recognition}, pp.\  6976--6987, 2019.

\bibitem[Vapnik(1999)]{vapnik1999nature}
Vapnik, V.
\newblock \emph{The nature of statistical learning theory}.
\newblock Springer science \& business media, 1999.

\bibitem[Wang \& Carreira-Perpin{\'a}n(2013)Wang and
  Carreira-Perpin{\'a}n]{wang2013projection}
Wang, W. and Carreira-Perpin{\'a}n, M.~A.
\newblock Projection onto the probability simplex: An efficient algorithm with
  a simple proof, and an application.
\newblock \emph{arXiv preprint arXiv:1309.1541}, 2013.

\bibitem[Wu \& Wu(2018)Wu and Wu]{wu2018think}
Wu, H.-T. and Wu, N.
\newblock Think globally, fit locally under the manifold setup: Asymptotic
  analysis of locally linear embedding.
\newblock \emph{The Annals of Statistics}, 46\penalty0 (6B):\penalty0
  3805--3837, 2018.

\bibitem[Yang(2019)]{yang2019advancing}
Yang, T.
\newblock Advancing non-convex and constrained learning: Challenges and
  opportunities.
\newblock \emph{AI Matters}, 5\penalty0 (3):\penalty0 29--39, 2019.

\bibitem[Yang et~al.(2016)Yang, Cohen, and Salakhudinov]{yang2016revisiting}
Yang, Z., Cohen, W., and Salakhudinov, R.
\newblock Revisiting semi-supervised learning with graph embeddings.
\newblock In \emph{International conference on machine learning}, pp.\  40--48.
  PMLR, 2016.

\bibitem[Zhang et~al.(2018)Zhang, Isola, Efros, Shechtman, and
  Wang]{zhang2018unreasonable}
Zhang, R., Isola, P., Efros, A.~A., Shechtman, E., and Wang, O.
\newblock The unreasonable effectiveness of deep features as a perceptual
  metric.
\newblock In \emph{Proceedings of the IEEE conference on computer vision and
  pattern recognition}, pp.\  586--595, 2018.

\bibitem[Zhang et~al.(2021)Zhang, Huang, Zhu, and Liu]{ZHANG2021282}
Zhang, S., Huang, K., Zhu, J., and Liu, Y.
\newblock Manifold adversarial training for supervised and semi-supervised
  learning.
\newblock \emph{Neural Networks}, 140:\penalty0 282--293, 2021.
\newblock ISSN 0893-6080.
\newblock \doi{https://doi.org/10.1016/j.neunet.2021.03.031}.
\newblock URL
  \url{https://www.sciencedirect.com/science/article/pii/S0893608021001192}.

\bibitem[Zhao \& Liu(2020)Zhao and Liu]{zhao2020spectral}
Zhao, L. and Liu, Y.
\newblock Spectral normalization for domain adaptation.
\newblock \emph{Information}, 11\penalty0 (2):\penalty0 68, 2020.

\bibitem[Zhu(2005)]{zhu2005semi}
Zhu, X.~J.
\newblock Semi-supervised learning literature survey.
\newblock \emph{University of Wisconsin-Madison Department of Computer
  Sciences}, 2005.

\end{thebibliography}
\bibliographystyle{icml2023}


\newpage

\appendix
\onecolumn

\section{Dual Problem Formulation}\label{sec:dual_problem_formulation}

In Section \ref{sec_algorithm} we introduce the optimization program in \eqref{prob:manifold_dual} as the dual of \eqref{prob:manifold_lipschitz}. Strictly speaking \eqref{prob:manifold_dual} is equivalent to the actual dual problem of \eqref{prob:manifold_lipschitz}. This follows from a ready reformulation of the dual problem as we show in the following proposition.

\begin{proposition}
The optimization program in \eqref{prob:manifold_dual} is equivalent to the Lagrangian dual of \eqref{prob:manifold_lipschitz}.    
\end{proposition}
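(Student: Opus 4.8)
The plan is to form the Lagrangian of \eqref{prob:manifold_lipschitz} with respect to \emph{all} of its decision variables, including the epigraph variable $\rho$, and then eliminate $\rho$ by an explicit minimization. Introducing a multiplier $\mu \geq 0$ for the statistical loss constraint and a nonnegative multiplier density $\lambda$ for the continuum of pointwise constraints $\|\nabla_\ccalM f_\theta(z)\|^2 \leq \rho$ (equivalently, a nonnegative measure absolutely continuous with respect to $p(z)\,dV(z)$), the full Lagrangian is
\begin{align*}
\tilde L(\theta,\rho,\mu,\lambda) &= \rho + \mu\big(\mbE[\ell(f_\theta(x),y)]-\epsilon\big) \\
&\quad + \int_\ccalM \lambda(z)\big(\|\nabla_\ccalM f_\theta(z)\|^2 - \rho\big)p(z)\,dV(z).
\end{align*}
The first step is the algebraic observation that this splits as $\tilde L(\theta,\rho,\mu,\lambda) = L(\theta,\mu,\lambda) + \rho\big(1 - \int_\ccalM \lambda(z)p(z)\,dV(z)\big)$, where $L$ is exactly the Lagrangian already introduced in the text; the $\theta$-dependent part and the $\rho$-dependent part are completely decoupled.

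Second, I would compute the dual function $q(\mu,\lambda) := \min_{\theta\in\Theta,\,\rho\geq 0}\tilde L(\theta,\rho,\mu,\lambda)$ using this decoupling, writing it as $\min_\theta L(\theta,\mu,\lambda) + \min_{\rho\geq 0}\rho\big(1-\int_\ccalM \lambda p\,dV\big)$. The second term equals $0$ when $\int_\ccalM \lambda p\,dV \leq 1$ (attained at $\rho = 0$, or trivially when the coefficient vanishes) and equals $-\infty$ when $\int_\ccalM \lambda p\,dV > 1$. Hence $q(\mu,\lambda) = \min_\theta L(\theta,\mu,\lambda)$ on the set $\{\lambda \geq 0 : \int_\ccalM \lambda p\,dV \leq 1\}$ and $q \equiv -\infty$ off this set, so the Lagrangian dual $\max_{\mu \geq 0,\,\lambda \geq 0} q(\mu,\lambda)$ coincides with $\max_{\mu\geq 0}\ \max_{\lambda\geq 0:\ \int_\ccalM \lambda p\,dV\leq 1}\ \min_\theta L(\theta,\mu,\lambda)$.

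The last step is to upgrade the inequality $\int_\ccalM \lambda p\,dV \leq 1$ to the equality constraint appearing in \eqref{prob:manifold_dual}. Here I would use that for fixed $\mu$ the map $\lambda \mapsto L(\theta,\mu,\lambda)$ is monotone nondecreasing in $\lambda$ pointwise, since $\|\nabla_\ccalM f_\theta(z)\|^2 p(z) \geq 0$ on $\ccalM$; taking a minimum over $\theta \in \Theta$ preserves monotonicity, so $q(\mu,\cdot)$ is nondecreasing on the feasible set. Given any feasible $\lambda$ with $c := \int_\ccalM \lambda p\,dV \in (0,1]$, the rescaled density $\lambda/c$ satisfies $\lambda/c \geq \lambda$ pointwise and $\int_\ccalM (\lambda/c)p\,dV = 1$, hence $q(\mu,\lambda/c) \geq q(\mu,\lambda)$; the degenerate corner $\lambda \equiv 0$ is dominated by any density with unit mass. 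Therefore the supremum over $\{\int_\ccalM \lambda p\,dV \leq 1\}$ is achieved on $\{\int_\ccalM \lambda p\,dV = 1\}$, which is exactly the feasible set of \eqref{prob:manifold_dual}, completing the equivalence.

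The main obstacle I anticipate is bookkeeping rather than conceptual: being precise about the space in which $\lambda$ lives and about how the continuum of "$p(z)$-a.e." constraints is aggregated, and — more importantly — carrying the sign constraint $\rho \geq 0$ correctly through the minimization, since it is precisely the "$-\infty$ when the total mass exceeds one" branch that forces the normalization $\int_\ccalM \lambda p\,dV = 1$. The monotonicity-plus-rescaling argument in the final step is also slightly delicate: it relies on $p(z) \geq 0$ and $\|\nabla_\ccalM f_\theta(z)\|^2 \geq 0$ holding everywhere on $\ccalM$ and must dispose of the zero-multiplier case separately.
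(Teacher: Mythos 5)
Your proposal follows the same overall route as the paper's proof: form the full Lagrangian of \eqref{prob:manifold_lipschitz}, observe that it splits additively into a $\rho$-only piece $\rho\big(1-\int_\ccalM\lambda\,p\,dV\big)$ and the $\theta$-dependent Lagrangian $L(\theta,\mu,\lambda)$, and then eliminate $\rho$ by minimization to produce the normalization constraint on $\lambda$. The one genuine difference is in how $\rho$ is eliminated. The paper in effect minimizes $\rho\big(1-\int_\ccalM\lambda\,p\,dV\big)$ over $\rho\in\reals$, which yields $-\infty$ whenever the coefficient is nonzero (positive or negative) and hence forces the \emph{equality} $\int_\ccalM\lambda\,p\,dV=1$ in one step. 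You instead respect the primal domain constraint $\rho\geq 0$, so your inner minimization only rules out $\int_\ccalM\lambda\,p\,dV>1$ (the coefficient negative), leaving an \emph{inequality} constraint $\int_\ccalM\lambda\,p\,dV\leq 1$; you then need, and correctly supply, an extra monotonicity-plus-rescaling argument (pushing $\lambda$ to $\lambda/c$, handling $\lambda\equiv 0$ separately) to move the optimum onto the boundary $\int_\ccalM\lambda\,p\,dV=1$. Your version is the more careful one: since the primal \eqref{prob:manifold_lipschitz} is stated with $\rho\geq 0$, the Lagrangian dual's inner minimization should indeed be over $\rho\geq 0$, and your extra step is precisely the price of that fidelity. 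The paper's shortcut reaches the same endpoint but tacitly treats $\rho$ as unrestricted. Both are sound here because the conclusion coincides; yours just costs one more (easy) lemma and in exchange is airtight about the domain.
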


\begin{proof}
The result is true because the dual problem is linear in $\rho$. To see this, recall that $\mu$ is the dual variable associated with the statistical constraint $\mbE_{p(x,y)} [\ell \left(f_\theta(x),y\right)]\leq \epsilon$ and that $\lambda(z)$ is the Lagrange multiplier distribution associated with the set of pointwise constraints $\| \nabla_\ccalM f_\theta(z) \|^2 \leq \rho$. The Lagrangian $\tilde\ccalL(\rho,\theta,\mu,\lambda)$ of \eqref{prob:manifold_lipschitz} is therefore written as 
\begin{align}\label{eqn_dual_problem_derivation_pf_10}
\tilde\ccalL(\theta,\rho,\mu,\lambda) =& \rho + \mu \bigg(\mbE [\ell \left(f_\theta(x),y\right)]-\epsilon \bigg) + \int_\ccalM \lambda (z)\bigg(\|\nabla f_\theta(z)\|^2 -  \rho \bigg) p(z)dV(z)
\end{align}
Reorder terms in \eqref{eqn_dual_problem_derivation_pf_10} to group the two summands that involve $\rho$ to write
\begin{align}\label{eqn_dual_problem_derivation_pf_20}
\tilde\ccalL(\theta,\rho,\mu,\lambda) =& \rho \bigg(1 - \int \lambda(z)p(z)dV(z) dz  \bigg)+ \mu \bigg(\mbE_{p(x,y)} [\ell \left(f_\theta(x),y\right)]-\epsilon \bigg) \nonumber \\
&+ \int  \lambda (z)\|\nabla_\ccalM f_\theta(z)\|^2 p(z)dV(z) .
\end{align}
An important observation to make is that the Lagrangian decomposes in a term that involves only $\rho$ and a term that involves only $f_\theta$. Define then
\begin{alignat}{2}\label{eqn_dual_problem_derivation_pf_25}
   &\tilde\ccalL_1(\rho, \lambda) 
        &&~:= \rho \bigg(1 - \int \lambda(z)p(z)dV(z) dz  \bigg),  \nonumber \\
   &\tilde\ccalL_2(\theta, \mu,\lambda) 
        &&~:= \mu \bigg(\mbE_{p(x,y)} [\ell \left(f_\theta(x),y\right)]-\epsilon \bigg) 
               + \int  \lambda (z)\|\nabla_\ccalM f_\theta(z)\|^2 p(z)dV(z), 
\end{alignat}
so that we can write the Lagrangian in \eqref{eqn_dual_problem_derivation_pf_20} as $\tilde\ccalL(\theta,\rho,\mu,\lambda) = \tilde\ccalL_1(\rho, \lambda) + \tilde\ccalL_2(\theta, \mu,\lambda)$. 

The dual problem can now be written as the maximization over multipliers of the minimum of the Lagrangian over primal variables
\begin{align}\label{eqn_dual_problem_derivation_pf_30}
   \tilde D^* 
       & = \max_{\mu,\lambda} 
              \min_{\theta,\rho} ~ \tilde\ccalL(\rho,\theta,\mu,\lambda)  \nonumber \\
       & = \max_{\mu,\lambda} 
              \bigg[
                  \min_{\rho} ~ \tilde\ccalL_1(\rho,\mu,\lambda) +
                  \min_{\theta} ~ \tilde\ccalL_2(\theta,\mu,\lambda) 
              \bigg] .
\end{align}
where we utilized the decomposition of the Lagrangian to write the second equality. 

The important observation to make is that the minimization over $\rho$ of $\tilde\ccalL_1(\rho,\mu,\lambda)$ has an elementary solution. Indeed, as per its definition we have
\begin{align}\label{eqn_dual_problem_derivation_pf_40}
   \min_{\rho} ~ \tilde\ccalL_1(\rho,\mu,\lambda) 
   = \min_{\rho} ~  \rho \bigg(1 - \int \lambda(z)p(z)dV(z) dz  \bigg).
\end{align}
This minimization yields $-\infty$ when $ \int \lambda(z)p(z)dV(z) dz \neq 1$ and $0$ when $ \int \lambda(z)p(z)dV(z) dz = 1$. Since in the dual problem we are interested in the maximum over all dual variables, we know that: (i) The maximum will be attained for a dual distribution that satisfies $\int \lambda(z)p(z)dV(z) dz = 1$. (ii) When dual variables satisfy this property we know that $\min_{\rho} ~  \rho (1 - \int \lambda(z)p(z)dV(z) dz )=0$. It then follows that the dual problem in \eqref{eqn_dual_problem_derivation_pf_30} is equivalent to
\begin{alignat}{2} \label{eqn_dual_problem_derivation_pf_50}
   \tilde D^* 
         = & \max_{\mu,\lambda} 
                  && \min_{\theta} ~ \tilde\ccalL_2(\theta,\mu,\lambda), \nonumber \\
  	       & \text{subject to } &&\ \int_\ccalM \lambda(z) p(z) dV(z) = 1.                 
\end{alignat}
This is the problem in \eqref{prob:manifold_dual} given the definition of $\tilde\ccalL_2(\theta,\mu,\lambda)$ in \eqref{eqn_dual_problem_derivation_pf_25} which is the same as the definition of $L(\theta,\mu,\lambda)$ in  \eqref{prob:manifold_dual}. $\QED$

\end{proof}

Notice that the constraint $\int_\ccalM \lambda(z) p(z) dV(z) = 1$ implies that $\lambda(z)$ is a probability distribution over the manifold $\ccalM$. This is an important observation for the connections we establish to manifold regularization in Section \ref{sec_laplacian}. It also implies that even though the dual problem \ref{prob:manifold_dual} is not an unconstrained problem, the constraint is easy to enforce as an orthogonal projection on the space of probability distributions over the manifold $\ccalM$. This is a simple normalization. 


\section{Empirical Dual Problem with unlabeled samples} \label{sec_empricial_unlabeled_samples}

In Section \ref{sec_algorithm} we introduce the empirical dual program in \eqref{prob:emp_dual} in the case in which unlabeled samples are unavailable. The ability to leverage unlabeled samples is an important feature of this work and ready to incorporate in \eqref{prob:emp_dual}. Indeed, if in addition to $N$ i.i.d. labeled samples $(x_n,y_n)$ with $N\in[1,N]$ drawn from $p(x,y)$ we are also given $\tilde N$ i.i.d. unlabeled samples $x_n$ with $n\in[N+1, N+\tdN]$ drawn from the input distribution $p(x)$ we redefine \eqref{prob:emp_dual} as
\begin{align}\label{eqn_emp_dual_unlabeled_samples}
		\eD^\star \!= \max_{\emu,\elambda \geq 0} \min_{\theta}  &	\eL(\theta,\emu,\elambda) := \emu \bigg(\dfrac{1}{N} \sum_{n = 1}^N \ell \big( f_\theta(x_n),y_n \big) -\epsilon \bigg)\!
		    +\frac{1}{N\!+\!\tdN}\sum_{n=1}^{N+\tdN} \elambda(x_n) \|\nabla_\ccalM f_\theta(x_n)\|^2,\nonumber
			\\
			{} \text{subject to }& \frac{1}{N+\tdN}\sum_{n=1}^{N+\tdN} \elambda(x_n)  =1
			\text{,}
	\end{align}
Results in Section \ref{sec_algorithm} hold with proper modifications.


\section{Proof of Proposition \ref{P:statistical_empirical_bound}}

The proof in this appendix is a generalization of the proof in \cite{chamon2022constrained}. In order to show Proposition \ref{P:statistical_empirical_bound}, we need to introduce an auxiliary problem formulation over a functional domain of functions. In this case, we take the optimization problem \eqref{prob:manifold_lipschitz} over the convex hull of the domain of parametric functions $\phi\in\bar \ccalH$,
\begin{align}\label{prob:manifold_lipschitz_functional}
	\fP^* = \ \ \ \ \ \ \underset{\phi\in \bar\ccalH,\rho\geq 0}{\min}  \quad   
	& \rho, \\
	\text{subject to}  \quad 
	& \mbE_{p(x,y)} [\ell \left(\phi(x),y\right)]\leq \epsilon, \nonumber\\
	& \| \nabla_\ccalM \phi(z) \|^2 \leq \rho ,\quad p(z)\text{-a.e.}, \quad z \in \ccalM . \nonumber
\end{align}
We can now show that problem \ref{prob:manifold_lipschitz_functional} is strongly dual as follows, 
%
\begin{lemma}\label{lemma:zero_duality_gap_functional}
	Under the assumptions of Proposition \ref{P:statistical_empirical_bound}, the functional smooth learning problem \eqref{prob:manifold_lipschitz_functional} has zero duality gap, i.e. $\tilde P^*=\tilde D^*$.
\end{lemma}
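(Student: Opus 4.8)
The plan is to establish strong duality for the functional problem \eqref{prob:manifold_lipschitz_functional} by invoking a standard infinite-dimensional Lagrangian duality theorem (e.g., the one used in \cite{chamon2022constrained}), which requires three ingredients: (i) convexity of the perturbation function in the constraint levels, (ii) a Slater-type interior point condition, and (iii) enough regularity (lower semicontinuity / compactness) to ensure the dual optimum is attained. The key structural observation is that although the original problem \eqref{prob:manifold_lipschitz} is posed over the nonconvex parameter set $\Theta$, we have lifted it to the convex hull $\bar\ccalH = \overline{\textup{conv}}(\ccalH)$, over which the map $\phi \mapsto \big(\mbE_{p}[\ell(\phi(x),y)], \sup_z \|\nabla_\ccalM \phi(z)\|^2, \rho\big)$ has the needed convexity: $\ell$ is convex in its first argument by Assumption~\ref{A:loss}, the composition with the linear (in $\phi$) evaluation map preserves convexity, and $\phi \mapsto \|\nabla_\ccalM \phi(z)\|^2$ is convex for each fixed $z$ (norm-squared of a linear functional of $\phi$), so the pointwise-supremum constraint defines a convex feasible set.

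First I would define the perturbed problem $\tilde P(\xi_1, \xi_2)$ obtained by replacing the right-hand sides $\epsilon$ and $\rho$ with $\epsilon + \xi_1$ and by perturbing the Lipschitz budget, and argue that its optimal value is a convex function of $(\xi_1,\xi_2)$ using the convexity facts above together with the fact that $\bar\ccalH$ is convex and $\rho$ enters linearly in the objective and the constraint. Second, I would verify Slater's condition: Assumption~\ref{A:feasible} supplies $\tilde\theta \in \Theta \subseteq \bar\ccalH$ with $\mbE[\ell(f_{\tilde\theta}(x),y)] < \epsilon - M\nu < \epsilon$, giving strict feasibility of the loss constraint; for the Lipschitz constraint, strict feasibility is automatic since $\rho$ is a free optimization variable and we can take $\rho$ strictly larger than $\sup_z \|\nabla_\ccalM f_{\tilde\theta}(z)\|^2$ (finite because the hypothesis class has uniformly bounded gradients). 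Third, I would note that $\bar\ccalH$ is a convex subset of a space on which the relevant functionals are continuous (the loss is bounded and $M$-Lipschitz, gradients are uniformly bounded), so the standard duality theorem applies and yields $\tilde P^* = \tilde D^*$, with the dual written exactly as in \eqref{prob:manifold_dual} after eliminating $\rho$ via the normalization constraint $\int_\ccalM \lambda(z) p(z)\, dV(z) = 1$ (the reduction already carried out in Appendix~\ref{sec:dual_problem_formulation}).

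The main obstacle I anticipate is the treatment of the \emph{dense, pointwise} family of constraints $\|\nabla_\ccalM \phi(z)\|^2 \le \rho$ holding $p(z)$-almost everywhere: this is a semi-infinite (indeed a continuum of) constraints, so the dual variable $\lambda$ lives in a measure space rather than $\reals^k$, and one must be careful that the interchange of $\max_\lambda$ and $\min_\phi$ is justified and that no duality gap is introduced by the infinite-dimensionality. The cleanest route is to cite the functional constrained-learning duality framework of \cite{chamon2022constrained} (as the paper indicates it is "a generalization of the proof in" that work) and check that each of its hypotheses is met in the present manifold setting — in particular that the constraint map takes values in a space with nonempty interior relative to which Slater holds, and that the boundedness assumptions on $\ell$ and on the gradients over $\bar\ccalH$ provide the compactness needed for attainment of the dual solution. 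The remaining steps are then bookkeeping: confirming the perturbation-function convexity and transcribing the resulting dual into the form \eqref{prob:manifold_dual}.
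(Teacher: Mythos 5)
Your proposal is correct and takes essentially the same route as the paper: convexity of \eqref{prob:manifold_lipschitz_functional} over the convex hull $\bar\ccalH$ (linearity of the objective in $\rho$, convexity of $\phi\mapsto\|\nabla_\ccalM\phi(z)\|^2$ since the manifold gradient is linear in $\phi$, convexity of $\ell$ from Assumption~\ref{A:loss}), followed by Slater's condition supplied by the strictly feasible $\tilde\theta$ of Assumption~\ref{A:feasible} together with the free variable $\rho$ chosen strictly above the uniform gradient bound. The only stylistic difference is that you phrase the argument through the perturbation-function viewpoint and an explicit appeal to the constrained-learning duality framework, whereas the paper invokes Slater's condition directly for the semi-infinite convex problem; the content is the same, and your extra caution about the measure-valued dual $\lambda$ in the semi-infinite setting is appropriate but does not change the conclusion.
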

\myproof[Lemma \ref{lemma:zero_duality_gap_functional}]{
	To begin with, the non-parametric space $\bar\ccalH$ is convex as it is the convex hull of the space of parametric functions $\ccalH$, therefore the domain of the optimization problem is a convex set. Note that the objective is linear. Given that the gradient of $\phi$ is a linear function of $\phi$, and that by taking the norm we preserve convexity, the constraint $\|\nabla_\ccalM\phi(z)\|^2\leq \rho$ is convex. By assumption \ref{A:loss}, the loss $\ell$ is convex. Therefore, problem \eqref{prob:manifold_lipschitz} is a semi-infinite convex problem. Moreover, as $\tilde \theta$ belongs to the relative interior of the feasible set, and $\ccalH\subset\hat \ccalH$, it suffices to take $\phi^\dagger(\cdot)=f(\tilde \theta,\cdot)$, and $\rho^\dagger > \gradBound^2 $, and by Slater's condition as $\phi^\dagger,\rho^\dagger$ belongs to the interior of the feasible domain, problem \eqref{prob:manifold_lipschitz} has zero duality gap, i.e. it is strongly dual. 
}

Now we need to define the \textit{supergradient} of function $d(\mu,\lambda)$.
\begin{definition}[Supergradient and Superdifferential]\label{def:supergradient_superdifferential}
	We say that $c\in\reals$ is a supergradient of $d$ at $\mu$ if,
	\begin{align}
		d(\mu^{'})\leq d(\mu) + c(\mu^{'}-\mu) \forall \mu^{'} \in \reals. 
	\end{align}
	The  set of all supergradients of $d$ at $\mu$ is called the superdifferential, and we denote it $\partial d(\mu)$.
\end{definition}
\begin{lemma}[Danskin's Theorem]\label{lemma:danskins}
	Consider the function,
	\begin{equation}
		F(x) = \sup_{y\in Y}f(x,y).
	\end{equation}
	where $f:\reals^n\times Y\to \reals \cup \{-\infty,+\infty\}$, if the following conditions are satisfied
	\begin{enumerate}
		\item Function $f(\cdot,y)$ is convex for all $y\in Y$.
		\item Function $f(x,\cdot)$ is upper semicontinuous for all $x$ in a certain neighborhood of a point $x_0$.
		\item The set $Y\subset \reals ^m$ is compact
	\end{enumerate}
	Then, 
	\begin{align}
		\partial F(x_0)= \text{conv}\left( \underset{y\in \hat Y(x_0)}{\cup} \partial_x f(x_0,y) \right) 
	\end{align}
	where $\partial_x f(x_0,y)$ denotes the subdifferential of the function $f(\cdot,y)$ at $x_0$
\end{lemma}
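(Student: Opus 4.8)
\emph{Proof proposal.} The plan is to prove the claimed identity as an equality of two nonempty compact convex subsets of $\reals^n$ by showing that they have the same support function. Write $\hat Y(x_0)=\{\,y\in Y: f(x_0,y)=F(x_0)\,\}$ for the set of maximizers; it is nonempty (upper semicontinuity of $f(x_0,\cdot)$ on the compact $Y$) and compact, being a superlevel set of an upper semicontinuous function on $Y$. I would assume $F$ is finite on a neighborhood of $x_0$ (the only case in which the subdifferentials in the statement are meaningful). First I would record the standard structural facts: $F$ is convex as a pointwise supremum of convex functions, hence near $x_0$ it is finite and locally Lipschitz, $\partial F(x_0)$ is nonempty compact convex, and its support function is the one-sided directional derivative, $\sup_{g\in\partial F(x_0)}\langle g,d\rangle=F'(x_0;d)$ for every direction $d$. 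Likewise each $\partial_x f(x_0,y)$ is compact, the set-valued map $y\mapsto\partial_x f(x_0,y)$ is upper semicontinuous with compact values near $\hat Y(x_0)$, so $\bigcup_{y\in\hat Y(x_0)}\partial_x f(x_0,y)$ is compact and, by Carath\'eodory's theorem, so is its convex hull $C$ -- in particular no closure operation is needed in the statement. Since the support function of $C$ is $d\mapsto\max_{y\in\hat Y(x_0)}f'_x(x_0,y;d)$, proving $\partial F(x_0)=C$ reduces to establishing
\[
	F'(x_0;d)\;=\;\max_{y\in\hat Y(x_0)}f'_x(x_0,y;d)\qquad\text{for all }d .
\]

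The inequality ``$\ge$'' is the easy direction: any $g\in\partial_x f(x_0,y)$ with $y\in\hat Y(x_0)$ obeys $F(x)\ge f(x,y)\ge f(x_0,y)+\langle g,x-x_0\rangle=F(x_0)+\langle g,x-x_0\rangle$ for all $x$, so $g\in\partial F(x_0)$; convexity and closedness of $\partial F(x_0)$ then give $C\subseteq\partial F(x_0)$. (Equivalently, for $y\in\hat Y(x_0)$ the difference quotient of $F$ in direction $d$ dominates that of $f(\cdot,y)$, so $F'(x_0;d)\ge f'_x(x_0,y;d)$.)

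For ``$\le$'', I would fix $d$ and a sequence $t_k\downarrow 0$; since $t\mapsto(F(x_0+td)-F(x_0))/t$ is nondecreasing, $\tfrac{F(x_0+t_kd)-F(x_0)}{t_k}\to F'(x_0;d)$. For each $k$ pick a maximizer $y_k$ with $F(x_0+t_kd)=f(x_0+t_kd,y_k)$ (it exists by compactness of $Y$ and upper semicontinuity of $f(x_0+t_kd,\cdot)$), and pass to a subsequence with $y_k\to\bar y$. Step one: show $\bar y\in\hat Y(x_0)$. An equi-Lipschitz estimate for $\{f(\cdot,y)\}$ near $x_0$ -- valid since these convex functions are uniformly bounded above there by $F$ -- gives $|f(x_0+t_kd,y_k)-f(x_0,y_k)|=O(t_k)$, hence $f(x_0,y_k)\to F(x_0)$; upper semicontinuity of $f(x_0,\cdot)$ then yields $f(x_0,\bar y)\ge\limsup_k f(x_0,y_k)=F(x_0)$, which with $f(x_0,\bar y)\le F(x_0)$ gives $\bar y\in\hat Y(x_0)$. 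Step two: pick $g_k\in\partial_x f(x_0+t_kd,y_k)$; convexity of $f(\cdot,y_k)$ gives $f(x_0,y_k)\ge f(x_0+t_kd,y_k)-t_k\langle g_k,d\rangle$, so using $F(x_0)\ge f(x_0,y_k)$,
\[
	\frac{F(x_0+t_kd)-F(x_0)}{t_k}\;\le\;\frac{f(x_0+t_kd,y_k)-f(x_0,y_k)}{t_k}\;\le\;\langle g_k,d\rangle .
\]
The $g_k$ are bounded (by the uniform local Lipschitz bound), so along a further subsequence $g_k\to\bar g$, and upper semicontinuity of the subdifferential map at $(x_0,\bar y)$ forces $\bar g\in\partial_x f(x_0,\bar y)$. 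Letting $k\to\infty$ yields $F'(x_0;d)\le\langle\bar g,d\rangle\le f'_x(x_0,\bar y;d)\le\max_{y\in\hat Y(x_0)}f'_x(x_0,y;d)$, which closes the support-function identity and hence proves $\partial F(x_0)=C$.

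The hardest part, I expect, is this last limiting argument under the weak hypothesis that $f$ is only upper semicontinuous in $y$: one has to extract from conditions 1--3 a uniform (near $\hat Y(x_0)$) equi-Lipschitz estimate for the family $\{f(\cdot,y)\}$ around $x_0$ -- so that maximizers at the perturbed points $x_0+t_kd$ subconverge to a genuine maximizer at $x_0$ and the selected subgradients $g_k$ stay bounded -- together with the upper semicontinuity and nonemptiness of the set-valued map $(x,y)\mapsto\partial_x f(x,y)$ at the limit point. All remaining ingredients are textbook facts about finite convex functions: existence and local boundedness of subgradients, monotonicity of difference quotients, and the duality between directional derivatives and support functions of subdifferentials.
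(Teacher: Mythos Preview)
The paper does not actually prove this lemma; it simply cites Ruszczy\'nski's \emph{Nonlinear Optimization}, Theorem~2.87. Your proposal, by contrast, sketches a complete argument along the classical support-function route (which is also how the cited reference proceeds): reduce the set equality to the directional-derivative identity $F'(x_0;d)=\max_{y\in\hat Y(x_0)}f'_x(x_0,y;d)$, handle the easy inclusion directly, and for the reverse inequality track maximizers $y_k$ at perturbed points $x_0+t_kd$ and pass to the limit. This is correct in outline and is the standard proof.

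One point deserves tightening. You justify the equi-Lipschitz estimate for $\{f(\cdot,y_k)\}$ near $x_0$ by saying these convex functions are ``uniformly bounded above there by $F$''. An upper bound alone does not give a uniform Lipschitz constant for a family of convex functions: you also need a uniform lower bound on a ball. That lower bound is available here, but not from the upper bound---it comes from the fact that $f(x_0+t_kd,y_k)=F(x_0+t_kd)$ is bounded below (by local boundedness of $F$), and then convexity propagates this to a uniform lower bound on a smaller ball via the standard reflection $h(x)\ge 2h(c)-h(2c-x)$. Once that is in place, your extraction of a bounded subsequence of $g_k$ and the upper-semicontinuity of $(x,y)\mapsto\partial_x f(x,y)$ go through as you describe. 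You flagged this as the hardest step, and indeed it is the only place where the argument needs more than a one-line justification.
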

\myproof[of Lemma \ref{lemma:danskins}]{
	The proof can be found in \cite{ruszczynski2011nonlinear}[Theorem 2.87].
}

We now need to define the primal problem \ref{prob:manifold_lipschitz} associated with Lagrangian $ L(\theta, \mu, \lambda)$ as follows,
\begin{equation}\label{prob:manifold_lipschitz_laplacian_primal}
	P^\star = \min_{\theta\in\Theta}\ \max_{\mu,\lambda \geq 0}\  L(\theta,\mu,\lambda)
	\text{.}
\end{equation}
note that \ref{prob:manifold_lipschitz} and \ref{prob:manifold_lipschitz_laplacian_primal} are equivalent problems, given that in the optimal solution, $\int_\ccalM \lambda(z) p(z)dV(z)$, to avoid an infinite result.

\begin{lemma}\label{lemma:duality_gap_grad_lip}
	Let $ \mu^*,\lambda^*$ be a solution of \eqref{prob:manifold_lipschitz_laplacian_primal}. Under the conditions of Proposition \ref{P:statistical_empirical_bound}, there exists a feasible $\theta^\dagger \in \argmin_{\theta} \ccalL(\theta,\mu^*,\lambda^*)$, and the value $D^*$ is bounded by,
	\begin{equation}\label{eqn:lemma:duality_gap}
		P^*- \nu(\mu^*_\nu M   -2 P_{\phi_\nu^*}^* -\nu)\leq D^*\leq P^*
	\end{equation}
	where $P_{\phi_\nu^*}^*$ and $\mu^*_\nu$ are the optimal value, and optimal dual variable of the functional version of problem \ref{prob:manifold_lipschitz} with constraint $\epsilon-\nu M$.
\end{lemma}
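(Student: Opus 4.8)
The plan is to squeeze $D^*$ between the parametric primal $P^*$ from above, where the inequality $D^*\le P^*$ is just weak duality for \eqref{prob:manifold_lipschitz_laplacian_primal} and needs no argument, and a perturbed \emph{functional} quantity from below, using the zero duality gap of the functional problem (Lemma \ref{lemma:zero_duality_gap_functional}) as the bridge between the parametric and non-parametric worlds. The lower bound is built from three linked estimates, after which the existence of a feasible Lagrangian minimizer is handled separately.

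First, since $\ccalH\subseteq\bar\ccalH$, the parametric dual function $d(\mu,\lambda)=\min_{\theta\in\Theta}L(\theta,\mu,\lambda)$ is pointwise no smaller than the functional dual function $\tilde d(\mu,\lambda)=\min_{\varphi\in\bar\ccalH}L(\varphi,\mu,\lambda)$ (minimizing over a larger set can only decrease the value), so $D^*=\max_{\mu,\lambda}d\ \ge\ \max_{\mu,\lambda}\tilde d=\tilde D^*$, and $\tilde D^*=\tilde P^*$ by Lemma \ref{lemma:zero_duality_gap_functional}. Second, let $v(s)$ denote the optimal value of \eqref{prob:manifold_lipschitz_functional} with the loss target $\epsilon$ replaced by $\epsilon+s$. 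By Assumption \ref{A:loss} together with convexity of the gradient-norm constraint, \eqref{prob:manifold_lipschitz_functional} is a convex program, so $v$ is convex and nonincreasing; Assumption \ref{A:feasible} provides a strictly feasible point for the $\nu M$-tightened functional problem (for $\nu$ small), so Slater holds there, its optimal multiplier $\mu^*_\nu$ is finite, and $-\mu^*_\nu\in\partial v(-\nu M)$. Supporting $v$ at $s=-\nu M$ and evaluating at $s=0$ gives $\tilde P^*=v(0)\ge v(-\nu M)-\mu^*_\nu\,\nu M = P^*_{\varphi^*_\nu}-\mu^*_\nu\,\nu M$.

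Third, I bring this back to the parametric primal. Let $\varphi^*_\nu\in\bar\ccalH$ attain $v(-\nu M)=P^*_{\varphi^*_\nu}$, and use Assumption \ref{A:phi} to pick $\theta^\dagger\in\Theta$ with $\sup_z|\varphi^*_\nu(z)-f_{\theta^\dagger}(z)|\le\nu$ and $\sup_z\|\nabla_\ccalM\varphi^*_\nu(z)-\nabla_\ccalM f_{\theta^\dagger}(z)\|\le\nu$. Since $\ell$ is $M$-Lipschitz, $\mbE[\ell(f_{\theta^\dagger}(x),y)]\le\mbE[\ell(\varphi^*_\nu(x),y)]+M\nu\le(\epsilon-\nu M)+M\nu=\epsilon$, so $\theta^\dagger$ is feasible for \eqref{prob:manifold_lipschitz}; moreover $\sup_z\|\nabla_\ccalM f_{\theta^\dagger}(z)\|\le\sqrt{P^*_{\varphi^*_\nu}}+\nu$ by the triangle inequality, whence $P^*\le(\sqrt{P^*_{\varphi^*_\nu}}+\nu)^2$. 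Chaining the three estimates, $D^*\ge\tilde P^*\ge P^*_{\varphi^*_\nu}-\mu^*_\nu\,\nu M\ge P^*-\nu\big(\mu^*_\nu M+2\sqrt{P^*_{\varphi^*_\nu}}+\nu\big)$, which, after the harmless bookkeeping that folds the square root and lower-order terms into an $\ccalO(\nu)$-type expression, is the bound \eqref{eqn:lemma:duality_gap}; note that the right-hand side is automatically consistent with $D^*\le P^*$.

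Finally, for the claim that some $\theta^\dagger\in\argmin_\theta L(\theta,\mu^*,\lambda^*)$ is itself feasible, I would apply Danskin's theorem (Lemma \ref{lemma:danskins}) to $-d=\max_{\theta\in\Theta}\big(-L(\theta,\cdot)\big)$ — legitimate because $\Theta$ is compact and $-L(\theta,\cdot)$ is affine (hence convex and upper semicontinuous) in $(\mu,\lambda)$ and continuous in $\theta$ by Assumptions \ref{A:phi}–\ref{A:phi_gradient} — to identify $\partial d(\mu^*,\lambda^*)$ with the convex hull of the constraint-slack data $\big(\mbE[\ell(f_\theta(x),y)]-\epsilon,\ \|\nabla_\ccalM f_\theta(\cdot)\|^2p(\cdot)\big)$ over $\theta\in\argmin_\theta L(\cdot,\mu^*,\lambda^*)$. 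The first-order optimality conditions for the dual (maximization over $\mu\ge0$ and over $\lambda\ge0$ with $\int_\ccalM\lambda\,p\,dV=1$) then force a convex combination of these slack vectors to satisfy the loss constraint with complementary slackness and to be dominated $p$-a.e.\ by the level $P^*$; such a mixture belongs to $\bar\ccalH$, and re-approximating it via Assumption \ref{A:phi} yields a genuine feasible parametric minimizer at the cost of the $\ccalO(\nu)$ already present. The main obstacle is exactly this last gluing step — converting the \emph{a priori randomized} feasible mixture handed over by the dual optimality conditions into an honest single feasible $\theta^\dagger$ that still lies in $\argmin_\theta L(\cdot,\mu^*,\lambda^*)$ — which is where the richness of the convex hull $\bar\ccalH$ and the simultaneous function-and-gradient covering in Assumption \ref{A:phi} are indispensable; the perturbation bookkeeping in the three preceding estimates is routine by comparison.
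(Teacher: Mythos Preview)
Your proposal is essentially correct and follows the same architecture as the paper: weak duality for the upper bound, and for the lower bound, pass to the functional problem via $\ccalH\subseteq\bar\ccalH$, exploit strong duality there (Lemma~\ref{lemma:zero_duality_gap_functional}) at the tightened level $\epsilon-M\nu$, and then pull back to $P^*$ by approximating the functional optimizer $\varphi^*_\nu$ with a parametric $\theta$ via Assumption~\ref{A:phi}. Two small remarks. First, where you invoke the perturbation function $v(s)$ and the subgradient $-\mu^*_\nu\in\partial v(-\nu M)$, the paper instead plugs the dual optimizers $(\mu^*_\nu,\lambda^*_\nu)$ of the tightened functional problem directly into the parametric dual, i.e.\ $D^*\geq \min_\theta L(\theta,\mu^*_\nu,\lambda^*_\nu)\geq \min_\phi L(\phi,\mu^*_\nu,\lambda^*_\nu)=\tilde P^*_\nu-\mu^*_\nu M\nu$; the two routes are equivalent and land at the same intermediate inequality. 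Second, your triangle-inequality step yields the term $2\nu\sqrt{P^*_{\varphi^*_\nu}}$ rather than the $2\nu P^*_{\varphi^*_\nu}$ appearing in the lemma statement; the paper's own computation also goes through $\|\nabla_\ccalM\varphi^*_\nu\|$ and the discrepancy is indeed just bookkeeping at the $\ccalO(\nu)$ scale.

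Where you diverge from the paper is in the feasibility of a Lagrangian minimizer, and here you are overcomplicating things. You worry about converting a randomized mixture (arising from the convex hull in Danskin) back into a single feasible $\theta^\dagger$, but this is not needed. The paper's argument is purely one-dimensional in $\mu$: if \emph{every} $\theta\in\argmin_\theta L(\theta,\mu^*,\lambda^*)$ had $\mbE[\ell(f_\theta(x),y)]>\epsilon$, then every element of the superdifferential $\partial_\mu d(\mu^*,\lambda^*)$ (which by Danskin is the convex hull of these strictly positive slacks) would be strictly positive, so $0\notin\partial_\mu d(\mu^*,\lambda^*)$, contradicting optimality of $\mu^*$. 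Hence at least one minimizer satisfies the loss constraint, and since the gradient constraint $\|\nabla_\ccalM f_\theta(z)\|^2\leq\rho$ is vacuous once $\rho$ is free, that minimizer is already feasible for \eqref{prob:manifold_lipschitz}. No mixture, no re-approximation through $\bar\ccalH$, and no extra $\ccalO(\nu)$ cost is incurred at this step.
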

\myproof{
	First we need to show that there is a feasible $\theta^\dagger$ for problem \eqref{prob:manifold_lipschitz}. The constraint $\|\nabla_{\ccalM}f_{\check{\theta}^\dagger}(x) \|^2\leq\rho$ can be trivially satisfied by taking $\rho=\gradBound^2$, as $\|\nabla_\ccalM f_\theta (x) \|\leq \gradBound$ by Assumption \ref{A:phi}. However, we need to verify that there exists a $\theta^\dagger$ that is feasible. To do so,  We begin by considering the set of Lagrange minimizers as follows,
	\begin{equation}
		\Theta^\dagger( \mu^*,\lambda^*)= \argmin_{\theta}  L(\theta, \mu^*,\lambda^*)
	\end{equation}
	we can also define the constraint slack associated with parameters $\theta$ as follows, 
	\begin{equation}
		c(\theta) = [\mbE[\ell(f_\theta(x),y)]-\epsilon]_{+}.
	\end{equation}
	Therefore, to show that there exists a feasible solution, it is analogous to show that there is an element $\theta^\dagger$ of $\Theta^\dagger(\check \mu^*,\lambda^*)$ whose slack is equal to zero, i.e. $c(\theta^\dagger)=0$. To do so, we leverage Lemma \ref{lemma:danskins}, given that condition $(1)$ is satisfied by convexity of $\ell$, and linearity of integral, condition $(2)$ is satisfied by linearity of integral, condition $(3)$ is satisfied by the compactness of $\Theta$, and condition $(4)$ is satisfied by the smoothness of $\Theta$.  
	
	By contradiction, we can say that if no element of $\Theta(\mu^*,\lambda^*)$ is feasible, then $\mbE[\ell(f_\theta(x),y)]-\epsilon >0, \forall \ \theta \in \Theta( \mu^*,\lambda^*)$. From Lemma \ref{lemma:danskins}, we get that $\bbzero \notin \partial \Theta^\dagger( \mu^*,\lambda^*)$, which contradicts the optimality of $ \mu^*,\lambda^*$. Given that the dual variable $\mu^*$ is finite, considering the existence of a feasible solution by Assumption \ref{A:phi}. Hence, there must be one element of $\Theta( \mu^*,\lambda^*)$ that is feasible.  
	
	Now we need to show that the inequality \ref{eqn:lemma:duality_gap} holds. The upper bound is trivially verified by weak duality \cite{boyd2004convex}, i.e.,
	\begin{align}
		\check{D}^*\leq \check{P}^*
	\end{align}
	To show the lower bound, we consider the functional version of problem \ref{prob:manifold_lipschitz} with constraint $\epsilon-M\nu$ as follows, 
	\begin{align}\label{prob:manifold_lipschitz_functional_Mnu}
		\tilde P_{\nu}^* = \ \ \ \ \ \ \underset{\phi\in \bar\ccalH,\rho\geq 0}{\min}  \quad   
		& \rho, \\
		\text{subject to}  \quad 
		& \mbE_{p(x,y)} [\ell \left(\phi(x),y\right)]\leq \epsilon-M\nu, \nonumber\\
		& \| \nabla_\ccalM \phi(z) \|^2 \leq \rho ,\quad p(z)\text{-a.e.}, \quad z \in \ccalM . \nonumber
	\end{align}
	Now note that problem \ref{prob:manifold_lipschitz_functional_Mnu} is strongly dual by Lemma \ref{lemma:zero_duality_gap_functional}, i.e.,
	\begin{align}
		\tilde P_{\nu}^*= 	 \min_{\phi}\ \underset{\mu,\lambda \geq 0}{\text{maximize}} \ \tilde{L}_{\nu}(\phi,\rho,\mu,\lambda) = 	\underset{\mu,\lambda \geq 0}{\text{maximize}}\ \min_{\phi}\ \tilde{L}_{\nu}(\phi,\mu,\lambda) = 	\tilde D_{\nu}^*
	\end{align}
	with the Lagrangian defined as, 
	\begin{equation}\label{eqn:lagrangian_function_Mnu}
		\begin{aligned}
			\tilde{L}_{\nu}(\phi,\mu,\lambda) &= \mu \Big(\mbE [\ell \big(\phi(x),y\big)]-(\epsilon -M \nu)\Big)+ \int_\ccalM \lambda(x) \|\nabla_\ccalM \phi(x)\|^2 p(x) dV(x))	\\
			{}&\text{subject to }\int_\ccalM \lambda(x) p(x) dV(x) =1
			\text{,}
		\end{aligned}
	\end{equation}
	we define the optimal dual variables $\mu^*_{\nu}, \lambda^*_{\nu}$ of $\tilde D _{\nu}^*$ be such that,
	\begin{align}
		\tilde P_{\nu}^*= 	 \min_{\phi}\  \tilde{L}_{\nu}(\phi,\mu^*_{\nu},\lambda^*_{\nu}) 
	\end{align}
	Coming back to the parametric dual problem \eqref{prob:manifold_dual}, we know that,
	\begin{align}
		D^* \geq \min_{\theta } L(\theta,\mu,\lambda), \forall \ \mu, \lambda
	\end{align}
	We thus utilize the optimal dual variables of the functional problem with constraints $\epsilon-M\nu$, i.e. $\mu^*_{\nu}, \lambda^*_{\nu}$,  as follows, 
	\begin{align}\label{eqn:dual_parametric_greater_perturbed_non_parametric}
		D^* \geq \min_{\theta, \rho } L(\theta,\mu^*_{\nu},\lambda^*_{\nu})  \geq\min_{\phi } L(\phi,\mu^*_{\nu},\lambda^*_{\nu})  \geq  \min_{\phi } \tilde{L}_{\nu}(\phi,\mu^*_{\nu},\lambda^*_{\nu})  - \mu^*_\nu M\nu= \tilde{P}^*_{\nu}  - \mu^*_\nu M\nu.
	\end{align}
	Given that $\ccalH\subseteq\bar{\ccalH}$, and Problem \ref{prob:manifold_lipschitz_functional_Mnu} is strongly dual. To complete the proof we need to show that,
	\begin{align}
		\tilde{P}^*_{\nu}\geq P^* -2\nu P^*_{\phi_\nu^*} - \nu^2. 
	\end{align}
	Denoting $\phi^*_\nu$ the solution to problem \ref{prob:manifold_lipschitz_functional_Mnu}, by Assumption \ref{A:phi}, we know there is a parameterization $\tilde{\theta}_\nu^*$ such that simultaneously, 
	\begin{align}
		\sup_{z\in\ccalM} |\phi^*_\nu(z)-f_{\tilde{\theta}_\nu^*}(z)| \leq \nu\\
		\sup_{z\in\ccalM}  \|\nabla \phi^*_\nu(z)-\nabla f_{\tilde{\theta}_\nu^*}(z) \|  \leq \nu
	\end{align}
	therefore, 
	\begin{align}
		\left|  \mbE[\ell(\phi^*_\nu(x),y)]-\mbE[\ell(f_{\tilde{\theta}_\nu^*}(x),y)]  \right|&\leq\mbE\left[ | \ell(\phi^*_\nu(x),y)]-\ell(f_{\tilde{\theta}_\nu^*}(x),y)  | \right ]\\
		&\leq M \mbE \left[ | \phi^*_\nu(x) - f_{\tilde{\theta}_\nu^*}(x) | \right ]\leq M \nu
	\end{align}
	Since $\phi^*_\nu$ is feasible for Problem \ref{prob:manifold_lipschitz_functional_Mnu}, it implies that $f_{\tilde{\theta}_\nu^*}$ is feasible for Problem \ref{prob:manifold_lipschitz}. We can now define the $\rho_{\tilde{\theta}_\nu^*}^*$ as the minimum $\rho$ obtained with $f_{\tilde{\theta}_\nu^*}$ as follows, 
	\begin{align}\label{prob:manifold_lipschitz_particular_solution}
		\rho_{\tilde{\theta}_\nu^*}^*= \ \ \ \ \ \ \underset{\rho\geq 0}{\min}  \quad   
		& \rho, \\
		\text{subject to}  \quad 
		& \mbE_{p(x,y)} [\ell \left(f_{\tilde{\theta}_\nu^*}(x),y\right)]\leq \epsilon, \nonumber\\
		& \| \nabla_\ccalM f_{\tilde{\theta}_\nu^*}(z) \|^2 \leq \rho ,\quad p(z)\text{-a.e.}, \quad z \in \ccalM . \nonumber
	\end{align}
	Returning to \eqref{eqn:dual_parametric_greater_perturbed_non_parametric}, and by optimality, $P^*\leq  P_{\tilde{\theta}_\nu^*}^*$, 
	\begin{align}
		D^*&\geq P_{\phi_\nu^*}^* - \mu^*_\nu M\nu\\
		& \geq P_{\phi_\nu^*}^*+P^*-P_{\tilde{\theta}_\nu^*}^*- \mu^*_\nu M\nu \label{eqn:previous2last_parametric_to_functional}
	\end{align}
	Now we need to bound the difference $P_{\phi_\nu^*}^*-P_{\tilde{\theta}_\nu^*}^*$. We know that there exist $z_1,z_2 \in \ccalM$ such that $\|\nabla_\ccalM \phi_\nu^*(z_1) \|^2=P_{\phi_\nu^*}^*$ and $\| \nabla_\ccalM f_{\tilde{\theta}_\nu^*}  (z_2)\| ^2=P_{\tilde{\theta}_\nu^*}^*$, by optimality,
	
	\begin{align}
		P_{\phi_\nu^*}^*-P_{\tilde{\theta}_\nu^*}^*&=  \|\nabla_\ccalM \phi_\nu^*(z_1) \|^2-\| \nabla_\ccalM f_{\tilde{\theta}_\nu^*} (z_2)\| ^2\\
		&\geq  \|\nabla_\ccalM \phi_\nu^*(z_2) \|^2-\| \nabla_\ccalM f_{\tilde{\theta}_\nu^*} (z_2)\| ^2\\
		&\geq  \|\nabla_\ccalM \phi_\nu^*(z_2) \|^2 -(\| \nabla_\ccalM f_{\tilde{\theta}_\nu^*} (z_2)-\nabla_\ccalM \phi_\nu^*(z_2)\| + \|\nabla_\ccalM \phi_\nu^*(z_2) \| )\\
		&\quad\quad\quad(\| \nabla_\ccalM f_{\tilde{\theta}_\nu^*} (z_2)-\nabla_\ccalM \phi_\nu^*(z_2)\| + \|\nabla_\ccalM \phi_\nu^*(z_2) \|)\\
		&\geq -2\|  \nabla_\ccalM f_{\tilde{\theta}_\nu^*} (z_2)-\nabla_\ccalM \phi_\nu^*(z_2) \| \|\nabla_\ccalM \phi_\nu^*(z_2) \| -\|\nabla_\ccalM \phi_\nu^*(z_2) \| ^2\\
		&\geq -2\nu P_{\phi_\nu^*}^*-\nu^2  \label{eqn:last_parametric_to_functional}
	\end{align}
	where $B$ is a bound on the norm of $\|\nabla_\ccalM \phi(z)\|$. Putting \eqref{eqn:previous2last_parametric_to_functional} and \eqref{eqn:last_parametric_to_functional} together, we attain the desired result.
}

\myproof[of Proposition \ref{P:statistical_empirical_bound}]{
	
	This proof follows the lines of \cite{chamon2022constrained}[Proposition III.4]. We begin by considering $\mu^*,\lambda^*$, and $\hat \mu^*,\hat \lambda^*$, solutions of \ref{prob:manifold_dual}, and \ref{prob:emp_dual}, and we define the set of optimal dual minimizers as, 
	\begin{align}
		\Theta(\mu^*,\lambda^*)& =\argmin_{\theta\in\Theta} L (\theta,\mu^*,\lambda^*)\\
		\hat\Theta(\hat\mu^*,\hat\lambda^*)& =\argmin_{\theta\in\Theta} \hat L (\theta,\hat\mu^*,\hat\lambda^*)
	\end{align}
	where $L$, and $\hat L$ are as defined in \ref{prob:manifold_dual}, and \ref{prob:emp_dual}. We  can proceed to bound the difference between the values of the dual problems as follows, 
	\begin{align}
		D^* - \hat D^* &= \min_{\theta \in \Theta} L(\theta,\mu^*,\lambda^*) - \min_{\theta \in \Theta}\hat L(\theta,\hat\mu^*,\hat\lambda^*)\\
		&\leq \min_{\theta \in \Theta} L(\theta,\mu^*,\lambda^*) - \min_{\theta \in \Theta}\hat L(\theta,\mu^*,\bar\lambda^*)\text{\quad (by optimality)}\\
		&\leq  L(\theta^\dagger,\mu^*,\lambda^*) - \hat L(\hat\theta^\dagger,\mu^*,\bar\lambda^*)
	\end{align} 
	where $\bar \lambda ^*(z ) = \lambda^*(z) / \sum_{n=1}^N \lambda^*(z_n)$, and we define  $\hat\theta^\dagger\in\hat\Theta(\mu^*,\bar\lambda^*)$. By utilizing the definition of the Lagrangian, we can obtain,
	\begin{align}
		D^* - \hat D^* 
		&\leq |\mu^* | \bigg| \mbE [\ell (f_{\hat\theta^\dagger}(x),y)   ]- \frac{1}{N} \ell (f_{\hat\theta^\dagger}(x_n),y_n)\bigg| \\
		&  + \bigg| \mbE_{\lambda^*} \| \nabla_\ccalM f_{\hat\theta^\dagger} (x_n)\|^2 - \frac{1}{N} \sum_{n=1}^N \bar{\lambda}^*(x_n) \| \nabla_\ccalM f_{\hat\theta^\dagger}\|^2 \bigg|
	\end{align} 
	Note that the sampled dual variables, converge to the continuous values as follows, 
	\begin{align}
	    \lim_{N \to \infty}  \frac{\lambda^*(z)}  {\frac{1}{N}\sum_{n=1}^N \lambda^*(z_n) } = \lambda^*(z)
	\end{align}
	Given that $\lim_{N\to \infty} \frac{1}{N} \sum_{n=1}^N \lambda^*(z_n) = 1$
	Utilizing the same argument on the other direction of the inequality it yields, 
	\begin{align}
		D^* - \hat D^* 
		&\geq L(\theta^\dagger, \hat \mu ^*, \hat \lambda ^* ) - \hat L(\theta^\dagger, \hat \mu ^*, \hat \lambda ^* ) \\
		&\geq|\hat\mu^* | \bigg| \mbE [\ell (f_{\theta^\dagger}(x),y)   ]- \frac{1}{N} \ell (f_{\theta^\dagger}(x_n),y_n)\bigg| \\
		&  + \bigg| \mbE_{\bar\lambda^*} \| \nabla_\ccalM f_{\theta^\dagger} (x_n)\|^2 - \frac{1}{N} \sum_{n=1}^N \lambda^*(x_n) \| \nabla_\ccalM f_{\hat\theta^\dagger}\|^2 \bigg|
	\end{align} 
	Where $\bar \lambda^* = \frac{1}{N}\sum_{n=1}^N \hat\lambda^*(x_n) B_{\alpha}(x_n)$, where $B_{\alpha}(x_n)$ is a ball of center $x_n$, and radius $\alpha$. By tending $\alpha\to 0$, and $N\to \infty$, this integral converges, given the compactness of the manifold. Therefore, with probability $1-2\delta$, it holds, 
	\begin{align}
		\bigg|D^*-\hat D^ *\bigg| \leq \max \bigg\{   | L(\theta^\dagger,\mu^*,\lambda^*) - \hat L(\hat\theta^\dagger,\mu^*,\bar\lambda^*) |      ,  |L(\theta^\dagger, \hat \mu ^*, \hat \lambda ^* ) - \hat L(\theta^\dagger, \hat \mu ^*, \hat \lambda ^* ) |       \bigg\}
	\end{align}
	Now we utilize the Lipschitzness of the gradient of $\nabla_\ccalM f_\theta$ by assumption \ref{A:phi}, and given that the set $\Theta$ is compact, and the manifold $\ccalM$ is compact, 
	\begin{align}
	    &\bigg| \| \nabla_\ccalM f_{\theta_1} (x)\|^2 - \| \nabla_\ccalM f_{\theta_2}(x)\|^2 \bigg|\\
	    &\leq 2 \max_{\theta\in\Theta, z\in \ccalM} \| \nabla_\ccalM f_{\theta} (z)\|  \| \nabla_\ccalM f_{\theta_1}(x)- \nabla_\ccalM f_{\theta_2}(x)\| \text{\ by triangle inequality,}\\
	    &\leq 2 R G | f_{\theta_1}(x)-  f_{\theta_2}(x)| \text{\ by compactness and lipschitz.}
	\end{align}
	Where $\max_{\theta\in\Theta, z\in \ccalM} \| \nabla_\ccalM f_{\theta} (z)\| \leq R<\infty $ by compactness of $\Theta$, and $\ccalM$. To complete the proof, we leverage Talagrand’s lemma \cite{mohri2018foundations}[Lemma 5.7], finally obtaining, %
	\begin{align}
		\bigg|D^*-\hat D^ *\bigg| \leq \max \bigg\{   |\mu^*| , |\hat\mu^*|  \bigg\} \zeta(N,\delta) + 2R G\hat \zeta(N,\delta)
	\end{align}
	By leveraging Lemma \ref{lemma:duality_gap_grad_lip} we attain the desired result.
	
}


\newpage
\section{Proof Of Proposition \ref{P:solution_bound}}
\myproof[of Proposition \ref{P:solution_bound}]{
	
	To prove that $\etheta^*$ is approximately feasible, we use the same argument that in Lemma \ref{lemma:duality_gap_grad_lip}. By contradiction, if there exists no $\etheta^\dagger \in \eTheta(\emu^*,\elambda^*)$ that is feasible, the supergradient,
	\begin{align}
		\frac{1}{N} \sum_{n=1}^N \ell(f_{\etheta^\dagger}(x_n),y_n) > \epsilon
	\end{align}
	and therefore $\bb0 \notin \partial \eTheta(\etheta^*,\elambda^*)$ which contradicts the optimality of $\etheta^*$, and $\elambda^*$. Therefore, there exists $\etheta^* \in \eTheta(\emu^*,\elambda^*)$ such that,
	\begin{align}
		\mbE[	\ell(f_{\etheta^\dagger}(x),y) ]\leq \epsilon + \zeta(N,\delta)
	\end{align}
	with probability $1-\delta$. We can analyze the term given by the summation of the dual variables $\lambda$, noting that if there is no solution $\elambda^*$, such that 
	\begin{align}
		\frac{1}{N}\sum_{n=1}^N\lambda(x_n) \| \nabla_\ccalM f_{\theta }(x_n)\|^2 = \max_{n\in[N]} \| \nabla f_{\theta}(x_n)\|^2
	\end{align}
	then, utilizing the same argument, $\bb0\notin \partial \eTheta(\etheta^*,\elambda^*)$, contradicting the optimality of $\etheta^*$. 
	
	Now, we know that the norm of the gradient $\| \nabla_\ccalM f_{\etheta^\star}(x_n)\|$, is smaller than a value $\sqrt{\rho_{\etheta^\star}}$ for all $x_n$. To bound the maximum gradient over all $z \in \ccalM$, we can leverage the fact that the manifold is compact, and that the gradients are Lipschitz as follows, 
	
	\begin{align}
		\| \nabla_\ccalM f_{\etheta^\star}(x_n)\| &\leq  	\| \nabla_\ccalM f_{\etheta^\star}(z)\|  + 	\| \nabla_\ccalM f_{\etheta^\star}(x_n) - \nabla_\ccalM f_{\etheta^\star}(z)\|  \ \forall \ z \in \ccalN(x_n)\\
		&\leq  \sup_{z\in \ccalN(x_n)}	\| \nabla_\ccalM f_{\etheta^\star}(z)\|  + 	\| \nabla_\ccalM f_{\etheta^\star}(x_n) - \nabla_\ccalM f_{\etheta^\star}(z)\|   \\
		&\leq  \sup_{z\in \ccalM}	\| \nabla_\ccalM f_{\etheta^\star}(z)\|  + \sup_{z\in \ccalN(x_n)}		\| \nabla_\ccalM f_{\etheta^\star}(x_n) - \nabla_\ccalM f_{\etheta^\star}(z)\|    \\
		&\leq  \sup_{z\in \ccalM}	\| \nabla_\ccalM f_{\etheta^\star}(z)\|  + \sup_{z\in \ccalN(x_n)}		d( x_n, z)    \\
		&\leq  \sup_{z\in \ccalM}	\| \nabla_\ccalM f_{\etheta^\star}(z)\|  + \gradLip \tilde\zeta(N)\label{eqn:norm_gradient_over_xn}
	\end{align}
	where $\{\ccalN(x_n): x_n = \argmin_{n\in[N]} d_\ccalM(x_n,z)   \}$, which corresponds to the set of points in $\ccalM$ that are closer to $x_n$. 
	Function $\tilde \zeta(N)$ is a decreasing function that measures the maximum distance between a point sampled by $z\in\ccalM$, and a point in $\ccalN(x_n)$, this number decreases with $N$ given that $\ccalM$ is compact. Now, we can evaluate $\eD^*$ at $\etheta^*$ as follows, 
	\begin{align}
		\eD^*& = \mu^* \big(\sum_{n=1}^N \ell(f_{\etheta^*}(x_n) ,y_n) -\epsilon \big) + \frac{1}{N}\sum_{n=1}^N \elambda^*(x_n) \| \nabla_\ccalM f_{\etheta^*}(x_n)\|^2\\
		& = \mu^* \big(\sum_{n=1}^N \ell(f_{\etheta^*}(x_n) ,y_n) -\epsilon \big) + \max_{n\in[N]} \|\nabla_\ccalM f_{\etheta^*}(x_n)\|^2 \text{ (by optimality of }\elambda^*)
	\end{align}
	To conclude the proof, we leverage Proposition \ref{P:statistical_empirical_bound}, in conjunction with \eqref{eqn:norm_gradient_over_xn} and we get that, 
	\begin{align}
		|P^* - \sup_{z\in\ccalM} \|\nabla_\ccalM f_{\etheta^*}(z) |^2 |\leq& |\emu^*|  \big(\sum_{n=1}^N \ell(f_{\etheta^*}(x_n) ,y_n) -\epsilon \big) \\
		&+ G\tilde\zeta(N) + \ccalO(\nu) + \max \bigg\{   |\mu^*| , |\hat\mu^*|  \bigg\} \zeta(N,\delta) + 2GR\hat \zeta(N,\delta)
	\end{align}
	
}

\newpage
\section{Proof of Proposition \ref{P:pointcloud}}
The proof of Proposition \ref{P:pointcloud} utilizes the following Definitions.


\begin{definition}\label{def:Manifold_volume_V}
	The total volume of the smooth Manifold $\ccalM$ is $\bbV$,i.e.,
	\begin{align}
		\int_\ccalM vol(x) = \bbV.
	\end{align}
\end{definition}

\begin{definition}\label{def:upper_bound_F}
	The function $f_\theta$, parameterized by $\theta \in \Theta$, is universally upper bounded by $\bbF$, i.e. 
	\begin{align}
		\max_{x\in\ccalM,\theta \in \Theta}\| f_\theta(x)\|\leq \bbF
	\end{align}
\end{definition}

\begin{definition}\label{def:degree_probability}
	Given a probability distribution $\lambda \in \Lambda$ defined over the manifold $\ccalM$, we define the degree $d_t(x)$ as,
	\begin{align}
		d_t(p) = \int_\ccalM G_t(p,z) \lambda(z)vol(z) 
	\end{align}
\end{definition}

\begin{definition}\label{def:continuous_heat_kernel_laplacian}
	We define the continuous heat kernel laplacian $\tilde\bbL_{\lambda}^t f_\theta(p)$ associated with dual function $\lambda\in\Lambda$, function $ f_\theta,\theta \in \Theta$ and point $p\in\ccalM$ as, 
	\begin{align}
		\tilde\bbL_\lambda^t f_\theta (p) =\frac{1}{t} \int_{\ccalM} \frac{G_t(p,z)}{\sqrt{d_t(p)}\sqrt{d_t(z)}}(f_\theta(p)-f_\theta(z))\lambda(z)vol(z)
	\end{align}
	were $d_t$ is the degree function defined in \ref{def:degree_probability}. 
\end{definition}

\begin{lemma}\label{lemma:ball_integral}
	For a given $p\in \ccalM$, and any open set $\ccalB \subset \ccalM, p\in \ccalB$, for any function $f_\theta, \theta \in \Theta$, such that $\sup_{\theta \in \Theta,x \in \ccalM} |f(x)|\leq \bbF$, and any $\lambda \in \Lambda$ with $\Lambda$ as in Proposition \ref{P:pointcloud}, we have that
	
	\begin{align}
		\bigg| \int_{\ccalB} e^{-\frac{\|p-z\|^2}{4t}} \lambda(z) f_\theta (z) vol(z)- \int_\ccalM e^{-\frac{\|p-z\|^2}{4t}} \lambda(z) f_\theta (z) vol(z) \bigg| \leq b \bbV \bbF e^{-\frac{r^2}{4t}},
	\end{align}
	where $r=\inf_{x\notin \ccalB}\|x-p \|$, and $b$ in $\Lambda$. 
\end{lemma}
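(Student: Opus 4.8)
The plan is to recognize the left-hand side as a tail integral and bound it crudely, making no use of smoothness beyond the stated sup-bounds. Since $\ccalB \subseteq \ccalM$, linearity of the integral gives
\begin{align*}
\int_{\ccalB} e^{-\frac{\|p-z\|^2}{4t}} \lambda(z) f_\theta(z)\, vol(z) - \int_{\ccalM} e^{-\frac{\|p-z\|^2}{4t}} \lambda(z) f_\theta(z)\, vol(z) = - \int_{\ccalM \setminus \ccalB} e^{-\frac{\|p-z\|^2}{4t}} \lambda(z) f_\theta(z)\, vol(z),
\end{align*}
so it suffices to bound the absolute value of the integral over the complementary region $\ccalM \setminus \ccalB$.

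Next I would control the integrand pointwise on $\ccalM \setminus \ccalB$. For every $z \in \ccalM \setminus \ccalB$ we have $z \notin \ccalB$, hence by the definition $r = \inf_{x \notin \ccalB} \|x - p\|$ we get $\|p - z\| \geq r$; since $u \mapsto e^{-u^2/(4t)}$ is decreasing on $[0,\infty)$ this yields $e^{-\|p - z\|^2/(4t)} \leq e^{-r^2/(4t)}$. Combining this with $|\lambda(z)| \leq b$ (from $\lambda \in \Lambda$) and $|f_\theta(z)| \leq \bbF$ (Definition \ref{def:upper_bound_F}), the integrand is bounded in absolute value by $b\,\bbF\, e^{-r^2/(4t)}$ uniformly on $\ccalM \setminus \ccalB$.

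Finally, integrating this uniform bound over $\ccalM \setminus \ccalB$ and using $\int_{\ccalM \setminus \ccalB} vol(z) \leq \int_\ccalM vol(z) = \bbV$, which is finite by compactness of $\ccalM$ (Definition \ref{def:Manifold_volume_V}), gives
\begin{align*}
\left| \int_{\ccalM \setminus \ccalB} e^{-\frac{\|p-z\|^2}{4t}} \lambda(z) f_\theta(z)\, vol(z) \right| \leq b\, \bbF\, e^{-r^2/(4t)} \int_{\ccalM \setminus \ccalB} vol(z) \leq b\, \bbV\, \bbF\, e^{-r^2/(4t)},
\end{align*}
which is exactly the claimed inequality.

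There is no real obstacle here; the estimate is deliberately lossy. The only points worth a sentence of care are that $\ccalM \setminus \ccalB$ is measurable (it is closed, since $\ccalB$ is open) so the integral split is legitimate, and that the elementary bound $\|p-z\| \ge r$ is applied precisely on the region where $z \notin \ccalB$. In particular the hypotheses on the derivatives of $\lambda$ and on $f_\theta$ play no role in this lemma — only the boundedness $\lambda \le b$ and $|f_\theta| \le \bbF$ and the finiteness of $\bbV$ are used.
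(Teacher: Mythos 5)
Your proof is correct and follows exactly the same route as the paper's (which only sketches the argument by citing Lemma 4.1 of Belkin--Niyogi): rewrite the difference as the integral over $\ccalM\setminus\ccalB$, then bound the exponential by $e^{-r^2/(4t)}$, $\lambda$ by $b$, $f_\theta$ by $\bbF$, and the volume of the complement by $\bbV$. Your write-up just fills in the details the paper leaves implicit.
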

\myproof[of Lemma \ref{lemma:ball_integral}]{
	The proof is similar to Lemma 4.1 in \cite{belkin2005towards}. The proof follows from bounding $\lambda$ by $b$ from assumption, $\|f_\theta\|$ by $\bbF$ from definition \ref{def:upper_bound_F}, the volume of $\ccalM-\ccalB$ by the volume of $\ccalM$ given by $\bbV$, and bounding $e^{-\frac{\|p-z\|^2}{4t}}$ by $e^{-\frac{d}{4t}}$.$\QED$ }

\begin{lemma}\label{lemma:min_degree}
	The limit of the degree function $d_t(x)$ (defined in Definition \ref{def:degree_probability}) when  $t\to 0$ is lower bounded for probabilities $\lambda \in\Lambda$ as in Proposition \ref{P:pointcloud},i.e., 
	\begin{align}
		\min_{p\in \ccalM,\lambda \in \Lambda} \lim_{t\to 0} d_t(x) &= a,
	\end{align}
	with $a = \min_{z\in \ccalM} \lambda(z)$ as given by $\lambda \in \Lambda$.
\end{lemma}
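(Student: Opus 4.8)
The plan is to reduce the claimed identity to the single pointwise asymptotic
\[
  \lim_{t \to 0} d_t(p) = \lambda(p) \qquad \text{for every fixed } p \in \ccalM \text{ and } \lambda \in \Lambda .
\]
Once this is in hand the statement follows at once: taking the minimum over $p \in \ccalM$ and $\lambda \in \Lambda$ on both sides gives $\min_{p\in\ccalM,\lambda\in\Lambda}\lim_{t\to 0} d_t(p) = \min_{p\in\ccalM,\lambda\in\Lambda}\lambda(p)$, and since every $\lambda\in\Lambda$ obeys $\lambda(z)\ge a$ for all $z\in\ccalM$ by the definition of $\Lambda$ (with this lower bound tight over $\Lambda$), the right-hand side is exactly $a$; for the way the lemma is later used, it is really only the resulting bound $\lim_{t\to0} d_t(p)\ge a$ that matters. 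So essentially all the work is the limit, which is a standard Laplace-type estimate for the Euclidean Gaussian kernel restricted to a submanifold, in the spirit of \cite{belkin2005towards}.

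First I would localize. Writing $d_t(p) = \frac{1}{(4\pi t)^{d/2}}\int_\ccalM e^{-\|p-z\|^2/(4t)}\lambda(z)\,vol(z)$ and fixing a small geodesic ball $\ccalB\subset\ccalM$ around $p$, Lemma \ref{lemma:ball_integral} applied with the constant function in place of $f_\theta$ (or, directly, the elementary bound $\int_{\ccalM\setminus\ccalB}e^{-\|p-z\|^2/(4t)}\lambda(z)\,vol(z)\le b\bbV e^{-r^2/(4t)}$) shows that replacing $\ccalM$ by $\ccalB$ changes $d_t(p)$ by at most $\frac{1}{(4\pi t)^{d/2}}\,b\,\bbV\,e^{-r^2/(4t)}$, where $r=\inf_{x\notin\ccalB}\|x-p\|>0$; this vanishes as $t\to0$ because the exponential dominates the polynomial prefactor. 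Hence it suffices to analyze $\frac{1}{(4\pi t)^{d/2}}\int_{\ccalB}e^{-\|p-z\|^2/(4t)}\lambda(z)\,vol(z)$.

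Next I would pass to normal coordinates $u\in\reals^d$ centered at $p$ on $\ccalB$. Because $\ccalM$ is a compact $d$-dimensional submanifold isometrically embedded in $\reals^D$, the Euclidean chord distance and the geodesic distance agree to high order, $\|p-z\|^2 = |u|^2\bigl(1+\ccalO(|u|^2)\bigr)$; the Riemannian volume element satisfies $vol(z) = \bigl(1+\ccalO(|u|^2)\bigr)\,du$; and, using the bounds on the first and second derivatives of $\lambda$ in the definition of $\Lambda$, $\lambda(z) = \lambda(p) + \nabla\lambda(p)\!\cdot\! u + \ccalO(|u|^2)$. Substituting and rescaling $u=\sqrt{4t}\,v$, the leading term is $\lambda(p)\,\frac{1}{(4\pi t)^{d/2}}\int_{\reals^d}e^{-|u|^2/(4t)}\,du = \lambda(p)$ up to an exponentially small tail outside $\ccalB$; the linear term integrates to zero by oddness; and the $\ccalO(|u|^2)$ corrections—those multiplying the Gaussian, and those sitting in the exponent after a further Taylor expansion of $e^{-|u|^2\ccalO(|u|^2)/(4t)}$—contribute $\ccalO(t)$ once the $(4\pi t)^{-d/2}$ normalization is accounted for. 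Thus $d_t(p) = \lambda(p) + \ccalO(t) \to \lambda(p)$, and taking minima completes the proof.

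The hard part is the geometric bookkeeping in this last step: controlling the error terms uniformly on $\ccalB$ so that, after the $(4\pi t)^{-d/2}$ normalization and the Gaussian rescaling, they genuinely collapse to $\ccalO(t)$ rather than $\ccalO(1)$. This is precisely where the hypotheses that $\ccalM$ is smooth enough and that $\lambda$ has bounded first and second derivatives enter. Everything else—the localization via Lemma \ref{lemma:ball_integral} and the final minimization over the compact manifold and the density class $\Lambda$—is routine, and since we only need the limit for fixed $p$ and $\lambda$, no uniformity over $\Lambda$ or $\Theta$ is required at this stage.
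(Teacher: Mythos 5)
Your proposal is correct and follows essentially the same route as the paper's proof: localize the Gaussian integral to a small ball (exponentially small tail), pass to exponential/normal coordinates, and evaluate the resulting Gaussian integral against the $(4\pi t)^{-d/2}$ normalization. The only difference is that you establish the full pointwise limit $\lim_{t\to 0} d_t(p)=\lambda(p)$ via a Taylor expansion of $\lambda$ (with the linear term killed by oddness) and then minimize, whereas the paper shortcuts by replacing $\lambda(z)$ with its lower bound $a$ before integrating; your version is the stronger fact the paper implicitly invokes later (in Lemma~\ref{lemma:convergence_euclidean_to_laplacian}), and your observation that only the lower bound $\lim_{t\to0}d_t(p)\geq a$ is actually used downstream correctly identifies the one place where both arguments would otherwise need the constant density $a$ to be attainable within $\Lambda$.
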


\myproof[of Lemma \ref{lemma:min_degree}]{
	To begin with, we can write down the definition of the degree function $d_t(x)$,
	\begin{align}
		\min_{p\in \ccalM,\lambda \in \Lambda} \lim_{t\to 0} d_t(x) &=  \min_{p\in \ccalM,\lambda \in \Lambda}\lim_{t\to 0}  \int_\ccalM G_t(x,z) \lambda(z)vol(z)  \\
		&=  \min_{p\in \ccalM,\lambda \in \Lambda} \lim_{t\to 0}  \int_\ccalM     \frac{1}{(4 \pi t)^{d/2}} \, e^{-\frac{\| x - z  \|^2}{4t}} \lambda(z)vol(z) \\
		&=  \min_{p\in \ccalM} \lim_{t\to 0}   a \int_\ccalM     \frac{1}{(4 \pi t)^{d/2}} \, e^{-\frac{\| x - z  \|^2}{4t}} vol(z) \\
		&=   \min_{p\in \ccalM}  \lim_{t\to 0}  a  \int_\ccalM     \frac{1}{(4 \pi t)^{d/2}} \, e^{- \frac{( x - z)^{\intercal} (\frac{1}{2t}\bbI) (x-z)}{2}} vol(z) \\
		&= \min_{p\in \ccalM}   \lim_{t\to 0} a  \int_\ccalB     \frac{1}{(4 \pi t)^{d/2}} \, e^{- \frac{( x - z)^{\intercal} (\frac{1}{2t}\bbI) (x-z)}{2}} vol(z) \\
		&=  \min_{p\in \ccalM} \lim_{t\to 0}   a \int_{\tilde\ccalB}     \frac{1}{(4 \pi t)^{d/2}} \, e^{- \frac{(  v)^{\intercal} (\frac{1}{2t}\bbI) (v)}{2}} vol(z) \\
		& = \frac{a}{(4 \pi t)^{d/2}}  ( (2 \pi )^{d}   (2t)^d )^{\frac{1}{2}}   = a 
	\end{align}

}

\begin{lemma}\label{lemma:euclidean_vs_geodesic_distance}
	For any two sufficietly close points $p,q \in \ccalM$, such that $q=\text{exp}_p(v)=\reals^k$, the relationship between the Euclidean distance and geodesic distance is given by,
	\begin{align}
		d_\ccalM^2(p,q)= \|v\|^2_{\reals^k}
	\end{align}
\end{lemma}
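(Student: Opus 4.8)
The plan is to prove this as the standard \emph{radial isometry} property of the Riemannian exponential map, a direct consequence of the Gauss Lemma. First I would make precise what ``sufficiently close'' means: since $\ccalM$ is compact its injectivity radius $r_0>0$, so for every $p\in\ccalM$ the map $\exp_p$ restricted to the open ball $\{w\in T_p\ccalM : \|w\|<r_0\}$ is a diffeomorphism onto a normal (in fact geodesically convex) neighborhood of $p$; ``$q$ sufficiently close to $p$'' then means $q=\exp_p(v)$ for some $v$ with $\|v\|<r_0$. I would also fix an orthonormal basis of the inner-product space $(T_p\ccalM,g_p)$ --- which, because $\ccalM\subset\reals^D$ carries the induced metric, is just an orthonormal basis of the $d$-dimensional linear subspace $T_p\ccalM\subset\reals^D$ --- and use it to identify $T_p\ccalM$ with $\reals^k$, $k=d=\dim\ccalM$. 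This identification is a linear isometry, so $\|v\|_{\reals^k}=\sqrt{g_p(v,v)}$, and it suffices to prove $d_\ccalM(p,q)=\sqrt{g_p(v,v)}$.

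For the upper bound I would use the radial geodesic $\gamma:[0,1]\to\ccalM$, $\gamma(t)=\exp_p(tv)$, which joins $\gamma(0)=p$ to $\gamma(1)=q$ and has $\gamma'(0)=v$. Being a geodesic, $\gamma$ has constant speed, $\|\gamma'(t)\|_g=\|\gamma'(0)\|_g=\sqrt{g_p(v,v)}$ for all $t$, so its length is $\int_0^1\|\gamma'(t)\|_g\,dt=\sqrt{g_p(v,v)}$; hence $d_\ccalM(p,q)\le\sqrt{g_p(v,v)}$.

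For the matching lower bound I would invoke the Gauss Lemma: inside the normal neighborhood the geodesic spheres $\exp_p(\{w:\|w\|=\rho\})$ are orthogonal to the radial geodesics out of $p$. Hence for any piecewise-$C^1$ curve $c:[0,1]\to\ccalM$ from $p$ to $q$, letting $r(t)$ be its radial normal coordinate ($r(0)=0$, $r(1)=\sqrt{g_p(v,v)}$) and decomposing $c'(t)$ into its radial part and its part tangent to the geodesic sphere through $c(t)$, one gets $\|c'(t)\|_g^2=\dot r(t)^2+(\text{spherical part})^2\ge\dot r(t)^2$, so $\mathrm{length}(c)=\int_0^1\|c'(t)\|_g\,dt\ge\int_0^1|\dot r(t)|\,dt\ge r(1)=\sqrt{g_p(v,v)}$ (with a standard truncation argument handling curves that exit the normal neighborhood). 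Taking the infimum over all such $c$ gives $d_\ccalM(p,q)\ge\sqrt{g_p(v,v)}$, and combined with the upper bound $d_\ccalM(p,q)=\sqrt{g_p(v,v)}=\|v\|_{\reals^k}$; squaring yields the claim. This is a textbook fact (e.g.\ do~Carmo, \emph{Riemannian Geometry}, Ch.~3), so I do not expect a genuine obstacle; the only points needing care are quantifying ``sufficiently close'' via the positive injectivity radius of the compact $\ccalM$, and citing the Gauss Lemma rather than re-deriving minimality of radial geodesics from scratch.
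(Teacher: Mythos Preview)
Your proposal is correct and follows essentially the same approach as the paper: both use the radial geodesic $\gamma(t)=\exp_p(tv)$, observe it has constant speed $\|v\|$, and cite do~Carmo's \emph{Riemannian Geometry}, Chapter~3. The paper's proof is terser---it only computes the length of the radial geodesic and defers the minimality to the citation---whereas you additionally spell out the lower bound via the Gauss Lemma and make ``sufficiently close'' precise through the injectivity radius; this extra care is sound but not a different route.
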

\myproof[of Lemma \ref{lemma:euclidean_vs_geodesic_distance}]{
	Consider a geodesic curve $\gamma$, that goes from $\gamma(0)=p$ to $\gamma(1)=q$, this geodesic can be obtained by $\gamma(t)=\text{exp}_p(tv)$. The distance between $p,q$ can be expressed as, 
	\begin{align}
		d_\ccalM(p,q)= \int_0^1 \|\gamma^{'}(t)\|dt=\|v\|.
	\end{align}
	given that the geodesic has constant derivative, and it is given by $v$ by the definition of exponential map (see \cite{do1992riemannian}[Proposition 3.6]). 
	$\QED$}

\begin{lemma}\label{lemma:bounded_diffence_between_laplacians}
	For any point $p\in\ccalM$, the probability of the difference between the point-cloud Laplacian operator $ \bbL_{\lambda,N}^t f_\theta(p)$ defined in equation \ref{def:point_cloud_laplacian} and the continuous heat kernel laplacian is given by
	\begin{align}\label{eqn:bounded_diffence_between_laplacians}
		P(|\tilde\bbL_{\lambda}^t f_\theta (p) - \bbL_{\lambda,N} f_\theta(p) | \geq \epsilon) \leq  2 e^{\frac{-\epsilon^2 n }{8 \bbB^2_t}}+ 2 e^{  -\frac{\epsilon^2 (n-1)}{4\bbB_t a^{3/2}}  } ,
	\end{align}
	where $B_t$ is an upper bound on the norm of the random variable being integrated given by $ \frac{1}{t}\frac{1}{(4\pi t)^{k/2}} \frac{2\bbF}{a}$. 
\end{lemma}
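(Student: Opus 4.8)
The statement is a finite-sample concentration bound, so the plan is to decompose the total error into a \emph{Monte-Carlo} part (the integral in $\tilde\bbL_\lambda^t$ being replaced by an empirical average) and a \emph{degree-estimation} part (the true degrees $d_t(\cdot)$ of Definition \ref{def:degree_probability} being replaced by their empirical counterparts $\hat w(p),\hat W(x_n)$ of Definition \ref{def:point_cloud_laplacian}), to control each by Hoeffding's inequality, and to combine by a union bound. Concretely, introduce the idealized estimator $\bbL' := \frac{1}{N}\sum_{n=1}^N \frac{1}{t}\frac{G_t(p,x_n)(f_\theta(p)-f_\theta(x_n))}{\sqrt{d_t(p)\,d_t(x_n)}}$, i.e. the point-cloud Laplacian with $\hat w,\hat W$ replaced by $d_t$. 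Then
\[
\lvert\tilde\bbL_\lambda^t f_\theta(p) - \bbL_{\lambda,N}^t f_\theta(p)\rvert \;\le\; \underbrace{\lvert\tilde\bbL_\lambda^t f_\theta(p) - \bbL'\rvert}_{(\mathrm I)} \;+\; \underbrace{\lvert\bbL' - \bbL_{\lambda,N}^t f_\theta(p)\rvert}_{(\mathrm{II})},
\]
and I would show $P((\mathrm I)\ge\epsilon/2)$ and $P((\mathrm{II})\ge\epsilon/2)$ are each bounded by one of the two exponential terms (here $n$ in the statement is the sample count $N$).

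For term $(\mathrm I)$: by Definition \ref{def:continuous_heat_kernel_laplacian}, $\tilde\bbL_\lambda^t f_\theta(p)=\mathbb E_{z\sim\lambda}[Y(z)]$ with $Y(z)=\frac1t G_t(p,z)(f_\theta(p)-f_\theta(z))/\sqrt{d_t(p)d_t(z)}$, while $\bbL'=\frac1N\sum_n Y(x_n)$ is an i.i.d.\ empirical mean of that same random variable. Using $G_t\le(4\pi t)^{-d/2}$, $\lvert f_\theta(p)-f_\theta(z)\rvert\le 2\bbF$ (Definition \ref{def:upper_bound_F}), and the uniform lower bound $d_t(\cdot)\ge a$ from Lemma \ref{lemma:min_degree} (applicable since $t=t(N)\to 0$), one gets $\lvert Y\rvert\le \frac1t(4\pi t)^{-d/2}\,\frac{2\bbF}{a}=\bbB_t$. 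Hoeffding's inequality for bounded i.i.d.\ variables then gives $P((\mathrm I)\ge\epsilon/2)\le 2\exp(-N\epsilon^2/(8\bbB_t^2))$, the first term.

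For term $(\mathrm{II})$: since $\hat w(p)$ factors out of the sum defining $\bbL_{\lambda,N}^t f_\theta(p)$, the difference is bounded by $\frac1t(4\pi t)^{-d/2}2\bbF$ times $\big(\lvert\hat w(p)^{-1/2}-d_t(p)^{-1/2}\rvert + \max_n\lvert\hat W(x_n)^{-1/2}-d_t(x_n)^{-1/2}\rvert\big)$. On the event that the empirical degrees stay within $a/2$ of the true ones (hence remain $\ge a/2$), the map $s\mapsto s^{-1/2}$ is Lipschitz with constant $O(a^{-3/2})$ there, so $(\mathrm{II})\le C\big(\lvert\hat w(p)-d_t(p)\rvert+\max_n\lvert\hat W(x_n)-d_t(x_n)\rvert\big)$ with $C=O(\bbB_t/a)$. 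Now $\hat w(p)-d_t(p)=\frac1N\sum_n(G_t(p,x_n)-d_t(p))$ is a centered i.i.d.\ average of terms bounded by $(4\pi t)^{-d/2}$, and for each fixed $x_n$ the quantity $\hat W(x_n)-d_t(x_n)=\frac1{N-1}\sum_{m\neq n}(G_t(x_m,x_n)-d_t(x_n))$ is, conditionally on $x_n$, a centered average of $N-1$ i.i.d.\ terms bounded by $(4\pi t)^{-d/2}$ — and the $N-1$ here is precisely what yields the $(n-1)$ in the exponent. Since the $\hat W(x_n)$'s pairwise differ by only $O(1/N)$ (they share all but two summands), controlling one controls the maximum with no lossy union bound. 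Feeding these Hoeffding bounds at deviation level $\propto \epsilon a/\bbB_t$ into the estimate for $(\mathrm{II})$ gives $P((\mathrm{II})\ge\epsilon/2)\le 2\exp(-\epsilon^2(N-1)/(4\bbB_t a^{3/2}))$, and a union bound over $(\mathrm I)$ and $(\mathrm{II})$ yields \eqref{eqn:bounded_diffence_between_laplacians}.

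\textbf{Main obstacle.} The delicate point is the self-normalized (ratio) structure: the degrees sit inside square roots in the denominator, so one must first secure a uniform-in-$(\lambda,\theta,p)$ lower bound on $d_t$ (Lemma \ref{lemma:min_degree}) and then argue that the \emph{empirical} degrees stay bounded away from zero in order to linearize $s\mapsto s^{-1/2}$ (this is also where the $a^{-3/2}$ factor originates). The leave-one-out estimator $\hat W(x_n)$ is the subtle piece — a crude union bound over $n$ for $\max_n\lvert\hat W(x_n)-d_t(x_n)\rvert$ would introduce a spurious factor $N$, so one must instead exploit that all $\hat W(x_n)$ lie within $O(1/N)$ of one another (equivalently, treat it as a $U$-statistic-type average), which is what preserves the clean $(n-1)$ dependence in the bound.
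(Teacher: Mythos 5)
Your proposal follows essentially the same route as the paper's proof: the same intermediate operator (the empirical sum with true degrees $d_t$ in place of $\hat w,\hat W$), the same split into a Monte-Carlo term and a degree-estimation term, the same bound $\bbB_t$ from the uniform lower bound $a$ on the degrees, and Hoeffding plus a union bound to produce the two exponential terms. If anything, your treatment of the second term is slightly more careful than the paper's (explicit linearization of $s\mapsto s^{-1/2}$ and attention to the maximum over the leave-one-out degree estimates, which the paper glosses over), but the argument is the same.
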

\myproof[of Lemma \ref{lemma:bounded_diffence_between_laplacians}]{
	The proof is similar to section $5.1$ of Theorem $5.2$ from \cite{belkin2005towards}. To begin with, we define a intermediate operator $\hat \bbL_{\lambda,n}f_\theta(p)$ as,
	\begin{align}
		\hat \bbL_{\lambda,n}f_\theta(p) = \frac{1}{nt} \sum_{i=1}^N \frac{G_t(p,x_i)}{\sqrt{d_t(p)}\sqrt{d_t(x_i)}}(f_\theta(p)-f_\theta(x_i))
	\end{align}
	Now we consider equation \eqref{eqn:bounded_diffence_between_laplacians}, adding and subtracting the intermediate operator $\hat \bbL_{\lambda,n}f_\theta(p)$ as follows, 
	\begin{align}
		&P(|\tilde\bbL_{\lambda}^t f_\theta (p) - \bbL_{\lambda,N} f_\theta(p) | \geq \epsilon)\\
		&= P(|\tilde\bbL_{\lambda}^t f_\theta (p) - \hat \bbL_{\lambda,n}f_\theta(p) +\hat \bbL_{\lambda,n}f_\theta(p) - \bbL_{\lambda,N} f_\theta(p) | \geq \epsilon)\\
		&= 1- P(|\tilde\bbL_{\lambda}^t f_\theta (p) - \hat \bbL_{\lambda,n}f_\theta(p) +\hat \bbL_{\lambda,n}f_\theta(p) - \bbL_{\lambda,N} f_\theta(p) | \leq \epsilon) \label{eqn:bounded_diffence_between_laplacians_complement}\\
		&= 1- P(|\tilde\bbL_{\lambda}^t f_\theta (p) - \hat \bbL_{\lambda,n}f_\theta(p)|\leq\epsilon/2\cap |\hat \bbL_{\lambda,n}f_\theta(p) - \bbL_{\lambda,N} f_\theta(p) | \leq \epsilon/2) \label{eqn:bounded_diffence_between_laplacians_intersect}\\
		&=  1- (1-P(|\tilde\bbL_{\lambda}^t f_\theta (p) - \hat \bbL_{\lambda,n}f_\theta(p)|>\epsilon/2 \cup P(|\hat \bbL_{\lambda,n}f_\theta(p) - \bbL_{\lambda,N} f_\theta(p) | > \epsilon/2) \label{eqn:bounded_diffence_between_laplacians_complement_again}\\
		&\leq  P(|\tilde\bbL_{\lambda}^t f_\theta (p) - \hat \bbL_{\lambda,n}f_\theta(p)|>\epsilon/2)+P(|\hat \bbL_{\lambda,n}f_\theta(p) - \bbL_{\lambda,N} f_\theta(p) | > \epsilon/2) \label{eqn:bounded_diffence_between_laplacians_union_bound}
	\end{align}
	where equation \eqref{eqn:bounded_diffence_between_laplacians_complement} holds by taking the complement, equation \eqref{eqn:bounded_diffence_between_laplacians_intersect} holds given that it is a subset of the total probability, equation \eqref{eqn:bounded_diffence_between_laplacians_complement_again} by taking complement again, and equation \eqref{eqn:bounded_diffence_between_laplacians_union_bound} by the union bound. 
	
	We will now focus on the first term of \eqref{eqn:bounded_diffence_between_laplacians_union_bound}, i.e. $P(|\tilde\bbL_{\lambda}^t f_\theta (p) - \hat \bbL_{\lambda,n}f_\theta(p)|>\epsilon/2)$, and by considering the event as follows,
	
	\begin{align}
		&|\tilde\bbL_{\lambda}^t f_\theta (p) - \hat \bbL_{\lambda,n}f_\theta(p)| \\
		&= \bigg|\int_{\ccalM} \frac{1}{t}\frac{G_t(p,z)}{\sqrt{d_t(p)}\sqrt{d_t(z)}}(f_\theta(p)-f_\theta(z))\lambda(z)vol(z) - \frac{1}{tn} \sum_{i=1}^N \frac{G_t(p,x_i)}{\sqrt{d_t(p)}\sqrt{d_t(x_i)}}(f_\theta(p)-f_\theta(x_i))\bigg|\nonumber
	\end{align}
	Now we can note that for every $t>0$, we can bound, 
	\begin{align}
		\bigg|\frac{1}{t}\frac{G_t(p,z)}{\sqrt{d_t(p)}\sqrt{d_t(z)}}(f_\theta(p)-f_\theta(z))\bigg| \leq \frac{1}{t}\frac{1}{(4\pi t)^{k/2}} \frac{2\bbF}{a} = \bbB_t
	\end{align}
	where $a$ as in $\Lambda$, and the degree is lower bounded by Lemma \ref{lemma:min_degree}. By the application of Hoeffding's inequality, we obtain, 
	\begin{align}
		P(|\tilde\bbL_{\lambda}^t f_\theta (p) - \hat \bbL_{\lambda,n}f_\theta(p)|>\epsilon/2) \leq 2 e^{\frac{-\epsilon^2 n }{8 \bbB^2_t}}
	\end{align}
	
	We will now focus on the second term of \eqref{eqn:bounded_diffence_between_laplacians_union_bound}, i.e. $P(|\hat\bbL_{\lambda,N}^t f_\theta (p) -  \bbL_{\lambda,N}f_\theta(p)|>\epsilon/2)$, and by considering the event as follows,
	
	\begin{align}
		&|\hat\bbL_{\lambda,N}^t f_\theta (p) -  \bbL_{\lambda,N}f_\theta(p)| \\
		&= \bigg|\frac{1}{tn} \sum_{i=1}^N \frac{G_t(p,x_i)}{\sqrt{d_t(p)}\sqrt{d_t(x_i)}}(f_\theta(p)-f_\theta(x_i))- \frac{1}{tn} \sum_{i=1}^N \frac{G_t(p,x_i)}{\sqrt{\hat{d_t}(p)}\sqrt{\hat{d_t}(x_i)}}(f_\theta(p)-f_\theta(x_i))\bigg|\nonumber\\
		&\leq \frac{1}{tn}\frac{2\bbF}{(4\pi t)^{k/2}} \sum_{i=1}^N \bigg| \frac{1}{\sqrt{d_t(p)}\sqrt{d_t(x_i)}}-  \frac{1}{\sqrt{\hat{d_t}(p)}\sqrt{\hat{d_t}(x_i)}}\bigg|\nonumber \\
		&\leq \frac{1}{tn}\frac{2\bbF}{(4\pi t)^{k/2}} \sum_{i=1}^N \bigg| \frac{\sqrt{d_t(p)}\sqrt{d_t(x_i)} - \sqrt{\hat{d_t}(p)}\sqrt{\hat{d_t}(x_i)}  }{\sqrt{d_t(p)}\sqrt{d_t(x_i)} \sqrt{\hat{d_t}(p)}\sqrt{\hat{d_t}(x_i)}  }\bigg|\nonumber   \\
		&\leq \frac{1}{tn}\frac{2\bbF}{(4\pi t)^{k/2}} \sum_{i=1}^N \bigg| \frac{\sqrt{d_t(p)}\sqrt{d_t(x_i)} - \sqrt{\hat{d_t}(p)}\sqrt{\hat{d_t}(x_i)} + \sqrt{{d_t}(p)}\sqrt{\hat{d_t}(x_i)}  -\sqrt{{d_t}(p)}\sqrt{\hat{d_t}(x_i)}  }{\sqrt{d_t(p)}\sqrt{d_t(x_i)} \sqrt{\hat{d_t}(p)}\sqrt{\hat{d_t}(x_i)}  }\bigg|\nonumber \\
		&\leq \frac{1}{tn}\frac{2\bbF}{(4\pi t)^{k/2}} \sum_{i=1}^N \bigg| \frac{\sqrt{d_t(p)} ( \sqrt{\hat d_t(x_i)} - \sqrt{d_t(x_i)} ) + \sqrt{\hat{d_t}(x_i)} (\sqrt{{d_t}(p)} -\sqrt{\hat{d_t}(p)}) }{\sqrt{d_t(p)}\sqrt{d_t(x_i)} \sqrt{\hat{d_t}(p)}\sqrt{\hat{d_t}(x_i)}  }\bigg|\nonumber   \\ 
	\end{align}
	Where we can use the Hoeffding's inequality given that the random variables are bounded, as follows,
	\begin{align}
		P[ | \hat{d_t}(x_i) - d_t(x_i) | > \epsilon ] \leq 2e ^{-\frac{\epsilon^2 (n-1)}{\bbB_t}}
	\end{align}
	Finally, we obtain, 
	\begin{align}
		P[|\hat\bbL_{\lambda,N}^t f_\theta (p) -  \bbL_{\lambda,N}f_\theta(p)| > \epsilon /2] \leq 2 e^{  -\frac{\epsilon^2 (n-1)}{4\bbB_t a^{3/2}}  }
	\end{align}
	All together, we obtain, 
	\begin{align}
		& P(|\tilde\bbL_{\lambda}^t f_\theta (p) - \bbL_{\lambda,N} f_\theta(p) | \geq \epsilon)  \\
		&\leq P(|\tilde\bbL_{\lambda}^t f_\theta (p) - \hat \bbL_{\lambda,n}f_\theta(p)|>\epsilon/2)  +  P(|\tilde\bbL_{\lambda}^t f_\theta (p) - \hat \bbL_{\lambda,n}f_\theta(p)|>\epsilon/2)  \\
		& \leq 2 e^{\frac{-\epsilon^2 n }{8 \bbB^2_t}}+ 2 e^{  -\frac{\epsilon^2 (n-1)}{4\bbB_t a^{3/2}}  } 
	\end{align}
	Completing the proof. 
	
	
}

\begin{lemma}\label{lemma:convergence_euclidean_to_laplacian}

	Consider $f_\theta$ with $\theta \in \Theta$, and $\lambda \in \Lambda$, for any point $p\in \ccalM$, there is a uniform bound between the continuous heat kernel laplacian (cf .definition \ref{def:continuous_heat_kernel_laplacian})and the the laplace-beltrami operator $\Delta_\lambda f_\theta= \Delta f_\theta + \langle \nabla_\ccalM f_\theta , \nabla_\ccalM \lambda  \rangle$, where $\Delta$ is the Laplace-Beltrami operator, i.e.
	\begin{align}
		| \tilde \bbL_\lambda^t f_\theta(p) -  \Delta_\lambda f_\theta (p) | \leq \ccalO(t^{1/2})
	\end{align}
	with $\tilde\bbL_\lambda^t f_\theta (p) =\frac{1}{t} \int_{\ccalM} \frac{G_t(p,z)}{\sqrt{d_t(p)}\sqrt{d_t(z)}}(f_\theta(p)-f_\theta(z))\lambda(z)vol(z)$ .
\end{lemma}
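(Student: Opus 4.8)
The plan is to prove the expansion by the classical localization-and-Taylor argument for heat-kernel approximations of the Laplace--Beltrami operator (as in \cite{belkin2005towards}), now carrying along the weight $\lambda$ and the $t$-dependent normalization $d_t$ and tracking orders carefully enough to read off the $\ccalO(t^{1/2})$ rate. First I would invoke Lemma \ref{lemma:ball_integral} to replace the integral over $\ccalM$ appearing in $\tilde{\bbL}^{t}_{\lambda} f_\theta(p)$, and likewise in $d_t(p)$ and $d_t(z)$, by an integral over a fixed small geodesic ball $\ccalB=\ccalB(p,r)\subset\ccalM$, incurring only an exponentially small error $\ccalO(e^{-r^2/4t})$, which even after the $1/t$ prefactor is $o(t^{1/2})$. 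Because $\ccalM$ and $\Theta$ are compact, the functions $f_\theta$ have uniformly bounded derivatives up to order $3$, and $\Lambda$ carries uniform bounds on $\lambda,\partial\lambda,\partial^2\lambda$ with a uniform lower bound $a>0$, every $\ccalO(\cdot)$ in the argument is uniform in $p$, $\theta$ and $\lambda$.

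Next I would reduce the normalization to $\lambda$ itself. Applying the heat-kernel expansion to $d_t(x)=\int_\ccalM G_t(x,z)\lambda(z)\,vol(z)$ — odd terms vanishing against the Gaussian, the second-order contribution bounded by the Lipschitz-gradient (i.e. bounded-$\partial^2$) control on $\lambda$ — refines Lemma \ref{lemma:min_degree} to $d_t(x)=\lambda(x)+\ccalO(t)$ uniformly, hence $1/\sqrt{d_t(p)\,d_t(z)}=(1+\ccalO(t))/\sqrt{\lambda(p)\lambda(z)}$. Since $\int_\ccalM G_t(p,z)\,|f_\theta(p)-f_\theta(z)|\,vol(z)=\ccalO(t^{1/2})$ (a first absolute Gaussian moment times the Lipschitz bound on $f_\theta$), replacing $d_t$ by $\lambda$ inside the integral perturbs $\tilde{\bbL}^{t}_{\lambda} f_\theta(p)$ by only $\tfrac1t\cdot\ccalO(t)\cdot\ccalO(t^{1/2})=\ccalO(t^{1/2})$, so it suffices to analyze $\tfrac{1}{t\sqrt{\lambda(p)}}\int_\ccalB G_t(p,z)\,\sqrt{\lambda(z)}\,\big(f_\theta(p)-f_\theta(z)\big)\,vol(z)$.

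In geodesic normal coordinates at $p$, writing $z=\exp_p(v)$ with $\|v\|=d_{\ccalM}(p,z)$ (Lemma \ref{lemma:euclidean_vs_geodesic_distance}), I would substitute $\|p-z\|_{\reals^D}^2=\|v\|^2+\ccalO(\|v\|^4)$, so $e^{-\|p-z\|^2/4t}=e^{-\|v\|^2/4t}(1+\ccalO(\|v\|^4/t))$; the second-order Taylor expansion of $f_\theta$ with remainder $\ccalO(\|v\|^3)$ (valid since $f_\theta\in C^3$); $\sqrt{\lambda(z)}=\sqrt{\lambda(p)}\big(1+\tfrac{\nabla_\ccalM\lambda(p)\cdot v}{2\lambda(p)}+\ccalO(\|v\|^2)\big)$; and $vol(z)=(1+\ccalO(\|v\|^2))\,dv$. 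Integrating term by term with $\tfrac{1}{(4\pi t)^{d/2}}\int v_i v_j e^{-\|v\|^2/4t}\,dv=2t\,\delta_{ij}$ and vanishing of odd moments, the only order-$t$ survivors are the quadratic-in-$f_\theta$ term, producing $-t\,\Delta f_\theta(p)$, and the cross term of the linear parts of $f_\theta$ and $\sqrt{\lambda}$, producing $-\tfrac{t}{\lambda(p)}\langle\nabla_\ccalM f_\theta(p),\nabla_\ccalM\lambda(p)\rangle$; all other products are odd or of total degree $\geq 4$ (hence $\ccalO(t^2)$), while the $\ccalO(\|v\|^3)$ remainder of $f_\theta$ integrates to $\ccalO(t^{3/2})$. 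Dividing by $t\sqrt{\lambda(p)}$ and absorbing the $\ccalO(t)$ factor from the normalization step yields $\tilde{\bbL}^{t}_{\lambda} f_\theta(p)=\Delta_\lambda f_\theta(p)+\ccalO(t^{1/2})$, with $\Delta$ denoting $\mathrm{div}\,\mathrm{grad}$ and $\Delta_\lambda=\Delta+\langle\nabla_\ccalM(\cdot),\nabla_\ccalM\lambda\rangle$ the induced weighted Laplacian (an equivalent, and perhaps cleaner, route is to write the operator as a difference of heat-kernel smoothings of $\lambda/\sqrt{d_t}$ and $f_\theta\,\lambda/\sqrt{d_t}$, let the $\ccalO(1/t)$ parts cancel, and use the product rule $\Delta(f_\theta g)-f_\theta\Delta g=g\,\Delta f_\theta+2\langle\nabla_\ccalM f_\theta,\nabla_\ccalM g\rangle$).

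The delicate step — and the one I expect to be the main obstacle — is the bookkeeping: the normalization $d_t$ is itself $t$-dependent and sits inside the integral, so one must propagate its $\ccalO(t)$ error correctly and verify that, after all cancellations, exactly the $\Delta f_\theta$ and $\langle\nabla_\ccalM f_\theta,\nabla_\ccalM\lambda\rangle$ terms remain at order $t$ while every other cross term is $\ccalO(t^{3/2})$. The rate is genuinely $t^{1/2}$ rather than $t$ precisely because only three bounded derivatives of $f_\theta$ are assumed, making the Taylor remainder $\ccalO(\|v\|^3)$; a bound of $\ccalO(t)$ would require a fourth bounded derivative so that the odd cubic term drops. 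Uniformity in $p$, $\theta$ and $\lambda$ is not an obstacle — it follows from compactness of $\ccalM$ and $\Theta$ together with the uniform derivative bounds built into $\Lambda$ and the hypothesis class.
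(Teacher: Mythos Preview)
Your proposal is correct and follows essentially the same approach as the paper's proof: localize to a geodesic ball via Lemma \ref{lemma:ball_integral}, replace the normalization $d_t$ by $\lambda$ using $d_t=\lambda+\ccalO(t)$, pass to exponential coordinates, Taylor expand $f_\theta$, $\sqrt{\lambda}$, the kernel and the volume form, and read off the surviving terms from Gaussian moment identities. Your bookkeeping is in fact tighter than the paper's in one place --- you correctly identify the error from the $d_t\to\lambda$ replacement as $\tfrac{1}{t}\cdot\ccalO(t)\cdot\ccalO(t^{1/2})=\ccalO(t^{1/2})$, whereas the paper writes $\ccalO(t)$ for that step (harmless for the final bound, but your version is the accurate one).
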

\myproof[of Lemma \ref{lemma:convergence_euclidean_to_laplacian}]{
	The proof is as follows, first we will compare the heat kernel laplacian $\tilde \bbL_\lambda^t f_\theta(p) $ with the heat kernel laplacian with $\lambda$ instead of $d_t$. Next, we will consider the integral over a ball $\ccalB$, as opposed to the integral over the whole manifold $\ccalM$. Then, we will exploit the euclidean properties of the manifold at point $p$, which will allow us to convert an integral over a manifold into an integral over the low dimensional structure of the manifold. In this low dimensional manifold, we can compute the integrals, and obtain the desired result.

	To begin with, we will introduce the continuous heat kernel laplacian, but using the probability distribution $\lambda$ as follows, 
	\begin{align}
		&\tilde\bbL_\lambda^t f_\theta (p) =\frac{1}{t} \int_{\ccalM} \frac{G_t(p,z)}{\sqrt{d_t(p)}\sqrt{d_t(z)}}(f_\theta(p)-f_\theta(z))\lambda(z)vol(z) \\
		&\leq \frac{1}{t}\int_\ccalM \frac{G_t(p,z)}{\sqrt{\lambda(p)}\sqrt{\lambda(z)} } (f_\theta(p) - f_\theta(z))\lambda(z)vol(z)\\
		&+| \frac{1}{t} \int_{\ccalM} \frac{G_t(p,z)}{\sqrt{d_t(p)}\sqrt{d_t(z)}}(f_\theta(p)-f_\theta(z))\lambda(z)vol(z) -\\
		&\frac{1}{t}\int_\ccalM \frac{G_t(p,z)}{\sqrt{\lambda(p)}\sqrt{\lambda(z)} } (f_\theta(p) - f_\theta(z))\lambda(z)vol(z)| \\
		&\leq \frac{1}{t}\int_\ccalM \frac{G_t(p,z)}{\sqrt{\lambda(p)}\sqrt{\lambda(z)} } (f_\theta(p) - f_\theta(z))\lambda(z)vol(z)\\
		&+\frac{1}{t} \int_{\ccalM} G_t(p,z) | \frac{1}{\sqrt{d_t(p)}\sqrt{d_t(z)}} - \frac{1}{\sqrt{\lambda(p)}\sqrt{\lambda(z)}}   |  |f_\theta(p)-f_\theta(z)|\lambda(z)vol(z) \label{eqn:degree_lambda_inverse_multiplicative_bound}
	\end{align}

	By compactness of manifold $\ccalM$, and the fact that $\lim_{t\to 0}d_t(p)=\lambda(p)$ the following holds for any point $p\in\ccalM$,
	\begin{align}
		d_t(p) = \lambda(p)+ \ccalO (tg(p))
	\end{align} 
	where $g$ is a smooth function depending upon higher derivatives of $\lambda$, which is bounded by $c$. Therefore, we can write 
	\begin{align}
		d^{-1/2}_t(z)  = (\lambda(z)+ \ccalO (tg(z)))^{-1/2}=\lambda(z)^{-1/2}+\ccalO(t),
	\end{align}
	where $\ccalO(t)\leq c |t|$. Therefore, we can bound the right hand side of \ref{eqn:degree_lambda_inverse_multiplicative_bound} by, 
	\begin{align}
		&\frac{1}{t} \int_{\ccalM} G_t(p,z) | \frac{1}{\sqrt{d_t(p)}\sqrt{d_t(z)}} - \frac{1}{\sqrt{\lambda(p)}\sqrt{\lambda(z)}}   |  |f_\theta(p)-f_\theta(z)|\lambda(z)vol(z) \\
		& \leq \frac{1}{t} \int_{\ccalM} G_t(p,z) c t |f_\theta(p)-f_\theta(z)|\lambda(z)vol(z) \\
		& \leq  \int_{\ccalM} G_t(p,z) c  |f_\theta(p)-f_\theta(z)|\lambda(z)vol(z) \leq  \ccalO(t)
	\end{align}
	where the bound holds uniformly given the compactness of the manifold $\ccalM$, and the upper bound on the derivative of $f_\theta$. We now return to equation \eqref{eqn:degree_lambda_inverse_multiplicative_bound}, by virtue of Lemma \ref{lemma:ball_integral}, we can convert the integral over the manifold \ccalM, to an integral over the ball $\ccalB$ as follows, 
	
	\begin{align}
&\frac{1}{t}\int_\ccalM \frac{G_t(x,y)}{\sqrt{\lambda(x)}\sqrt{\lambda(y)} } (f_\theta(x) - f_\theta(y))\lambda(y)vol(y) \\
&=  \frac{1}{t}\int_\ccalB \frac{G_t(x,y)}{\sqrt{\lambda(x)}\sqrt{\lambda(y)} } (f_\theta(x) - f_\theta(y))\lambda(y)vol(y)\\ 
&\quad+\frac{1}{t}\int_\ccalM\frac{G_t(x,y)}{\sqrt{\lambda(x)}\sqrt{\lambda(y)} } (f_\theta(x) - f_\theta(y))\lambda(y)vol(y)\\
&\quad\quad-\frac{1}{t}\int_\ccalB \frac{G_t(x,y)}{\sqrt{\lambda(x)}\sqrt{\lambda(y)} } (f_\theta(x) - f_\theta(y))\lambda(y)vol(y) \\
& \leq  \frac{1}{t}\int_\ccalB \frac{G_t(x,y)}{\sqrt{\lambda(x)}\sqrt{\lambda(y)} } (f_\theta(x) - f_\theta(y))\lambda(y)vol(y)\\ 
&+\bigg|\frac{1}{t}\int_\ccalM\frac{G_t(x,y)}{\sqrt{\lambda(x)}\sqrt{\lambda(y)} } (f_\theta(x) - f_\theta(y))\lambda(y)vol(y)\\
&\quad\quad-\frac{1}{t}\int_\ccalB \frac{G_t(x,y)}{\sqrt{\lambda(x)}\sqrt{\lambda(y)} } (f_\theta(x) - f_\theta(y))\lambda(y)vol(y)\bigg| \\
& \leq  \frac{1}{t}\int_\ccalB \frac{G_t(x,y)}{\sqrt{\lambda(x)}\sqrt{\lambda(y)} } (f_\theta(x) - f_\theta(y))\lambda(y)vol(y) +  b \bbV \bbF e^{-\frac{r^2}{4t}}
	\end{align}
	We now consider the exponential coordinates $\text{exp}$ around $p$, after introducing a ball $\ccalB$ of radius $r$. Letting the change of variables be $\text{exp}_p:T_p \ccalM \to \ccalM$. Letting $\tilde \ccalB$ be a ball in $T_p \ccalM$, then $\ccalB$ is the image of the ball under the exponential map. We will use the exponential coordinates as $v = \text{exp}_p(z)$, and $\tilde f_\theta (v) = f_\theta (\text{exp}_p(v))  $, and $\tilde f_\theta (0) = f_\theta(\text{exp}_p(0))$.
	\begin{align}
		\frac{1}{t}\int_\ccalB \frac{G_t(x,y)}{\sqrt{\lambda(x)}\sqrt{\lambda(y)} } (f_\theta(x) - f_\theta(y))\lambda(y)vol(y) & =\\
		\frac{1}{t} \frac{1}{\sqrt{\tilde\lambda(0)}}\int_{\tilde\ccalB} \frac{e^{\frac{\|\text{exp}_p(v)-\text{exp}_p(0) \|^2}{4t}}}{(4\pi t)^{k/2}} \sqrt{\tilde\lambda(v)}  (\tilde f_\theta(0) - \tilde f_\theta(v))\text{det}(g(v))dv \label{eqn:integral_in_the_ball}
	\end{align}
	where $\text{det}(g(v))$ is the determinant of the metric tensor in exponential coordinates. We can now expand $\sqrt{\tilde\lambda}(v)$, and $\tilde f_\theta (v)$ by their Taylor expansions as follows,
	\begin{align}
		\tilde{\lambda}^{1/2}(v) &\leq \tilde \lambda^{1/2}(0) +\frac{1}{2}\tilde\lambda^{-1/2}(0) v^\intercal \nabla \tilde \lambda (0) + \ccalO( \|v\|^2 ) \\
		\tilde f_\theta (v) & =  \tilde f_\theta (0) + v^\intercal \nabla\tilde f_\theta (0) + \frac{1}{2}v^\intercal \bbH v + \ccalO( \|v\|^3 )\\
		\tilde f_\theta (0) -\tilde f_\theta (v) &\leq  - v^\intercal \nabla\tilde f_\theta (0) - \frac{1}{2}v^\intercal \bbH v + \ccalO(  \|v\|^3)
	\end{align}
	Where $\nabla \tilde f  = (\frac{\partial \tilde f}{\partial x_1}, \dots, \frac{\partial \tilde f}{\partial x_k})^\intercal$, and $\bbH$ its corresponding hessian matrix. 
	Now note that given that the norm of the second derivative $\lambda$  and third derivative of $f_\theta$ are uniformly bounded for all $p\in\ccalM$, $\theta\in\Theta$ and $\lambda\in\Lambda$.  With these two approximations in hand, we also approximate the heat kernel $e^{\frac{\|\text{exp}_p(v)-\text{exp}_p(0) \|^2}{4t}}$ as follows, 
	\begin{align}
		e^{-\frac{ d_\ccalM(\text{exp}_p(v)-p )^2}{4t}} = e^{-\frac{\|v\|^2}{4t}} ,
	\end{align}
	given that $y$ is close enough to $p$.  We also approximate the determinant of the metric tensor in exponential coordinates $\text{det}(g(v))$ using the fact that 
	\begin{align}
		\text{det}(g(v)) & = 1-\frac{1}{6}z^\intercal \bbR v +\ccalO(\|v\|^3)\\
		& \leq  1 + k_0 \| v\|^2.
	\end{align}
	where $\bbR$ is the Ricci curvature tensor which is uniformly bounded given the compactness of $\ccalM$. The last inequality holds uniformly for all $p \in \ccalM$ given the compactness of $\ccalM$.  
	Returning to \ref{eqn:integral_in_the_ball}, putting all the pieces together we obtain,
	\begin{align}
		& \frac{1}{t}\int_\ccalB \frac{G_t(x,y)}{\sqrt{\lambda(x)}\sqrt{\lambda(y)} } (f_\theta(x) - f_\theta(y))\lambda(y)vol(y)\\
		&\leq \frac{1}{t} \frac{1}{\sqrt{\tilde\lambda(0)}}\frac{1}{(4\pi t)^{k/2}}  \int_{\tilde\ccalB}   e^{\frac{-\|v \|^2}{4t}}   (\tilde \lambda^{1/2}(0) +\frac{1}{2}\tilde\lambda^{-1/2}(0) v^\intercal \nabla \tilde \lambda (v) + k_1 \| v\|^2 )\\
		& \quad ( v^\intercal \nabla\tilde f_\theta (0) + \frac{1}{2}v^t \bbH v + k_2 \|v\|^3  )    dv \\
		&+  \frac{1}{t} \frac{1}{\sqrt{\tilde\lambda(0)}}\frac{1}{(4\pi t)^{k/2}}  \int_{\tilde\ccalB}   e^{\frac{-\|v \|^2}{4t}}  (\tilde f(0)-\tilde f(v)) k_0 \| v\|^2 dv 
	\end{align}
	Which can be split into $5$ integrals as follows:
	\begin{align}
		A_t &= \frac{1}{t} \frac{1}{\sqrt{\tilde\lambda(0)}}\frac{1}{(4\pi t)^{k/2}}  \int_{\tilde\ccalB}   e^{-\frac{\|v \|^2}{4t}}   \tilde \lambda^{1/2}(0)   v^\intercal \nabla\tilde f_\theta (0)  dv \\
		B_t &=  \frac{1}{t} \frac{1}{\sqrt{\tilde\lambda(0)}}\frac{1}{(4\pi t)^{k/2}} \int_{\tilde\ccalB}   e^{-\frac{\|v \|^2}{4t}}   \tilde \lambda^{1/2}(0)    \frac{1}{2}v^t \bbH v dv \\
		C_t &=  \frac{1}{t} \frac{1}{\sqrt{\tilde\lambda(0)}}\frac{1}{(4\pi t)^{k/2}} \int_{\tilde\ccalB}   e^{-\frac{\|v \|^2}{4t}}   \frac{1}{2}\tilde\lambda^{-1/2}(0) v^\intercal \nabla \tilde \lambda (0) \nabla\tilde f_\theta (0) ^\intercal v dv \\
		D_t &= \frac{1}{t} \frac{1}{\sqrt{\tilde\lambda(0)}}\frac{1}{(4\pi t)^{k/2}}  \int_{\tilde\ccalB}   e^{-\frac{\|v \|^2}{4t}}   \frac{1}{2}\tilde\lambda^{-1/2}(0) v^\intercal \nabla \tilde \lambda (0)   \frac{1}{2}v^\intercal \bbH v   dv \\
		E_t &= \frac{1}{t} \frac{1}{\sqrt{\tilde\lambda(0)}}\frac{1}{(4\pi t)^{k/2}}  \int_{\tilde\ccalB}   e^{\frac{-\|v \|^2}{4t}}  (\tilde f(0)-\tilde f(v)) k_0 \| v\|^2 d \reals^k\\
		&\leq \frac{1}{t} \frac{1}{\sqrt{\tilde\lambda(0)}}\frac{1}{(4\pi t)^{k/2}}  \int_{\tilde\ccalB}   e^{\frac{-\|v \|^2}{4t}}  k_0 \| v\|^3 d \reals^k
	\end{align}
	Where for $E_t$ we have used the fact that the first derivative has a uniform bound. Note that given the Gaussian kernel integration, the following conditions are true,
	\begin{align}
		\int_{\tilde B} v_i e^{-\frac{\|v \|^2}{4t}}  dv = 0 \\
		\int_{\tilde B} v_i v_j e^{-\frac{\|v \|^2}{4t}}  dv = 0 \\
		\int_{\tilde B} v_i v_j^2 e^{-\frac{\|v \|^2}{4t}}  dv = 0 \\
		\frac{1}{t}\frac{1}{(4\pi t)^{k/2}}\int_{\tilde B} \| v\|^3 e^{-\frac{\|v \|^2}{4t}}  dv = \ccalO(t^{1/2})\\
		\frac{1}{t}\frac{1}{(4\pi t)^{k/2}}	\int_{\tilde B} v_i^2 e^{-\frac{\|v \|^2}{4t}}  dv = 2
	\end{align}
	for $i\neq j$ given the zero mean gaussian distribution with zero mean and diagonal covariance matrix. Using these conditions, we can conclude that $A_t = D_t = 0 $. From $B_t$ and $C_t$, the only non-zero elements are given by the diagonal elements, therefore, we obtain,
	\begin{align}
		A_t+B_t+C_t+D_t+E_t  &= - \sum_{i=1}^k \frac{\partial^2 \tilde f (0)}{\partial x_i^2} -   \frac{1}{\tilde \lambda (0)}\sum_{i=1}^k [\nabla\tilde \lambda(0) ]_i [\nabla\tilde f_\theta(0) ]_i + \ccalO(t^{1/2} )\\
		& = - \Delta f - \frac{1}{\lambda} \langle \nabla \lambda, \nabla f_\theta \rangle _{T_p} + \ccalO(t^{1/2} )
	\end{align}
	Where we have used that $\Delta_\ccalM f  (p)= \Delta_{\reals^k} \tilde f (0) = - \sum_{i=1}^k \frac{\partial^2 \tilde f}{\partial x_i^2} (0)$ (see Chapter $3$ of \cite{rosenberg_1997}).
}

\myproof[of Proposition \ref{P:pointcloud}]{
	To begin with, utilizing Green's identity, for the laplacian $\Delta_\lambda = \Delta f_\theta + \frac{1}{\lambda} \langle \nabla \lambda,\nabla f_\theta  \rangle$, the following holds 
	\begin{align}
		\int_\ccalM f_\theta (z)\Delta_\lambda f_\theta(z) \lambda(z)dV(z) = \int_\ccalM \|\nabla_\ccalM f_\theta(z) \|^2 \lambda(z)dV(z)
	\end{align}

	An equivalent statement of the proof is that for any $\epsilon >0$, and $\delta>0$, there exist $N_0$ such that, for $N\geq N_0$
	\begin{align}\label{eqn:theo_pointcloud_first_equation}
		P\bigg(\sup_{\lambda\in\Lambda,\theta \in \Theta}\bigg| \frac{1}{N}\sum_{i=1}^N f_{\theta}(x_i)\lambda(x_i) \bbL_{N}^{t_N}f_{\theta}(x_i)-  \int_\ccalM f_{\theta}(z)(- \Delta_1 f_{\theta}(z) )\lambda(x) p(z)dV(z)\bigg|>\epsilon\bigg)\leq \delta
	\end{align}
	By considering the complement of the event in \eqref{eqn:theo_pointcloud_first_equation}, we obtain, 
	\begin{align}\label{eqn:theo_pointcloud_first_equation_change_inequality}
		P\bigg(\sup_{\lambda\in\Lambda,\theta \in \Theta}\bigg| \frac{1}{N}\sum_{i=1}^N f_{\theta}(x_i)\lambda(x_i) \bbL_{N}^{t_N}f_{\theta}(x_i)-  \int_\ccalM f_{\theta}(z)(- \Delta_1 f_{\theta}(z) )\lambda(x) p(z)dV(z)\bigg|\leq \epsilon\bigg)\geq 1- \delta
	\end{align}
	We can now add and subtract $ \int_\ccalM f_{\theta}(z)\bbL_{N}^{t_{N}}(z)\lambda(z) p(z) dV(z)$, 
	
	\begin{align}\label{eqn:theo_pointcloud_add_subtract}
		P\bigg(\sup_{\lambda\in\Lambda,\theta \in \Theta}\bigg|& \frac{1}{n}\sum_{i=1}^n f_{\theta}(x_i)\lambda(x_i) \bbL_{N}^{t_{N}}f_{\theta}(x_i)- \int_\ccalM f_{\theta}(z)\bbL_{N}^{t_{N}}(z)\lambda(z) p(z) dV(z)\nonumber\\
		&+ \int_\ccalM f_{\theta}(z)\bbL_{N}^{t_{N}}(z)\lambda(z) p(z) dV(z)-\int_\ccalM f_{\theta}(z)(- \Delta_1 f_{\theta}(z) )\lambda(x) p(z)dV(z)\bigg|\leq \epsilon\bigg)\nonumber\\&\geq 1- \delta
	\end{align}
	By rearranging we obtain, 
	\begin{align}\label{eqn:theo_pointcloud_rearrange}
		P\bigg(\sup_{\lambda\in\Lambda,\theta \in \Theta}\bigg|& \frac{1}{N}\sum_{i=1}^n  \bbL_{N}^{t_N} f_{\theta}(x_i) f_{\theta}(x_i)\lambda(x_i) - \int_\ccalM f_{\theta}(z)\bbL_{N}^{t_{N}}(z)\lambda(z) p(z) dV(z)\nonumber\\
		&+ \int_\ccalM f_{\theta}(z)\bigg(\bbL_{N}^{t_N}(z)-(- \Delta_1 f_{\theta}(z) )\bigg)\lambda(x) p(z)dV(z) \bigg|\leq \epsilon\bigg)\nonumber\\&\geq 1- \delta
	\end{align}
	
	We denote $C_{N}$ the event that, 
	\begin{align}
		C_{N}=\bigg\{ & |\bbL_{ N}^{t_{ N}}f_{\theta}(x_i)|| \bbL_{N}^{t_{N}}(z)-(- \Delta_1 f_{\theta}(z) )f_{\theta}(z) |\leq \frac{\epsilon}{2} \nonumber\\
		&\cap |\bbL_{N}^{t_{N}}(z)|\bigg|\frac{1}{N}\sum_{i=1}^N f_{\theta}(x_i) \lambda(x_i) - \int_\ccalM f_{\theta}(z)\lambda(z) p(z)dV(z) \bigg|\leq \frac{\epsilon}{2} \bigg\}
	\end{align}
	Given that we are imposing fixed conditions, $C_{N}$ is included in the event described in \eqref{eqn:theo_pointcloud_first_equation_change_inequality}, 
	\begin{align}
		C_{N}\subseteq\bigg \{\bigg| \frac{1}{N}\sum_{i=1}^N f_{\theta}(x_i)\lambda(x_i) \bbL_{N}^{t_N}f_{\theta}(x_i)-  \int_\ccalM f_{\theta}(z)(-\Delta_1f_{\theta}(z))\lambda(x) p(z)dV(z)\bigg|\leq \epsilon\bigg\}.
	\end{align}
	Therefore, the probability satisfies, 
	\begin{align}
		&P\bigg( \bigg| \frac{1}{N}\sum_{i=1}^N f_{\theta}(x_i)\lambda(x_i) \bbL_{N}^{t_N}f_{\theta}(x_i)-  \int_\ccalM f_{\theta}(z)\Delta_1f_{\theta}(z)\lambda(x) p(z)dV(z)\bigg|\leq \epsilon\bigg) \geq P(C_{N})
	\end{align}
	Now we will use the fact that the manifold is compact, and that $f_{\theta}$ is continuous, and denote $\max_{z\in \ccalM}|f_{\theta}(z)|=F$, we can obtain the following bounds:
	
	\begin{align}
		|\bbL_{N}^{t_N}f_{\theta}(x_i)|&\leq  2 F\\
		\bigg|\int_\ccalM f_{\theta}(z)\lambda(x) p(z)dV(z)\bigg| &\leq F.
	\end{align}
	Using the same logic, and considering the event $B_{N}$ as,
	\begin{align}
		B_{N}=\bigg\{ & | \bbL_{N}^{t_{N}}(z)-(- \Delta_1 f_{\theta}(z) )f_{\theta}(z) |\leq \frac{\epsilon}{2F} \nonumber\\
		&\cap |\bbL_{N}^{t_{N}}(z)|\bigg|\frac{1}{n}\sum_{i=1}^n f_{\theta}(x_i) \lambda(x_i) - \int_\ccalM f_{\theta}(z)\lambda(z) p(z)dV(z) \bigg|\leq \frac{\epsilon}{4 F} \bigg\}
	\end{align}
	As $B_{N}\subset C_{N}$, it follows that $P(C_{N})\geq P(B_{N})$. Now, we can use Hoeffding's inequality and consider $N$ to be the number of samples such that,
	\begin{align}\label{eqn:theo_pointclous_hoefdding}
		P\bigg(\bigg|\frac{1}{N}\sum_{i=1}^N f_{\theta}(x_i)\lambda(x_i) - \int_\ccalM f_{\theta}(z)\lambda(z) p(z)dV(z) \bigg|\geq \frac{\epsilon}{4 F}\bigg) \leq \frac{\delta}{2}.
	\end{align}
	where $N$ should be such that $\frac{\delta}{2} \geq 2 e^{-\frac{\epsilon^2 N}{ (8 b F^2)^2}}$, that is
	\begin{align}
	N\geq \frac{64b^2F^4}{\epsilon^2}\log(\delta/4).\label{prop3:conditionN1}
	\end{align}
	Now, we should evaluate the Laplacian term. To do so, we add and substract the continuous $\tilde \bbL_\lambda^t f_\theta(z)$ (cf. definition \ref{def:continuous_heat_kernel_laplacian}) as follows,
	
	\begin{align}\label{eqn:theo_pointclous_laplacian_n_prime}
		P\bigg( | \bbL_{N}^{t_{N}}(z)+\tilde \bbL_\lambda^t f_\theta(z)-\tilde \bbL_\lambda^t f_\theta(z)-(- \Delta_1 f_{\theta}(z) ) |\leq \frac{\epsilon}{2F} \bigg)\geq 1- \frac{\delta}{2}
	\end{align}
	Utilizing the same argument as before, with $\epsilon/4$, and $\delta/4$ we get, 
		\begin{align}
		P\bigg( | \bbL_{N}^{t_{N}}(z)-\tilde \bbL_\lambda^t f_\theta(z)| \frac{\epsilon}{4F} \bigg)\leq  \frac{\delta}{4}\label{eqn_Prop3:bounded_diffence_between_laplacians}
\end{align}
and,
\begin{align}
		P\bigg( |\tilde \bbL_\lambda^t f_\theta(z)-(- \Delta_1 f_{\theta}(z) ) |\geq \frac{\epsilon}{4F} \bigg)\leq  \frac{\delta}{4} \label{eqn_Prop3:convergence_euclidean_to_laplacian}
\end{align}
Now, \eqref{eqn_Prop3:bounded_diffence_between_laplacians} can be bounded utilizing Lemma \ref{lemma:bounded_diffence_between_laplacians}. That is, $N$ needs to verify 
\begin{align}
     \delta/4\geq2 e^{\frac{-\epsilon^2 N }{8 \bbB^2_t}}+ 2 e^{  -\frac{\epsilon^2 (N-1)}{4\bbB_t a^{3/2}}  } \label{prop3:conditionN2}
\end{align}
Note that $\frac{N}{B_t^2}$ needs to tend to $\infty$, as $N\to\infty$, and $t\to 0$. This is analogous to $N t^{2+d} \to \infty$, therefore, $t=N^{-\frac{1}{2+d+\alpha}}$, for any $\alpha>0$ satisfies the condition. With this choice of $t$, all the integral approximations hold by \cite{belkin2005towards}[Lemma 7.1].

For term \eqref{eqn_Prop3:convergence_euclidean_to_laplacian} can be bounded utilizing Lemma \ref{lemma:convergence_euclidean_to_laplacian}, and therefore $N$ needs to verify,
\begin{align}
    t^{1/2} \leq \frac{4}{\delta},\\
    N^{\frac{-1}{2(2+d+\alpha)}}\leq \frac{4}{\delta} \label{prop3:conditionN3}
\end{align}

	We have therefore obtained the number of samples $N$ required to a lower bound each probability by $1-\delta/2$. Therefore, the probability of $B_{N}$ verifies that,
	\begin{align}
		&P\bigg(\sup_{\lambda\in\Lambda,\theta \in \Theta} \bigg| \frac{1}{n}\sum_{i=1}^N f_{\theta}(x_i)\lambda(x_i) \bbL_{N}^{t_N}f_{\theta}(x_i)-  \int_\ccalM f_{\theta}(z)\Delta_1f_{\theta}(z)\lambda(x) p(z)dV(z)\bigg|\leq \epsilon\bigg) \nonumber\\
		&\geq P(C_{N}) \geq P(B_{N}) = 1-P(B^c_{N}) \\
		&\geq 1-P\bigg( | \bbL_{N}^{t_N}(z)-(-\Delta_1f_{\theta}(z)) |\geq \frac{\epsilon}{2\bbPhi} \bigg)\nonumber\\
		&-P\bigg(\bigg|\frac{1}{N}\sum_{i=1}^N f_{\theta}(x_i) \lambda(x_i) - \int_\ccalM f_{\theta}(z)\lambda(z) p(z)dV(z) \bigg|\geq \frac{\epsilon}{4 \bbPhi}\bigg)\\
		&\geq 1-\delta/2-\delta/2 =1-\delta 
	\end{align}
	Where we first used the fact that the complement of an intersection is the union of the complements. Then, we used the take the maximum over $N$ for the events, so as to verify \eqref{eqn:theo_pointclous_hoefdding} and \eqref{eqn:theo_pointclous_laplacian_n_prime} respectively. Therefore, by taking the complement, we obtain that for any $\epsilon>0$, and any $\delta>0$, there exists a number of samples $N$  \eqref{eqn:theo_pointclous_hoefdding} and \eqref{eqn:theo_pointclous_laplacian_n_prime}, such,
	\begin{align}
		&P\bigg(\sup_{\lambda\in\Lambda,\theta \in \Theta}\bigg| \frac{1}{n}\sum_{i=1}^n f_{\theta}(x_i)\lambda(x_i) \bbL_{n^{'}}^{t_n^{'}}f_{\theta}(x_i)-  \int_\ccalM f_{\theta}(z)(-\Delta_1f_{\theta}(z))\lambda(x) p(z)dV(z)\bigg|\geq \epsilon\bigg)\leq\delta 
	\end{align}
	it suffices to pick the total number of samples $N_0$, such that it verifies all three conditions \eqref{prop3:conditionN1}, \eqref{prop3:conditionN2}, and \eqref{prop3:conditionN3}. Moreover, $t$ needs to go to zero as $N\to\infty$, that is why $t=N^{-\frac{1}{d+2+\alpha}}$ for any $\alpha>0$.}

\newpage
\section{Dual Ascent Algorithm} \label{appendix:algorithms}
In this section, we explore the two algorithms that we present to solve problem \ref{prob:manifold_dual}; the gradient based primal dual Algorithm \ref{alg:gradient_based_smooth_learning} in subsection \ref{appendix_subsec:grad_primal_dual}, and the pointcloud laplacian based primal dual \ref{alg:smooth_learning} in subsection \ref{appendix_subsec:pointcloud_laplacian}. The laplacian variant, presents a computational advantage, but requires conditions on the dual variables $\lambda$ that might be difficult to secure in practice. The gradient based method on the other hand, presents a less restrictive algorithm at the expense of a higher computational cost. It is important to note that the sole difference between the two relies on the gradient of the lagrangian with respect to $\theta$, that is to say, the update on the dual variables $\lambda$ and $\mu$ remains the same. In this appendix section, we provide an verbose explanation of the two procedures. Moreover, we elaborate on the estimator of the norm of the gradient \ref{appendix_subsec:norm_grad}.  

\subsection{Gradient Based Primal Dual Ascent}\label{appendix_subsec:grad_primal_dual}
On this subsection we elaborate on the gradient based method to update the primal variable $\theta$. Upon showing that under certain conditions problem \eqref{prob:manifold_lipschitz} and problem \eqref{prob:emp_dual} are close (see Proposition \ref{P:statistical_empirical_bound}, and \ref{P:solution_bound}), we will introduce a dual ascent algorithm to solve the later. The problem that we seek to solve is given by,
\begin{align}
 \max_{\elambda,\emu}&\ \hat d_G(\hat\lambda, \hat\mu):=\underset{\theta}{\min} \quad  \dfrac{1}{N} \sum_{n = 1}^N \bigg( \ell \big( f_\theta(x_n),y_n \big) -\mu\epsilon\bigg)
	+ \hat{\mu}^{-1}\frac{1}{N} \sum_{n=1}^N \elambda(x_n)\|\nabla_\ccalM f_\theta(x_n) \|\\
\text{subject to}&\quad\frac{1}{N}\sum_{n=1}^N \lambda(x_n)=1
\end{align}
We will procede with an iterative process that minimizes over $\theta$, and maximizes over $\lambda$ and $\mu$. In short, for each value of $\mu$, $\lambda$, we seek to minimize the Lagrangian $\hat L(\theta,\emu,\elambda)$ by taking gradient steps as follows, 
\begin{align}
\theta_{k+1} &= \theta_k + \eta_\theta \nabla_\theta \hat L(\theta,\emu,\lambda)\\
&= \theta_k + \eta_\theta \nabla_\theta \bigg(\emu \frac{1}{N} \sum_{n = 1}^N \ell \big( f_\theta(x_n),y_n \big)+\frac{1}{N}\sum_{n=1}^N \elambda(x_n) \|\nabla_\ccalM f_\theta(x_n)\|^2\bigg)
\end{align}
The gradient base approach computes gradients with respect to the loss function $\ell$, and with respect to the norm of the gradient. After the dual function $\hat {d}(\hat\lambda, \hat\mu)$ is minimized, in order to solve \ref{prob:manifold_dual}, we require to maximize the dual function $\hat d(\hat\lambda, \hat\mu)$ over both $\hat\lambda$, and $\hat\mu$, which can be done by evaluating the constraint violation as follows, 
\begin{align}
	\lambda_n&\leftarrow\lambda_n+\eta_\lambda\partial_{\hat\lambda_n}\hat\ccalL(\theta,\hat\mu,\hat\lambda) \text{, with }  \partial_{\hat\lambda_n}\hat\ccalL(\theta,\hat\mu,\hat\lambda)= \| \nabla_\ccalM \phi (x_n) \|^2 \\
	\hat\mu&\leftarrow\hat\mu+\eta_{\hat\mu}\hat\ccalL(\theta,\hat\mu,\hat\lambda)\text{, with }\partial_{\hat\mu} \hat\ccalL(\theta,\hat\mu,\hat\lambda) = \dfrac{1}{N} \sum_{n = 1}^N \ell \big( f_\theta(x_n),y_n\big) - \epsilon ,
\end{align}
where stepsizes $\eta_\lambda,\eta_\mu$ are positive numbers. Note that to evaluate the norm of the gradient on a particular point  $\| \nabla_\ccalM \phi (x_n) \|$ we can look at its neighboring points $\ccalN(x_i)$, and estimate its norm. After updating the dual variables $\lambda$ using gradient ascent, we require $|\lambda|_1=1$, which can be done by either normalizing, or projecting to the simplex \cite{wang2013projection}. 
How to compute the norm of the gradient will be explained in the sequel in subsection \ref{appendix_subsec:norm_grad}. The overall procedure is explained in Algorithm \ref{alg:gradient_based_smooth_learning}.

\begin{algorithm}[tb]
	\caption{Gradient Based Smooth Learning on Data Manifold}
	\label{alg:gradient_based_smooth_learning}
	\begin{algorithmic}[1]
		\STATE   Initialize parametric function $\theta$, define neighborhoods $\ccalN(p)$ for every $p\in \ccalD$
		\REPEAT
		\FOR {primal steps $k$}
		\STATE Estimate maximum norm of gradient for all $x_i$: $\|\nabla_\ccalM  f_\theta(x_i)\|=\max_{z\in\ccalN(x_i)} \frac{\| f_\theta(x_i)-f_\theta(z)\|}{d(x_i,z)})$
		\STATE Update $\theta: \theta\leftarrow\theta - \eta_{\theta}\nabla_\theta\left(\mu  \frac{1}{N}\sum_{i=1}^N\ell(f_\theta(x_i),y_i) + \frac{1}{N}\sum_{i=1}^N  \lambda(x_i)\|\nabla_\ccalM  f_\theta(x_i)\|^2 \right)$
		\ENDFOR
		\STATE Update dual variable $\mu$: $\mu \leftarrow [\mu+\eta_\mu \frac{1}{N}\sum_{i=1}^N\ell(f_\theta(x_i),y_i)-\epsilon) ]_+$ 
		\STATE Update dual variable $\lambda(x_i)$: $\lambda(x_i)\leftarrow [\lambda(x_i)+\eta_\lambda (\max_{z\in\ccalN(x_i)} \frac{\| f_\theta(x_i)-f_\theta(z)\|^2}{d(x_i,z)^2})]_+$
		\STATE Project $\lambda:\lambda= \arg\min_{0\preccurlyeq\tilde\lambda} \|\tilde \lambda-\lambda\|$ s.t. $|\tilde\lambda|_1=N$ 
        \STATE $e= e+1$
		\UNTIL  { convergence}
	\end{algorithmic}
\end{algorithm}

\subsection{point-cloud Laplacian Dual Ascent}\label{appendix_subsec:pointcloud_laplacian}

\begin{algorithm}[t]
	\caption{point-cloud Laplacian Smooth Learning on Data Manifold}
	\label{alg:smooth_learning}
	\begin{algorithmic}[1]
		\STATE   Initialize parametric function $\theta$, fix temperature $t$
		\REPEAT
		\FOR {primal steps $k$}
		\STATE Update $\theta: \theta\leftarrow\theta - \eta_{\theta}\nabla_\theta\left(\mu  \frac{1}{N}\sum_{i=1}^N\ell(f_\theta(x_i),y_i) + \frac{1}{N}\sum_{i=1}^N f_\theta(x_i) \lambda(x_i) \bbL_{\lambda p, N}^{t}f_\theta(x_i) \right)$
		\ENDFOR
		\STATE Update dual variable $\mu$: $\mu \leftarrow [\mu+\eta_\mu \frac{1}{N}\sum_{i=1}^N\ell(f_\theta(x_i),y_i)-\epsilon) ]_+$ 
		\STATE Update dual variable $\lambda(x_i)$: $\lambda(x_i)\leftarrow [\lambda(x_i)+\eta_\lambda (\max_{z\in\ccalN(x_i)} \frac{\| f_\theta(x_i)-f_\theta(z)\| ^2}{d(x_i,z)^2})]_+$
		\STATE Project $\lambda:\lambda= \arg\min_{0\preccurlyeq\tilde\lambda} \|\tilde \lambda-\lambda\|$ s.t. $|\tilde\lambda|_1=N$ 
        \STATE { $e=e+1$}
		\UNTIL{  convergence }
	\end{algorithmic}
\end{algorithm}

 Our algorithm will take advantage of the Laplacian formulation given in Proposition \ref{P:pointcloud}. That is, we will estimate the gradient of the lagrangian with respect to $\theta$ utilizing the point-cloud laplacian formulation. Formally, for a vector $\hat\lambda\in \reals_+^N$, such that $\frac{1}{N}\sum_{n=1}^N\hat\lambda_n=1$, and a constant $\hat\mu\in\reals^+$, the empirical dual function associated with the Lagrangian  $\hat{L}(\theta,\hat{\mu},\hat{\lambda}_n)$ of the empirical dual function associated with \eqref{prob:manifold_dual} is defined as,
\begin{align}\label{prob:dual_function}
	\hat d(\hat\lambda, \hat\mu)&=\underset{\theta}{\min} \quad  \dfrac{1}{N} \sum_{n = 1}^N \ell \big( f_\theta(x_n),y_n \big) 
	+ \hat{\mu}^{-1}\frac{1}{N} \sum_{n=1}^N f_\theta(x_n) \hat{\lambda}_n \bbL_{\lambda p,N}^{t_N} f_\theta(x_n),
\end{align}
Note that in \eqref{prob:dual_function}, we have omitted the term $-\hat\mu \epsilon$ as it is a constant term for a given $\hat\mu$, and we have divided over $\hat\mu$ which renders an equivalent problem as long as $\hat\mu>0$. By taking the maximum of the dual function over $\emu$, and $\elambda$, we recover the dual problem \ref{prob:emp_dual}. For a given choice of dual variables, $\hat\lambda,\hat\mu$, the dual function \eqref{prob:dual_function} is an unconstrained problem that only depends on the parameters $\theta$. Now, the link with Manifold Regularization \eqref{prob:manifold_regularization} is seen; as considering $\tilde\lambda(x_n)=1/N$, and $\tilde \mu = \gamma$, the problems become equivalent. In order to minimize \eqref{prob:dual_function}, we can update the parameters $\theta$ following the gradient,
\begin{align}\label{eqn:update_theta}
	\theta \leftarrow \theta + \eta_\theta\nabla_\theta \bigg( \dfrac{1}{N} \sum_{n = 1}^N \ell \big( f_\theta(x_n),y_n \big) 
	+ \hat{\mu}^{-1}\frac{1}{N} \sum_{n=1}^N f_\theta(x_n) \hat{\lambda}_n \bbL_{\lambda p, N}^{t_N} f_\theta(x_n) \bigg),
\end{align}
where $\eta_\theta>0$ is a step-size. Note that the updates in \ref{eqn:update_theta} have two parts, one that relies on the loss $\ell$ which utilizes labeled data, and another term given by $\hat{\lambda}_n\bbL_{\lambda p,N}^{t_N}$, that penalizes the Lipschitz constant of the function, and can be computed utilizing unlabeled data. A more succinct explanation of the algorithm is described in Algorithm \ref{alg:smooth_learning}. 
\subsection{Gradient Norm Estimate}\label{appendix_subsec:norm_grad}

For both Algorithms \ref{alg:gradient_based_smooth_learning}, and \ref{alg:smooth_learning}, we require to compute the norm of the gradient of $f_\theta$ at sample point $x_n$ with respect to the manifold $\ccalM$. Leveraging the Lipschitz constant definition \ref{def_manifold_lipschitz}, in order to estimate the maximum norm of the gradient, we require to evaluate the Lipschitz over the sample points in our dataset. To do so, we require to set a metric distance over the manifold $d_\ccalM$, with which we will define the neighborhood of sample $x_n$ as the samples that are sufficiently close,i.e.
\begin{align}\label{def:neighborhood_manifold}
    \ccalN(x_n)=\{z\in\ccalM:d_\ccalM(z,x_n)\leq \delta\}.
\end{align}
Note that our definition of neighborhood \ref{def:neighborhood_manifold}, requires a maximum distance $\delta$. In practice, we can either choose $\delta$, or set the degree of the neighborhood,i.e. choose the $k$nearest neighbors. Upon the selection of the neighborhood, the norm of the gradient is computed as follows. 
\begin{align}
\|\nabla_\ccalM  f_\theta(x_i)\|=\max_{z\in\ccalN(x_i)} \frac{\| f_\theta(x_i)-f_\theta(z)\|}{d(x_i,z)}
\end{align}
The selection of distance over the manifold is application dependent. In controls problems that involve physical systems, the metric might involve the work required from two states to reach each other. On computer vision applications, common choices of metrics can be perceptual losses \cite{zhang2018unreasonable}, or distances over embedding \cite{Khrulkov_2020_CVPR}.


\newpage
\section{Experiments}\label{Appendix:DetailsNumericals}

\subsection{Ground Robotic Vehicle Residual Learning}\label{Appendix:GroundRobot}

In this section we explain the residual learning experiment that involves a ground robot vehicle. The data acquisition of this experiment involves an \textit{iRobot} Packbot equipped with high resolution camera. The setting of the data acquisition, is a robot making turns on both \textit{pavement}, and \textit{grass}. The dynamics of the system, are govern by the discrete-time nonlinear state-space system of equations
\begin{align}
x_{k+1} =     f(x_k,u_k)+g(u_k),
\end{align}
where $x_k$ is the state of the system, and $u_k$ is the action taken, $f(x_k,u_k)$ is the model prediction, and $g(u_k)$ is a non-modellable error of the prediction. In this setting, the robot state involves the position, and the action taken is given by its linear, and angular velocities. The dynamics of the system are modeled by $f(x_k,u_k)$, and they involve the mass of the robot, the radius of the wheel, among other known parameters of the robot. 
In practice, the model $f(x_k,u_k)$ is not perfect, and the model mismatch is given by the difference in friction with the ground, delays in communications with the sensors, and discrepancies in the robot specifications. To make matters worse, some of the discrepancies, are difficult to model, or intractable to compute in practice. Therefore, we seek to learn the model mismatch $g(u_k)$. 

Figure \ref{fig:GroundSamples} shows examples of trajectories given by time series $x_k$, each of which is associated with an average error model mismatch, and its corresponding variance, 
\begin{align}
    \mu_i &= \frac{1}{K}\sum_{k=1}^{K} \| x_{k+1} - f(x_k,u_k) \|\text{,}\\
    \sigma_i &= \sqrt {\frac{1}{K-1}\sum_{k=1}^{K} \| \mu_i - (x_{k+1} - f(x_k,u_k)) \|^2}\text{.}
\end{align}
In practice, samples from the grass dataset tend to have larger disturbance mean $\mu$, given that pavement presents a more uniform setting. For example, if we compare sample \ref{subfig:grass100} to \ref{subfig:pav120}, we can see that for similar trajectories, the error is larger in the case of grass. Moreover, trajectories that involve larger magnitudes in velocities, also present larger disturbance means, and variances (cf. \ref{subfig:grass10} vs \ref{subfig:grass120}). This is due to the fact that at high speeds, the non-modeled effects such as drift, and friction are more significant. 

For more details on the data acquisition, please refer to the original paper \cite{7759118}. 
%
%
\begin{figure*}
	\centering
	\begin{subfigure}[b]{0.32\textwidth}
		\centering
		\includegraphics[width=\textwidth]{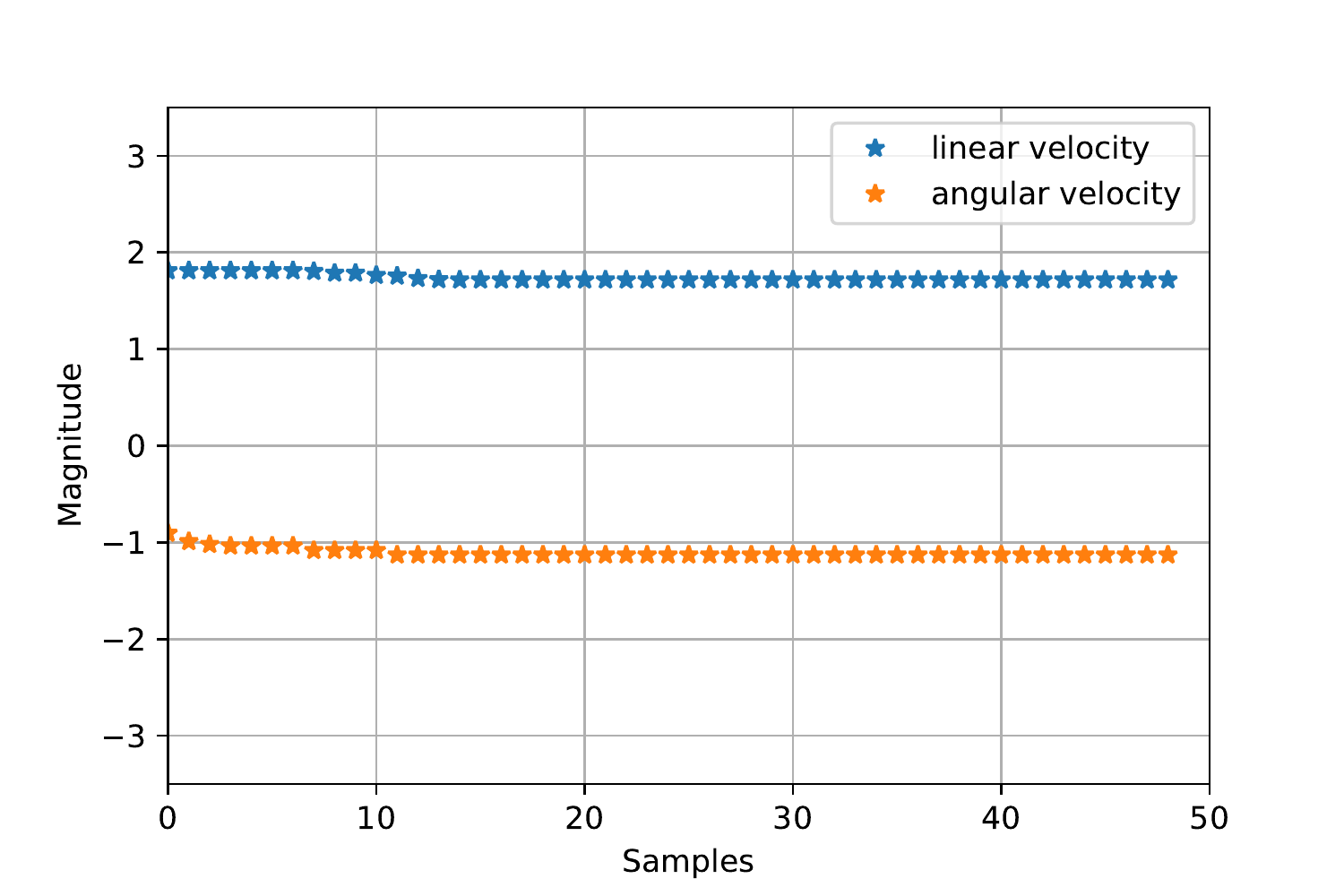}
		\caption{Grass sample $10$ with average error $\mu=0.288$ and variance error  $\sigma=0.088$. }
		\label{subfig:grass10}
	\end{subfigure}
	\hfill
	\begin{subfigure}[b]{0.32\textwidth}
		\centering
		\includegraphics[width=\textwidth]{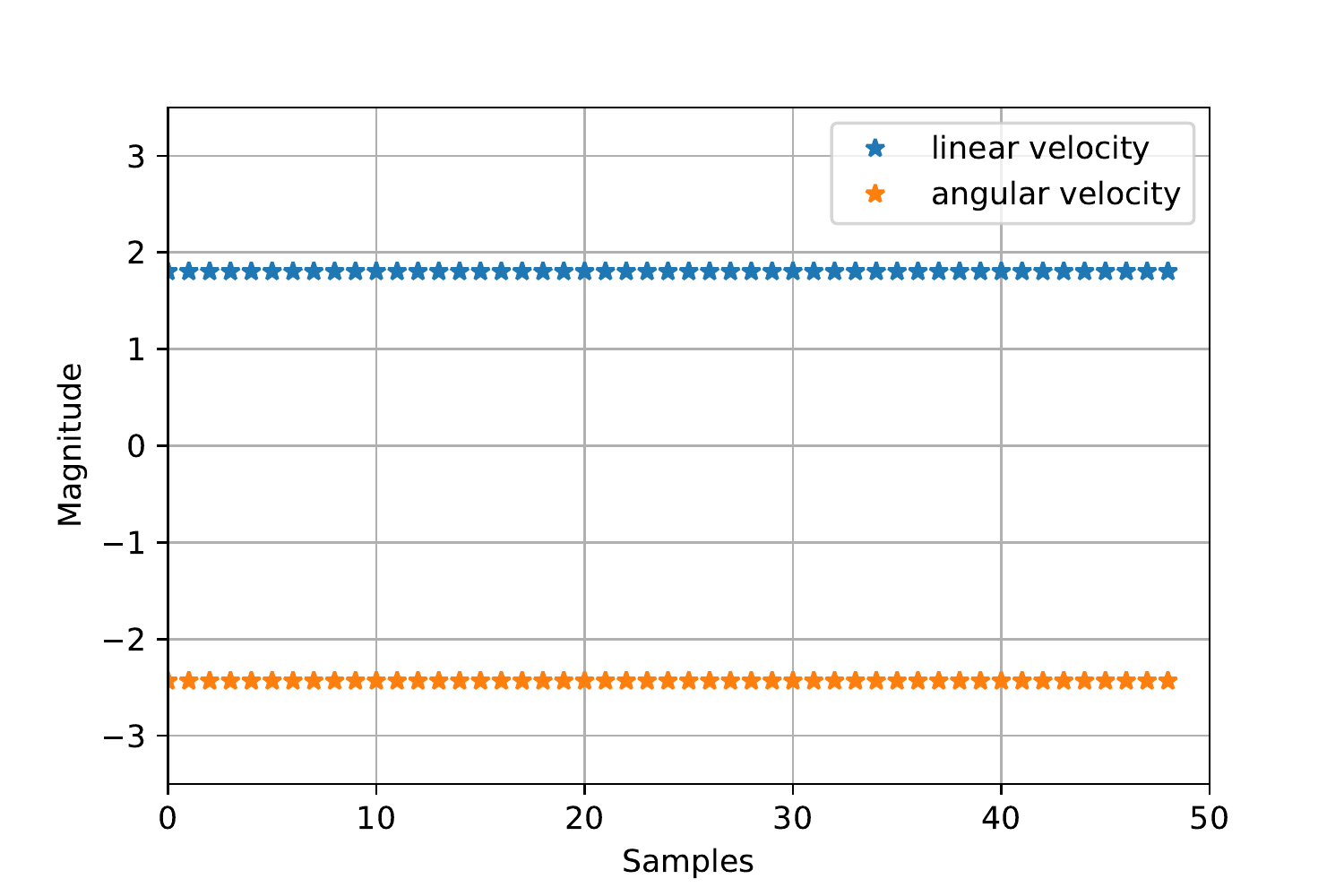}
		\caption{Grass sample $100$ with average error $\mu=0.919$ and variance error  $\sigma=0.113$. }
		\label{subfig:grass100}
	\end{subfigure}
	\hfill
	\begin{subfigure}[b]{0.32\textwidth}
		\centering
		\includegraphics[width=\textwidth]{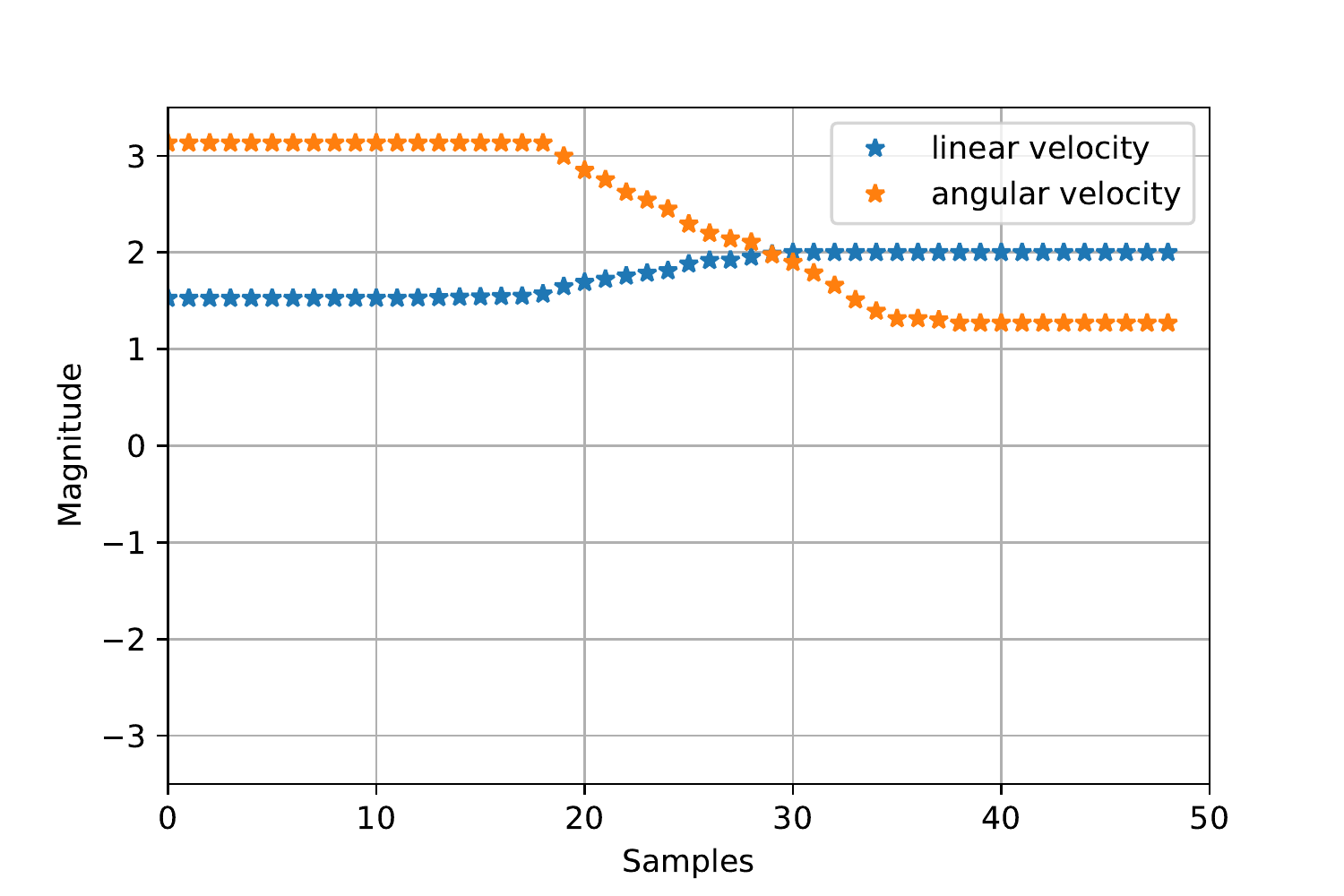}
		\caption{Grass sample $120$ with average error $\mu=1.01$ and variance error  $\sigma=0.298$.}
		\label{subfig:grass120}
	\end{subfigure}
	\hfill
	\begin{subfigure}[b]{0.32\textwidth}
		\centering
		\includegraphics[width=\textwidth]{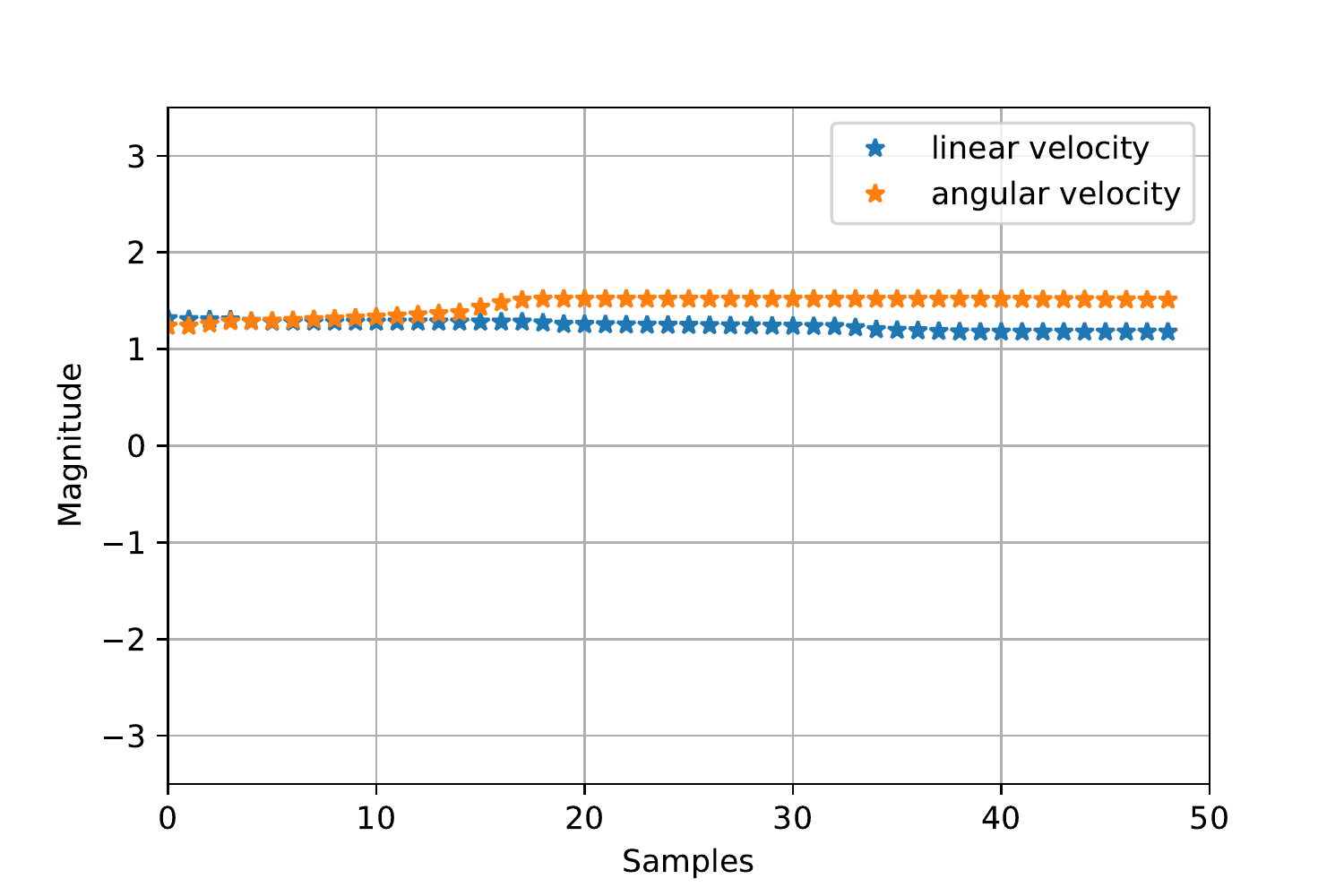}
		\caption{Pavement sample $10$ with average error $\mu=0.370$ and variance error  $\sigma=0.172$.}
		\label{subfig:pav10}
	\end{subfigure}
	\hfill
	\begin{subfigure}[b]{0.32\textwidth}
		\centering
		\includegraphics[width=\textwidth]{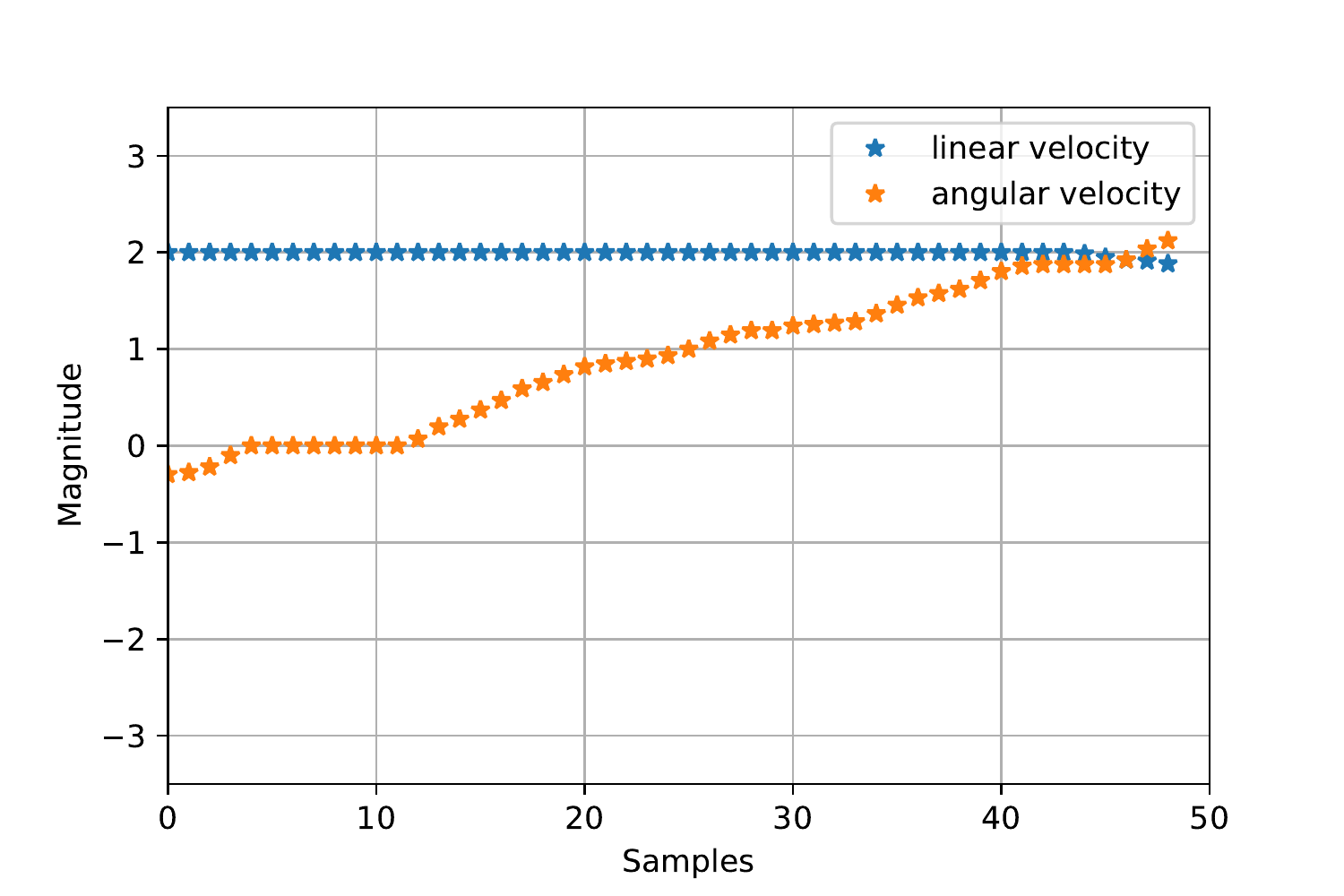}
		\caption{Pavement sample $100$ with average error $\mu=0.424$ and variance error  $\sigma=0.313$.}
		\label{subfig:pav100}
	\end{subfigure}
	\hfill
	\begin{subfigure}[b]{0.32\textwidth}
		\centering
		\includegraphics[width=\textwidth]{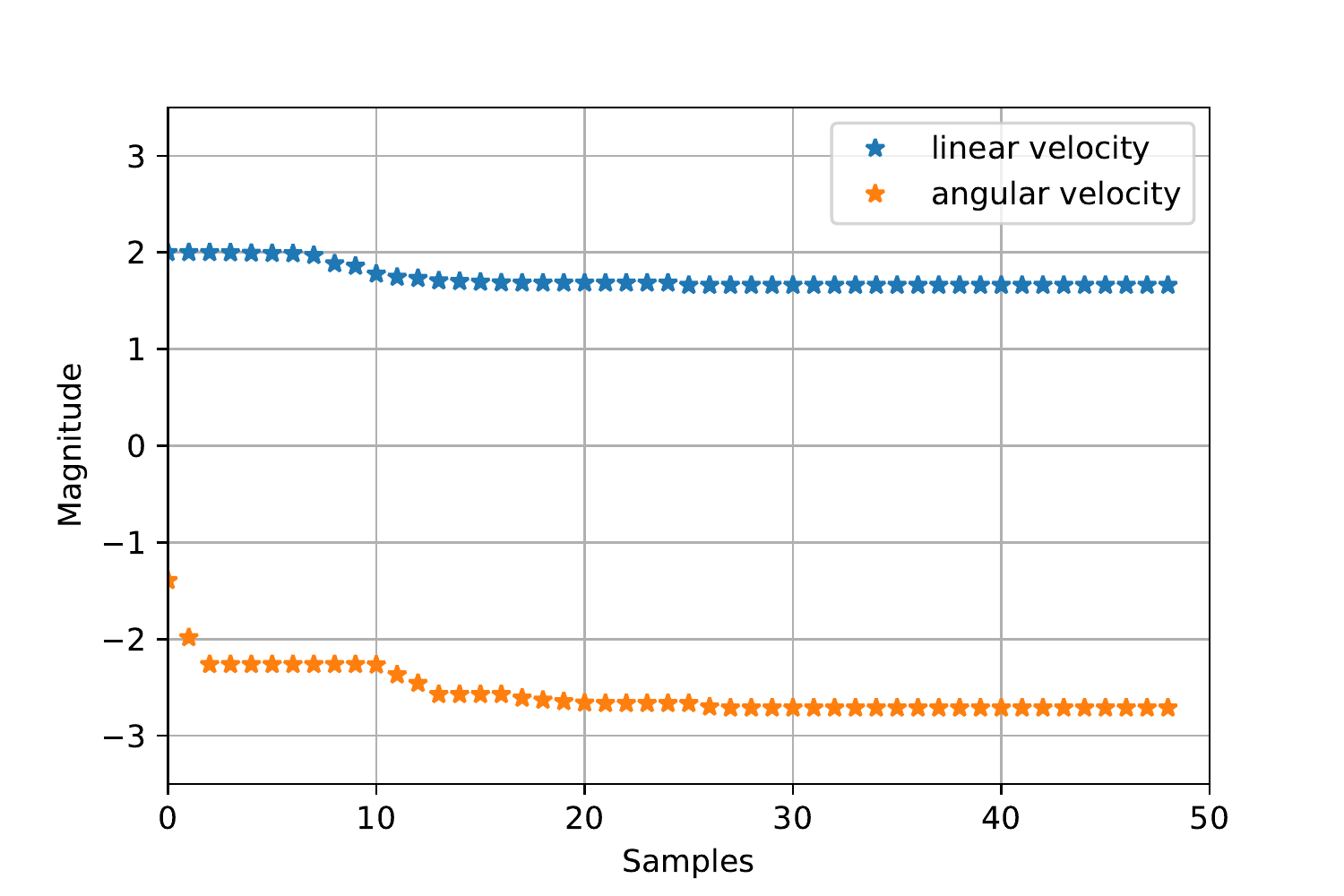}
		\caption{Pavement sample $120$ with average error $\mu=0.652$ and variance error  $\sigma=0.747$. }
		\label{subfig:pav120}
	\end{subfigure}
	\caption{Sample trajectories for grass, and pavement, with their corresponding average error, and variance error. }
	\label{fig:GroundSamples}
\end{figure*}
\subsubsection{Numerical Implementation}
We are equipped with two datasets, one for grass with $195$ samples, and one for pavement with $224$ samples. We partition the datasets into $170$ samples for training and $25$ samples for testing, and $200$ samples for training and $24$ samples for testing, for grass and pavement respectively. Each sample has associated a disturbance mean, and disturbance variance vector $[\mu,\sigma_i]\in\reals^2$. 

Regarding the optimization, we utilize the mean square error as a loss, and we train a $2$ layer fully connected neural network with $256$ hidden dimensions and hyperbolic tangent as the non-linearity. We train for $10000$ epochs, with a learning rate of $0.0015$ in the case of pavement, and $0.00015$ in the case of grass. For batch size, we utilize the whole training set. For the case of ambient Lipschitz, we utilize weight decay $w_D=0.3$, and $w_D=0.1$ for grass and pavement respectively. 

For the point-cloud laplacian, we utilize heat kernel with temperature $t=15$, and $t=35$ for grass and pavement respectively. This gives us a point-cloud laplacian with one connected component. For the Laplacian regularization we utilize $1e^{-3}$, and $1e^{-4}$ for grass and pavement respectively. 

As for the Manifold Lipschitz, we utilize $\eta_\mu = 0.01$, and $\eta_\lambda = 0.1$, and $\epsilon = 0.005$, and $0.01$ for grass and pavement respectively. We initialize $\mu=5$, and $\lambda$ uniform. 

The final results are summarized in Table \ref{table:ground_robot}.

\subsection{Quadrotor state prediction}
In this section we present a state prediction problem based on a real world collected from a quadrotor taking off and flying in a circle for $12$ seconds. The aerial robot is the open-source Crazyflie 2.1 quadrotor (\texttt{https://www.bitcraze.io/products/crazyflie-2-1/}), which has a mass of $32$ g, and a mass of size $9$ cm$^2$. The quadrotor communicates with a computer running on Intel $i7$ CPU, and the communication is established with the Crazyradio PA and at a nominal rate of $500$ Hz ($Ts=1/500$). 
To measure the position of the quadrotor  a VICON is utilized. The position is obtained from the VICON, whereas the accelerations are obtained from the on-board accelerometers and gyroscope sensors. For further information on this setup, please refer to \cite{jiahao2022online,chee2022knode,jiahao2021knowledge}. 

The experimental setup is to target a speed of $0.4$m/s and to track a circular trajectory of radius $0.5$m. We consider $2$ trajectories of $12$ seconds each (each trajectory has $6000$ time stamps), and the starting position of the quadrotor is the same for both trajectories. For each time stamp $t$, we have measurements of position, velocity and acceleration in $\reals^3$, i.e. $[x_t,y_t,z_t,\dot x_t, \dot y_t, \dot z_t, \ddot x_t, \ddot y_t, \ddot z_t]$. 
The problem consists on learning the dynamical system composed by the quadrotor. We consider the dynamics given by the equation,
\begin{align}\label{eqn:quadrotor_output_function}
    [x_{t+1},y_{t+1},z_{t+1},\dot x_{t+1},\dot y_{t+1}, \dot z_{t+1}]=f([x_t,y_t,z_t,\dot x_t, \dot y_t, \dot z_t, \ddot x_t, \ddot y_t, \ddot z_t]).
\end{align}

The learning problem consists of learning the dynamical system, i.e. we consider the dynamics given by the equation \eqref{eqn:quadrotor_output_function}.
For the learning procedure we utilize the $6000$ samples, and we seek to minimize the mean square error loss between the next state and the prediction given the current state (cf. equation \ref{eqn:quadrotor_output_function}). We train a two layer neural network with different methods as seen in Table \ref{table:quadrotor}. To test the neural network, we compute the difference between the predicted state, and the next state on the test trajectory. Note that even though the training, and testing trajectories are not the same, there is a resemblance between the two of them. 
Te begin with, we can conclude that adding regularization not always helps, as ambient regularization does not improve upon the ERM prediction method. 
A salient conclusion of the results shown in Table \ref{table:quadrotor}, is that adding regularization on the manifold space always improves upon ERM. In particular, our method is almost $3$ times better than standard ERM, and more than $2$ times better than standard Laplacian Regularization. 

To conclude with, we show that our method obtains an improvement over all the techniques considered. This allows us to conclude that in predicting the next state of a quadrotor from the current state under noisy measurements utilizing smooth functions improves generalization.

\subsubsection{Details on Numerical Implementation}

For the state prediction problem of a quadrotor we utilized a two layer fully connected neural network with $8192$ hidden units, hyperbolic tangent as the non-linearity, and bias term. For the optimizer, we utilized a learning rate of $10^{-5}$, and the full dataset per batch. We trained until convergence in all cases with number of epochs $e=1000$. For the ambient regularization we used weight decay $0.1$. For the construction of the Laplacian, we utilized temperature coefficient $t=0.01$. For Laplacian regularization we utilized $\gamma = 10^-6$. For our method, we utilized $\mu$ dual step $0.5$, $\epsilon=0.003$, and $\lambda$ dual step $0.1$. In all cases we trained until convergence with $e=10000$ epochs.

\subsection{Two-Moons Dataset}
In this subsection we provide the details of the experiment with the Two-moons data set utilized in \ref{fig:two_moons}. 

To generate the data we utilized \texttt{sklearn} library, and we utilize $1$ labeled, and $200$ unlabeled samples per class (i.e. moon), and we added noise $\sigma = \{0.05,0.1\}$. For the neural network, we utilized a two layer fully connected neural network with $64$ hidden neurons with bias term, and hyperbolic tangent as the non-linearity. For the optimizer, we utilized a learning rate of $0.9$, and no momentum. For the ambient regularization, we added a weight decay of $0.1$. For the construction of the Laplacian of Figure \ref{fig:two_moons} we utilized a heat kernel temperature of $t=0.005$, and we normalize it. For Laplacian regularization, we set $\gamma=0.5$. For Manifold Lipschitz (our method), a $\mu$ dual step of $0.5$, and a $\lambda$ dual step of $0.1$. An ablation study over different values of temperature coefficient $t$ can be found in sections \ref{sec:ablation}.
\begin{figure*}[t!]
    \centering
    \includegraphics[width=0.6\textwidth]{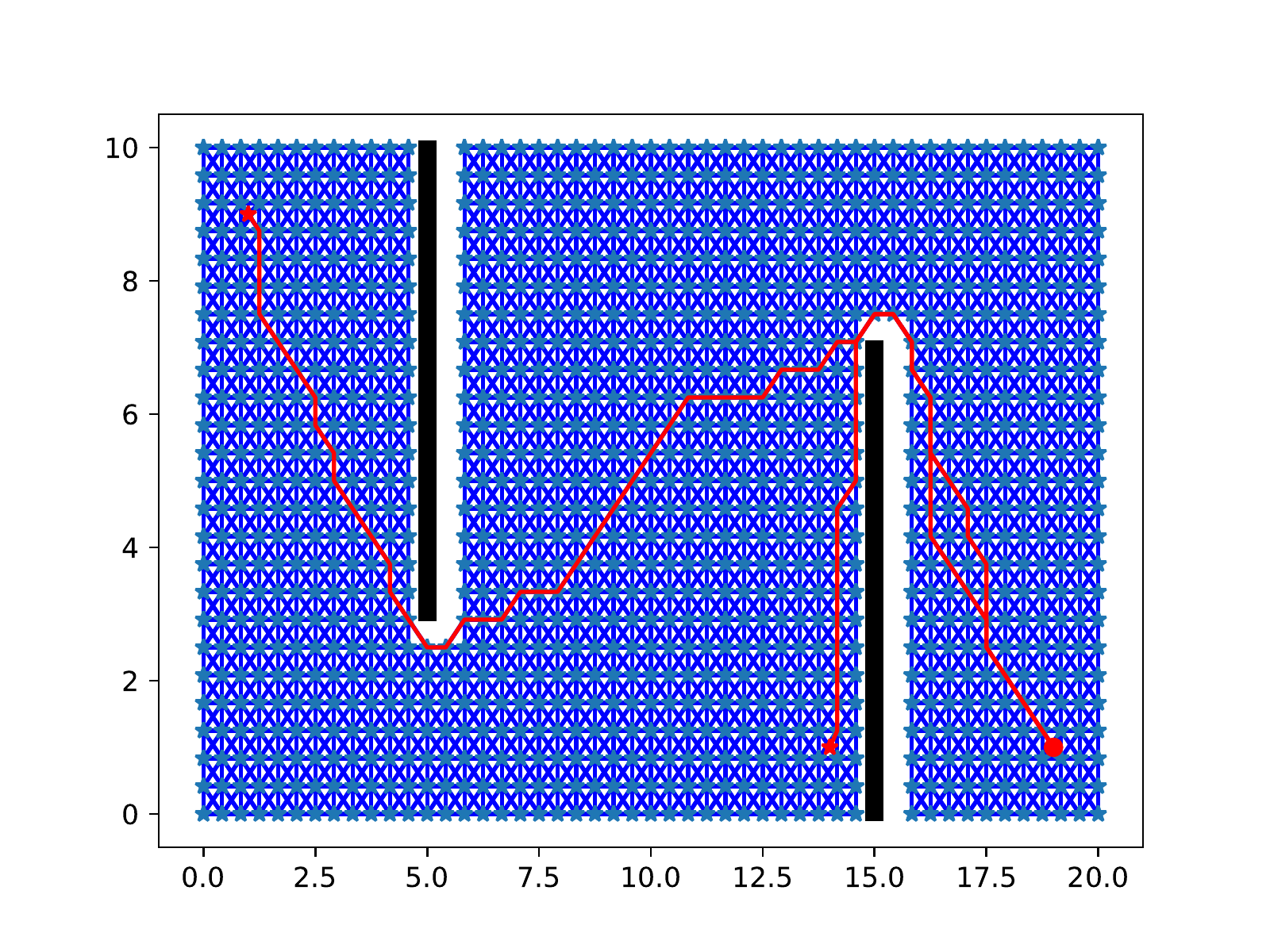}
    \caption{Point-cloud Laplacian}
    \label{fig:navigation-pointcloudLaplacian}
\end{figure*}
As seen in figure \ref{fig:two_moons}, Ambient Regularization fails to classify the unlabeled samples, given that ignores the distribution of samples given by the Manifold. The case in which the manifold has two connected component (cf. Figure \ref{subfig:dataset}), our method works as good as Manifold Regularization, due to the fact that the Lipschitz constant will be made small in both components separately. However, when the manifold is weakly connected, Manifold Regularization fails to recognize the transition between the components, as it will penalize large gradients across the manifold, converging to a plane that connects the two samples. Our Manifold Lipschitz method, as it requires the Lipschitz constant to be small, forces a sharp transition along the point with maximal separation. 

\subsection{Navigation Controls Problem}\label{Appendix:NavigationControls}

In this section, we consider the problem of continuous navigation of an agent. The agent's objective is to reach a goal while avoiding obstacles. The state space of the agent is $\ccalS = [0,20]\times[0,10]$, which represents the $x$ and $y$ axis respectively. The agent navigates by taking actions on the velocity $v\in \reals^2$, and the state evolves according to the dynamics $s_{t+1} = s_t + v_t T_s$ where $T_s=0.1s$. We construct a square grid of points in the environment that are on the free space i.e. outside of the obstacles, and utilize Dijkstra's algorithm to find the shortest path for two starting $[1,9]^T,[14,1]^T$ positions, and goal $[19,1]^T$ along the grid. For those two grid trajectories, we compute the optimal actions to be taken at each point in order to follow the trajectory.

The learner is equipped with both the labeled trajectories, as well as the unlabeled point grid. To leverage the manifold structure of the data, we consider the grid of points, and we construct the point-cloud Laplacian considering adjacent points in the grid. We train a two layer neural network using the mean square error loss over the optimal set of points and actions for ERM, ERM with ambient Lipschitz regularizer, Manifold Regularization, and our method Manifold Lipschitz method. 
\begin{wraptable}{r}{0.40\textwidth}
	\begin{tabular}{||c c||} 
		\hline
		Method &  Trajectories  \\ [0.5ex] 
		\hline\hline
		ERM & $85$ \\ 
		\hline
		Ambient Reg. & $66$ \\
		\hline
		Manifold Reg. & $77$ \\
		\hline
		Manifold Lipschitz & $94$ \\ 
		\hline
	\end{tabular}
	\caption{Number of successful trajectories from $100$ random starting points.\label{table:navigation}}
\end{wraptable}
To evaluate the performance, we randomly chose $100$ starting points and compute the trajectories generated by each learned function. A trajectory is successful if it reaches the goal without colliding with the obstacles or the walls. The results are summarized in table \ref{table:navigation}, and showcase the benefit of implementing manifold lipschitz. Our method outperforms the $3$ other methods due to the fact that it minimizes the gradient of the function over the domain of the data. As opposed to ERM, our method generates a smooth function outside of the labeled trajectory. Ambient Lipschitz regularization fails due to the fact that the euclidean distance ignores the real distance between samples across wall, forcing similar outputs for points that should take different actions. Manifold regularization is able to capture the similarity between points, but it fails to properly capture the sharp turns near the edges off the obstacles. The success of Manifold Lipschtiz, can be explained by its dual variables $\lambda$ shown in figure \ref{subfig:lambdas}. In this figure, the radius of each ball represents the value of the dual variable, which is larger close to the corners of the obstacles due to the fact that the problem requires larger gradients to make sharp turns over it. Besides, the fact that we can disentangle the loss on the labeled data, from the Lipschitz constant, allows us to overfit the data as much as we require. 
\begin{figure*}[b!]
	\centering
	\begin{subfigure}[b]{0.32\textwidth}
		\centering
		\includegraphics[width=\textwidth]{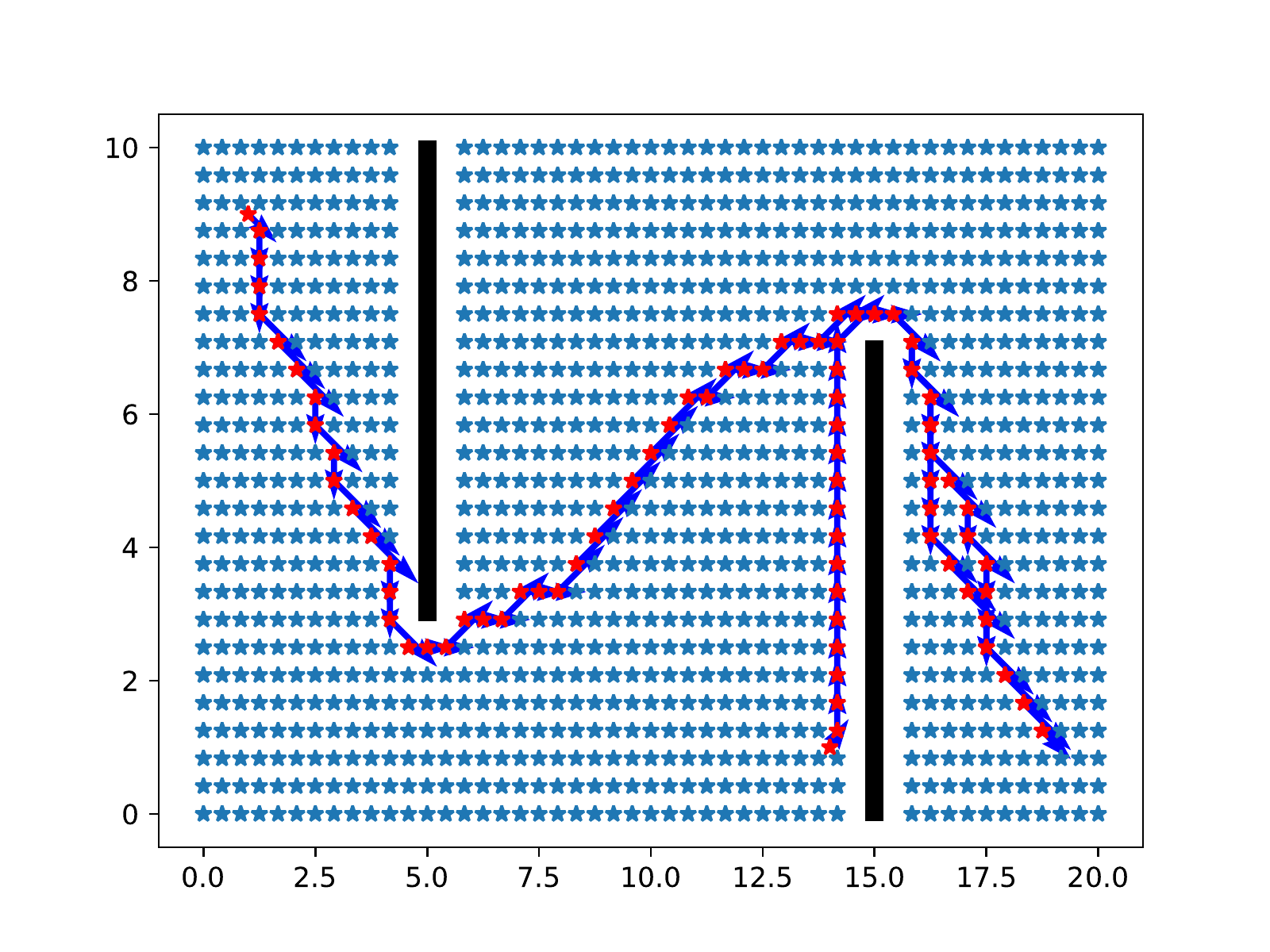}
		\caption{Dataset}
		\label{subfig:Dataset}
	\end{subfigure}
	\hfill
	\begin{subfigure}[b]{0.32\textwidth}
		\centering
		\includegraphics[width=\textwidth]{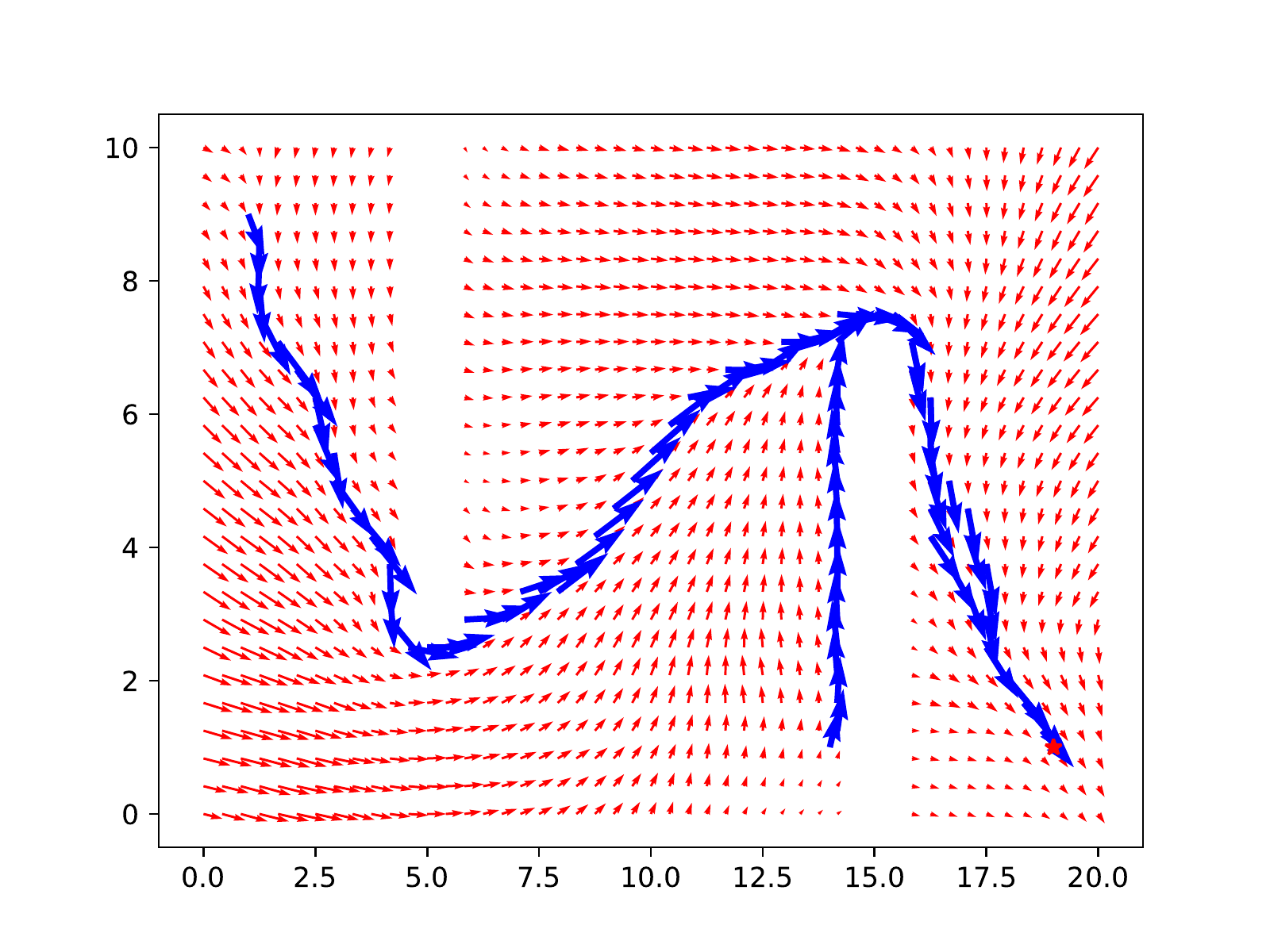}
		\caption{Manifold Lipschitz.}
		\label{subfig:ML}
	\end{subfigure}
	\hfill
	\begin{subfigure}[b]{0.32\textwidth}
		\centering
		\includegraphics[width=\textwidth]{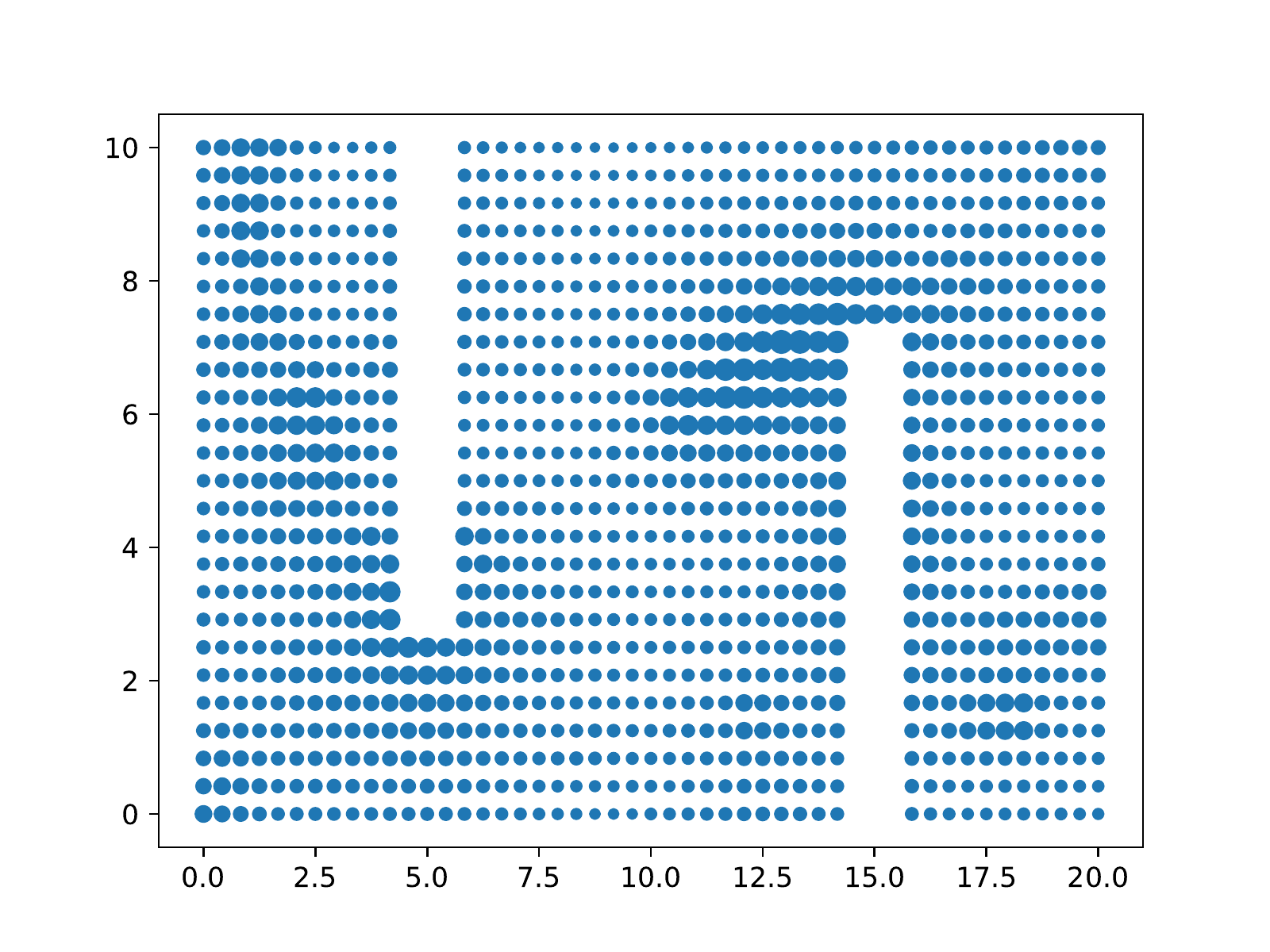}
		\caption{Dual variables $\lambda$.}
		\label{subfig:lambdas}
	\end{subfigure}
	\hfill
	\begin{subfigure}[b]{0.32\textwidth}
		\centering
		\includegraphics[width=\textwidth]{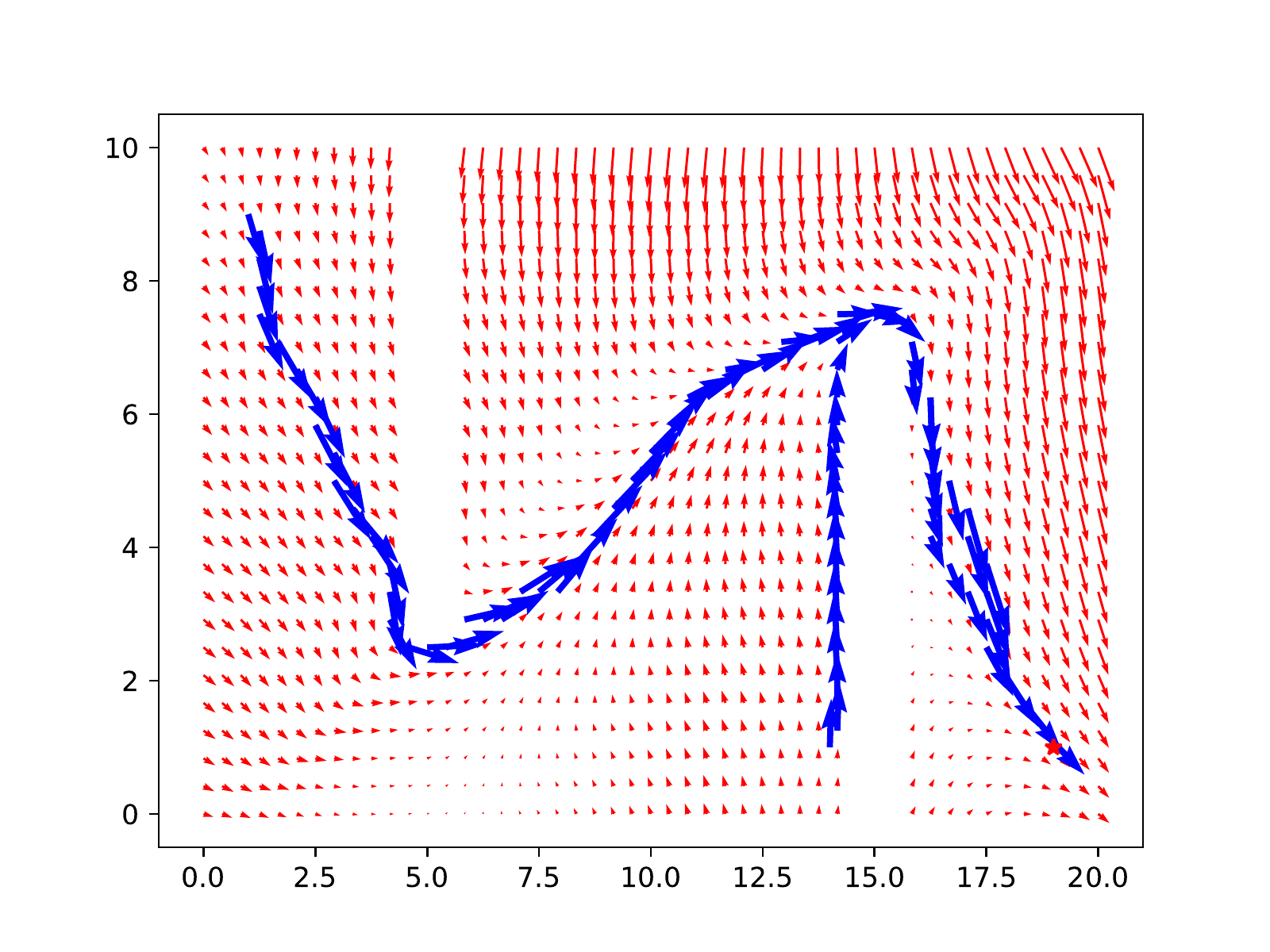}
		\caption{ERM.}
		\label{subfig:ERM}
	\end{subfigure}
	\hfill
	\begin{subfigure}[b]{0.32\textwidth}
		\centering
		\includegraphics[width=\textwidth]{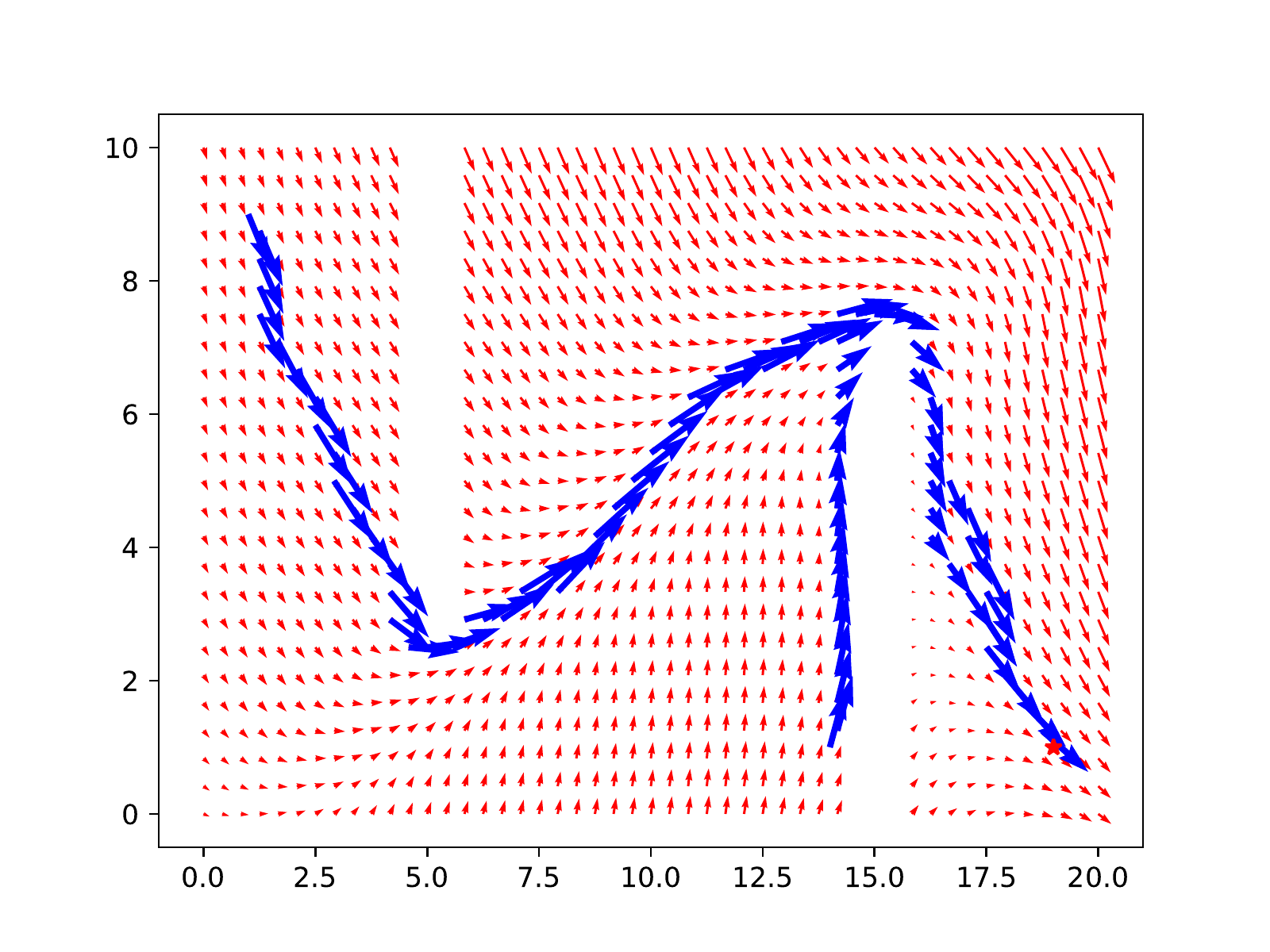}
		\caption{Ambient Regularization.}
		\label{subfig:Ambient}
	\end{subfigure}
	\hfill
	\begin{subfigure}[b]{0.32\textwidth}
		\centering
		\includegraphics[width=\textwidth]{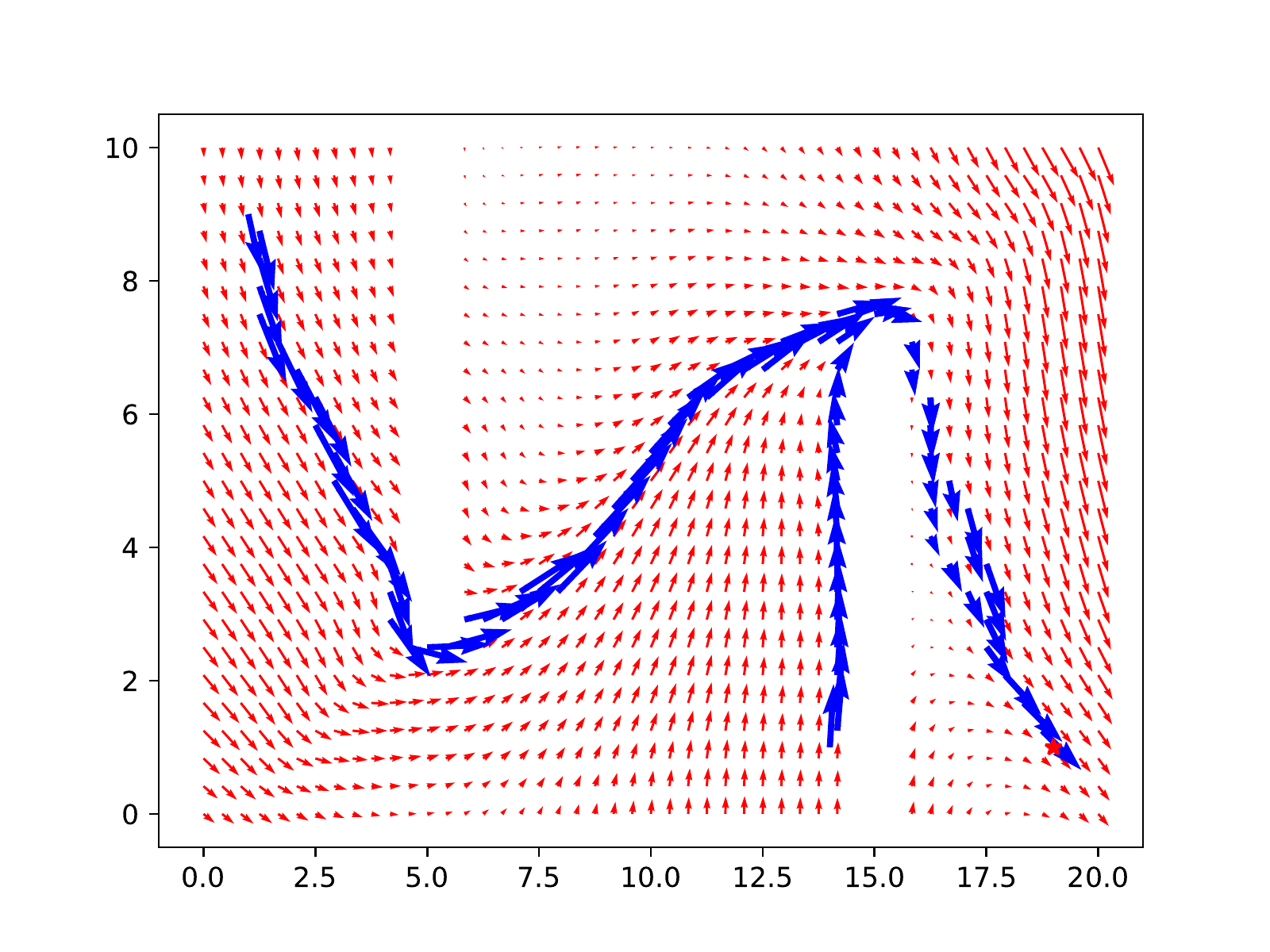}
		\caption{Manifold Regularization.}
		\label{subfig:ManiReg}
	\end{subfigure}
	\caption{Figure \ref{subfig:Dataset} shows the training dataset, blue stars depict unlabeled point, and blue arrow the optimal action at the red star. Figure \ref{subfig:ML} shows the learned function using Manifold Learning, and \ref{subfig:lambdas} its associated dual variables associated. Figures \ref{subfig:ERM}, \ref{subfig:Ambient}, \ref{subfig:ManiReg} show the functions learned using ERM, ambient regularization, and Manifold regularization respectively. }
	\label{fig:Navigation}
\end{figure*}
As measure of merit, we take $100$ random points and we compute the trajectories. A trajectory is successful if it achieves the goal without colliding. The results are shown in Table \ref{table:navigation}, and the learned functions in Figure \ref{fig:Navigation}.

\newpage

\section{Further References}

Since we introduce the Lipschitz constant as a constraint to the learning problem our reformulation and solution methodologies are framed within the constrained learning paradigm \citep{chamon2020probably, chamon2022constrained, yang2019advancing}. Central to the solution of constrained learning problems is the use of dual formulations and dual ascent learning algorithms. These are finding increasing applicability as evidenced by their use in, e.g., adversarial robustness \citep{robey2021adversarial}, graph neural networks \citep{cervino2022training,arghal2021robust}, federated learning \citep{shen2021agnostic}, active learning \citep{elenter2022lagrangian}, reinforcement learning \citep{paternain2019constrained, paternain2022safe, castellano2021reinforcement, bai2021achieving, hasanbeig2018logically}, and wireless communications \citep{eisen2019learning}.

In the context of adversarial attacks to neural networks, manifold based regularization techniques have shown a vast amount of empirical and theoretical evidence of its utility, improving its adversarial robustness\citep{ZHANG2021282,khoury2018geometry,ma2018characterizing,moosavi2019robustness,jin2020manifold,lassance2021laplacian}. Some works seek to obtain manifold attacks, which are more realistic attacks than utilizing the norm-$\infty$ ball, given the high dimensionality of the input and the low dimensional structure of the data \citep{stutz2019disentangling}. Smooth function have also been studied in the context of robustness \citep{pmlr-v137-rosca20a,bubeck2021a,bubeck2021law}. 

Our work, is based on previous results that show convergence of graph laplacians to Laplace-Beltrami operators. There exists a vast amount of work on that validates the convergence results for point-cloud operators over Manifolds \citep{hein2005graphs,hein2007graph,dunson2021spectral,wu2018think}.

Regarding Lipschitz constant estimation for neural networks,  \cite{fazlyab2019efficient} has formulated the problem as a convex optimization problem.

\newpage
\section{Ablation Study on Laplacian Construction}\label{sec:ablation}

In this section we study the impact of the temperature coefficient $t$ in the construction of the Laplacian. To do so we repeat the setting of Figure \ref{fig:two_moons}, and we vary the value of the temperature $t$. We consider the two moons dataset problem with $1$ labeled, and $200$ unlabeled samples per class. We vary the value of the temperature coefficient, which varies the number of cross-manifold edges, and therefore makes the problem more challenging.

\begin{table}[H]
\begin{tabular}{||cc c c c||}
\hline
\begin{tabular}[c]{@{}c@{}}Value of \\ Heat Kernel\end{tabular} & \begin{tabular}[c]{@{}c@{}}Connected \\ Components\end{tabular} & \begin{tabular}[c]{@{}c@{}}Number of \\ Cross-Manifold Edges\end{tabular} & \begin{tabular}[c]{@{}c@{}}Manifold \\ Regularization\end{tabular} & \begin{tabular}[c]{@{}c@{}}Manifold Gradient \\ (Ours)\end{tabular} \\ \hline\hline
$0.0040$                                                        & $2$                                                             & $0$                                                                       & $100$\%                                                            & $100$\%                                                             \\ \hline
$0.0050$                                                        & $1$                                                             & $2$                                                                       & N/A                                                                & $100$\%                                                             \\ \hline
$0.0060$                                                        & $1$                                                             & $3$                                                                       & N/A                                                                & $100$\%                                                             \\ \hline
$0.0070$                                                        & $1$                                                             & $10$                                                                      & N/A                                                                & $100$\%                                                             \\ \hline
$0.0080$                                                        & $1$                                                             & $20$                                                                      & N/A                                                                & $100$\%                                                             \\ \hline
$0.0090$                                                        & $1$                                                             & $27$                                                                      & N/A                                                                & $100$\%                                                             \\ \hline
$0.0100$                                                        & $1$                                                             & $42$                                                                      & N/A                                                                & $100$\%                                                             \\ \hline
$0.0150$                                                        & $1$                                                             & $124$                                                                     & N/A                                                                & $100$\%                                                             \\ \hline
$0.0175$                                                        & $1$                                                             & $176$                                                                     & N/A                                                           & $100$\%                                                                \\ \hline
$0.0180$                                                        & $1$                                                             & $192$                                                                     & N/A                                                                & $100$\%                                                                 \\ \hline
$0.0190$                                                        & $1$                                                             & $217$                                                                     & N/A                                                                & $100$\%                                                                 \\ \hline
$0.0200$                                                        & $1$                                                             & $251$                                                                     & N/A                                                                & N/A                                                                 \\ \hline
\end{tabular}
\caption{Ablation study on the temperature of the heat kernel $t$. We plot the accuracy when it achieves $100$\%, and N/A otherwise, given that an accuracy of less than $100$\% is not representative as the method fails to capture the manifold structure of the problem.  \label{table:ablation_heat_temperature}}
\end{table}

As seen in table \ref{table:ablation_heat_temperature}, Laplacian regularization fails to achieve a perfect accuracy when the number of connected components is less than $2$. That is to say, manifold regularization achieves a perfect accuracy when each class has a connected component. However, once the components become connected, Laplacian regularization smoothness the integral of the gradient, and therefore does no properly identify the transition between components. 

As can be seen in table \ref{table:ablation_heat_temperature}, our method is more robust to non-exact manifolds. Which means that if the manifold is not calculated perfectly, and as a result we obtain $1$ connected component as opposed to $2$ separate moons, our method still works. 

It is important to remark that our method still works when the connected components have cross-manifold edges in different places of the manifold. As an example, take the Laplacian with heat kernel $t=0.007$, there are edges on both side of the manifold. Moreover, our method is able to distinguish between the two classes even with edges in the middle of the two manifold as can be seen with $t=0.0150$, and $t=0.0175$ (cf. \ref{subfig:ablation_0150}).

Our method brakes once the manifold structure vanishes and most of the points are interconnected, as can be seen with $t=0.02$. In this case there are $251$ cross manifold edges, and the low-dimensional structure of the problem disappears. 

In all, our proposed solution is more robust to imperfect estimation of the manifold. Even when the number of cross-edges is large, our method is able to create a partition between classes given that it finds the points with maximal separation, and allows the function to change values between them. 

\begin{figure*}
	\centering
	\begin{subfigure}[b]{0.32\textwidth}
		\centering
		\includegraphics[width=\textwidth]{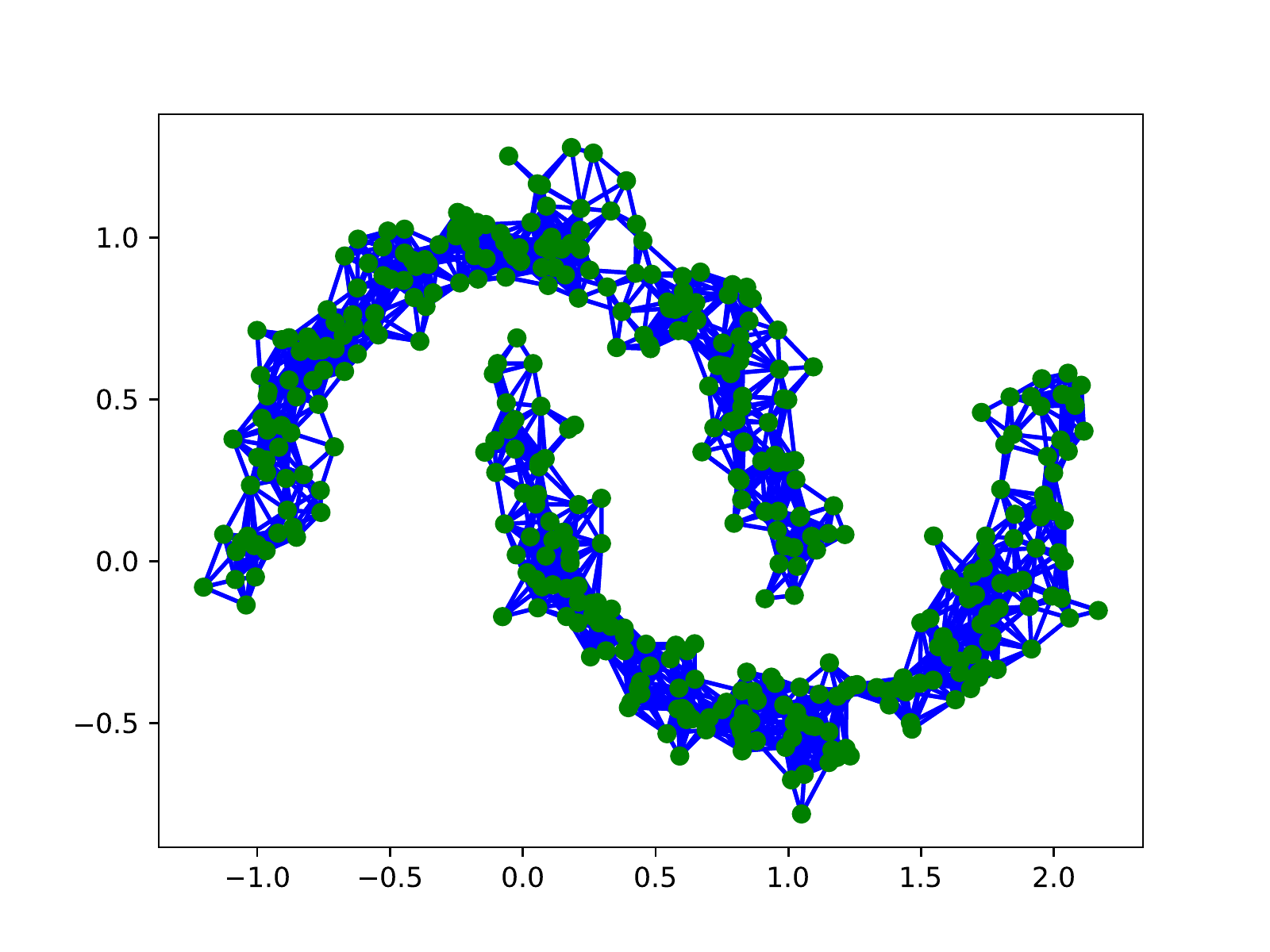}
		\caption{$t=0.004$. }
		\label{subfig:0040}
	\end{subfigure}
	\hfill
	\begin{subfigure}[b]{0.32\textwidth}
		\centering
		\includegraphics[width=\textwidth]{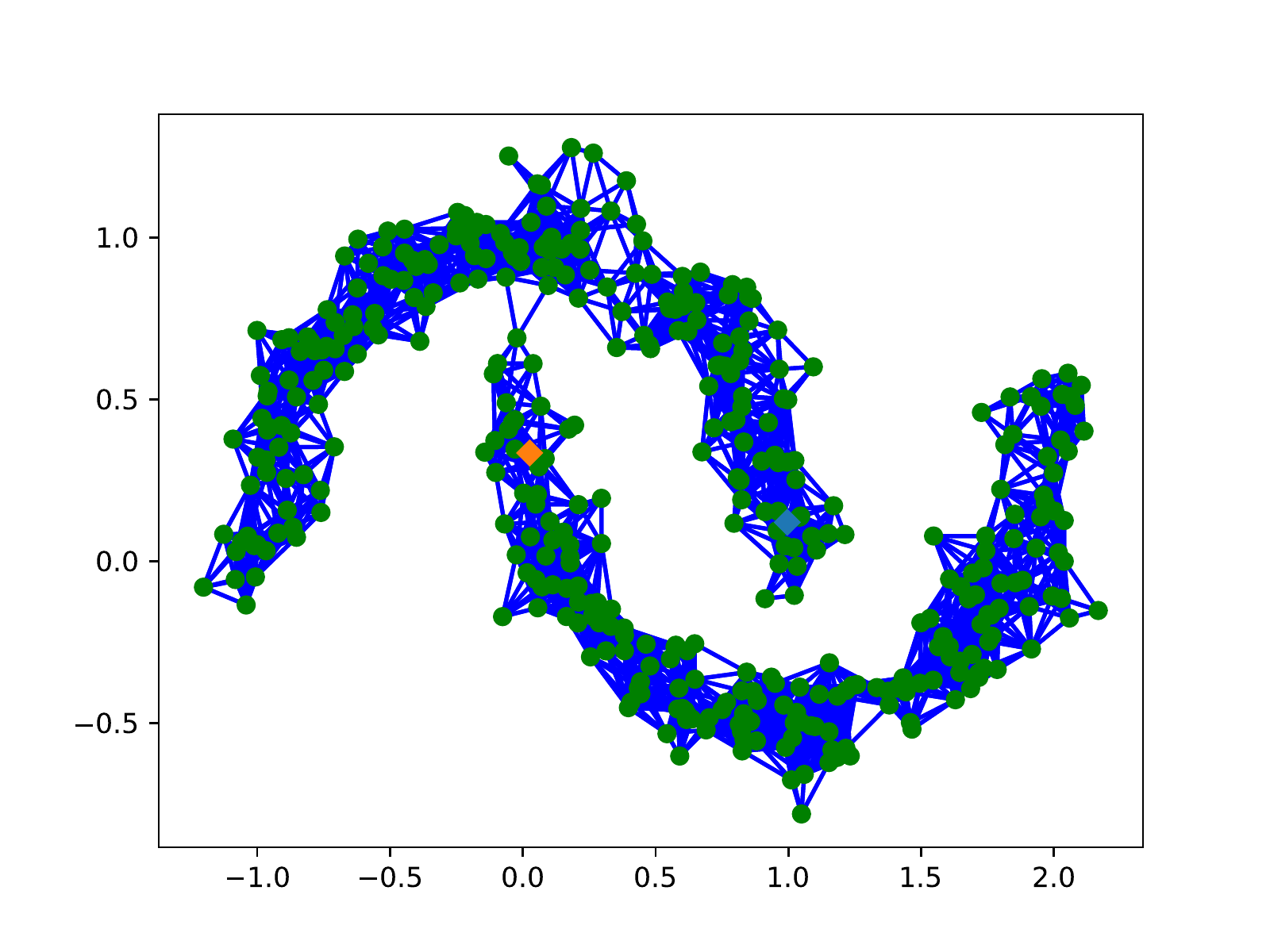}
		\caption{$t=0.005$. }
		\label{subfig:0050}
	\end{subfigure}
	\hfill
	\begin{subfigure}[b]{0.32\textwidth}
		\centering
		\includegraphics[width=\textwidth]{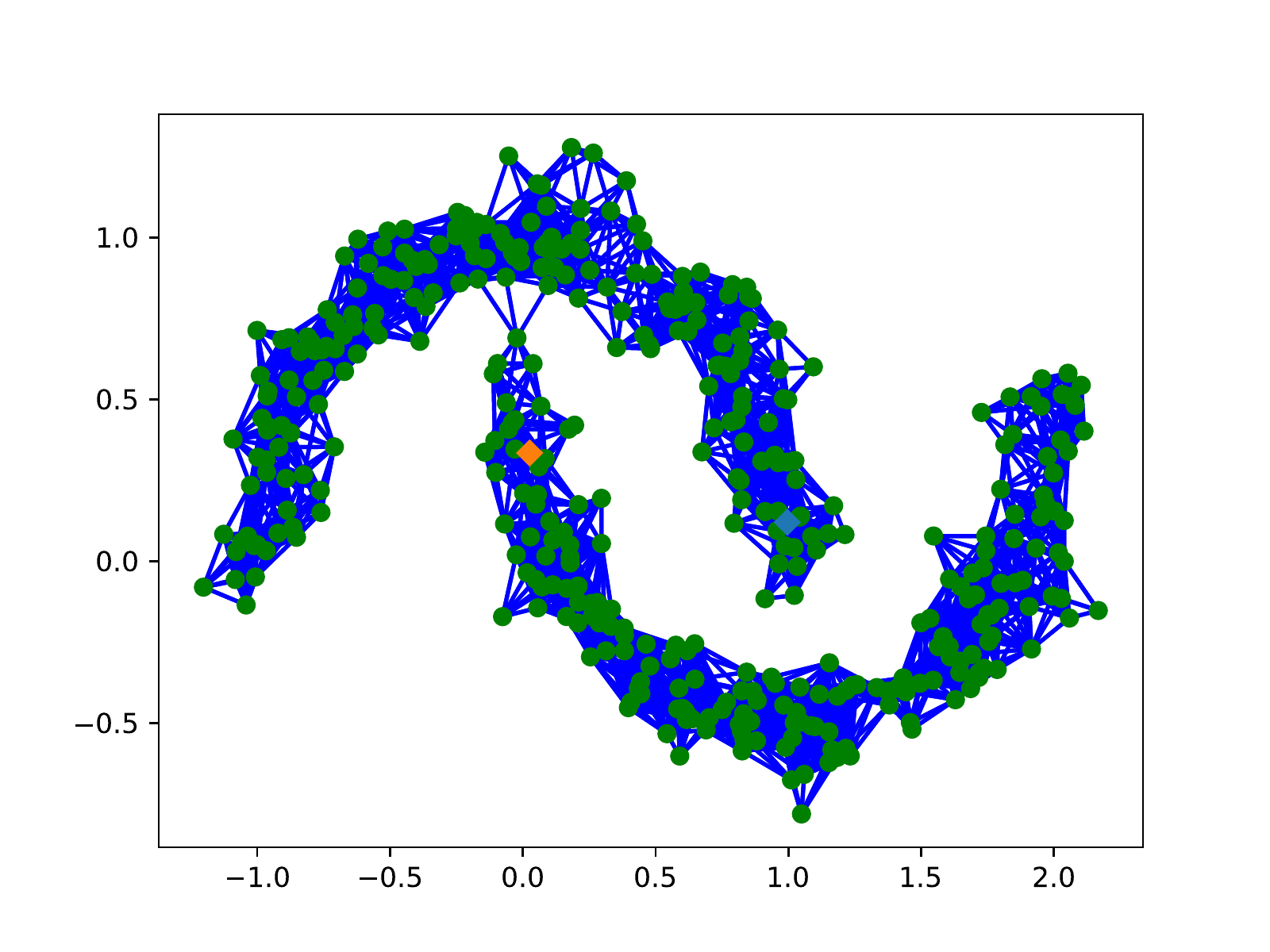}
		\caption{$t=0.006$.}
		\label{subfig:ablation_0060}
	\end{subfigure}
	\hfill
	\begin{subfigure}[b]{0.32\textwidth}
		\centering
		\includegraphics[width=\textwidth]{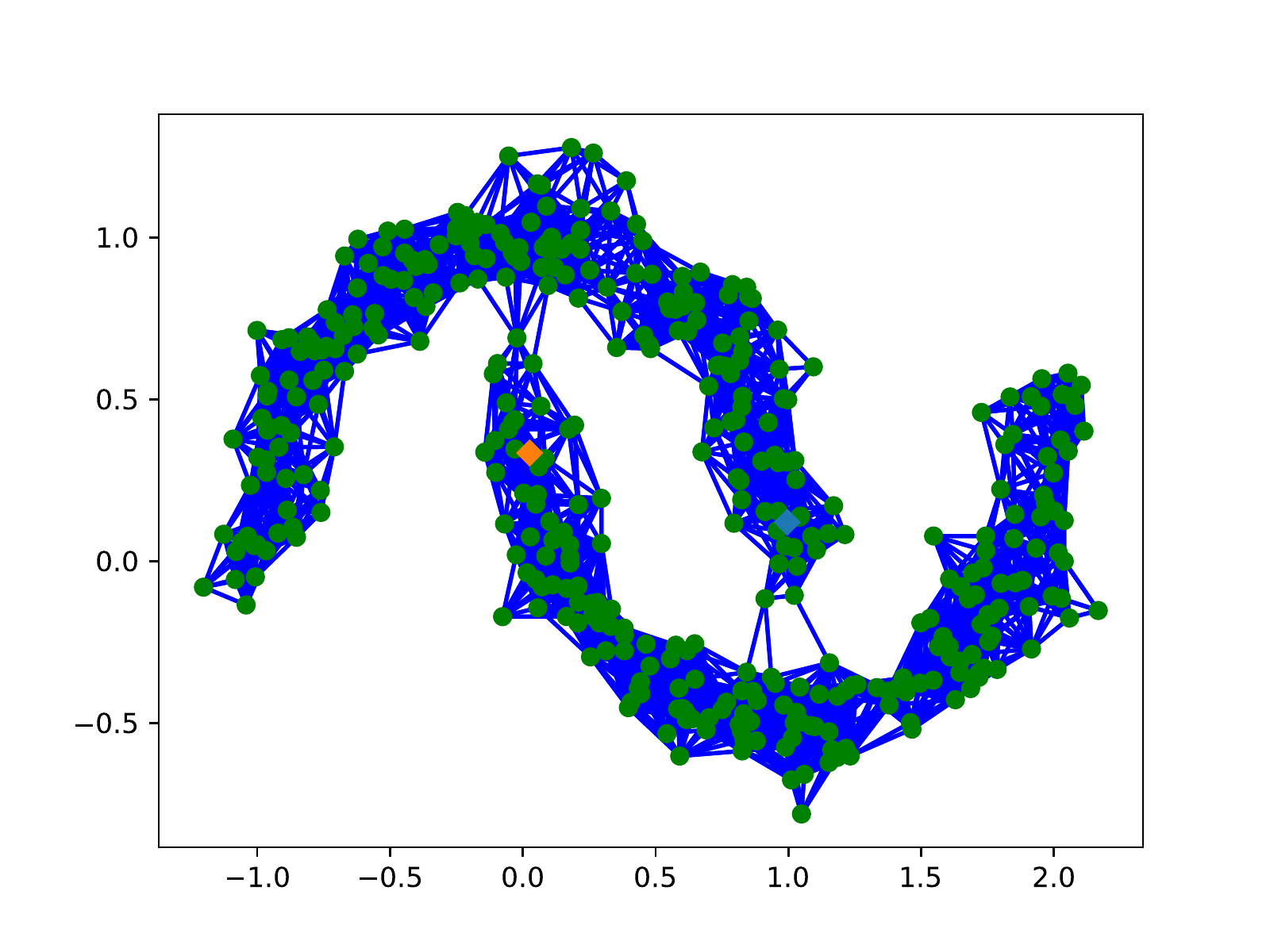}
		\caption{$t=0.007$.}
		\label{subfig:ablation_0070}
	\end{subfigure}
	\hfill
	\begin{subfigure}[b]{0.32\textwidth}
		\centering
		\includegraphics[width=\textwidth]{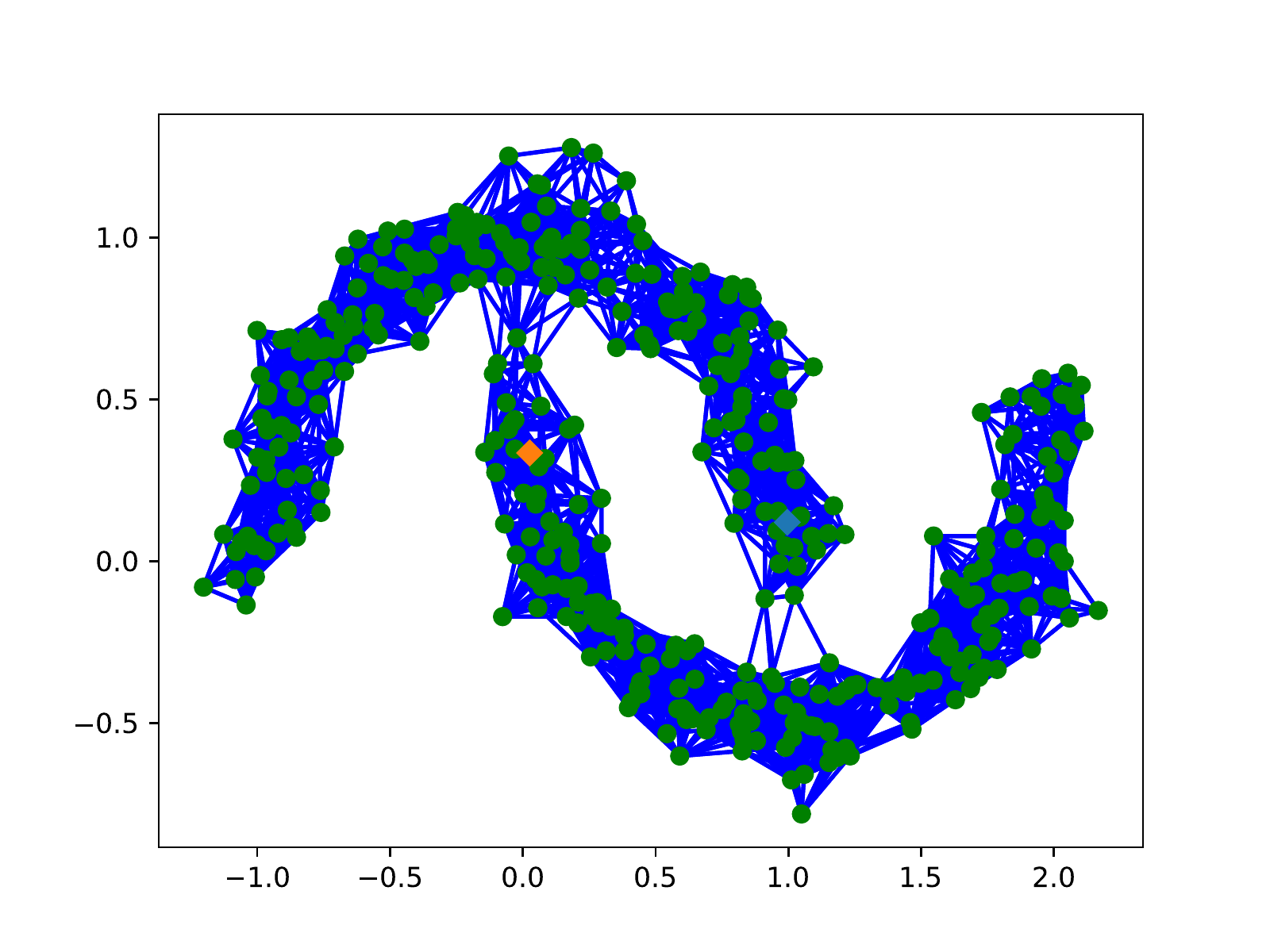}
		\caption{$t=0.008$.}
		\label{subfig:ablation_0080}
	\end{subfigure}
	\hfill
	\begin{subfigure}[b]{0.32\textwidth}
		\centering
		\includegraphics[width=\textwidth]{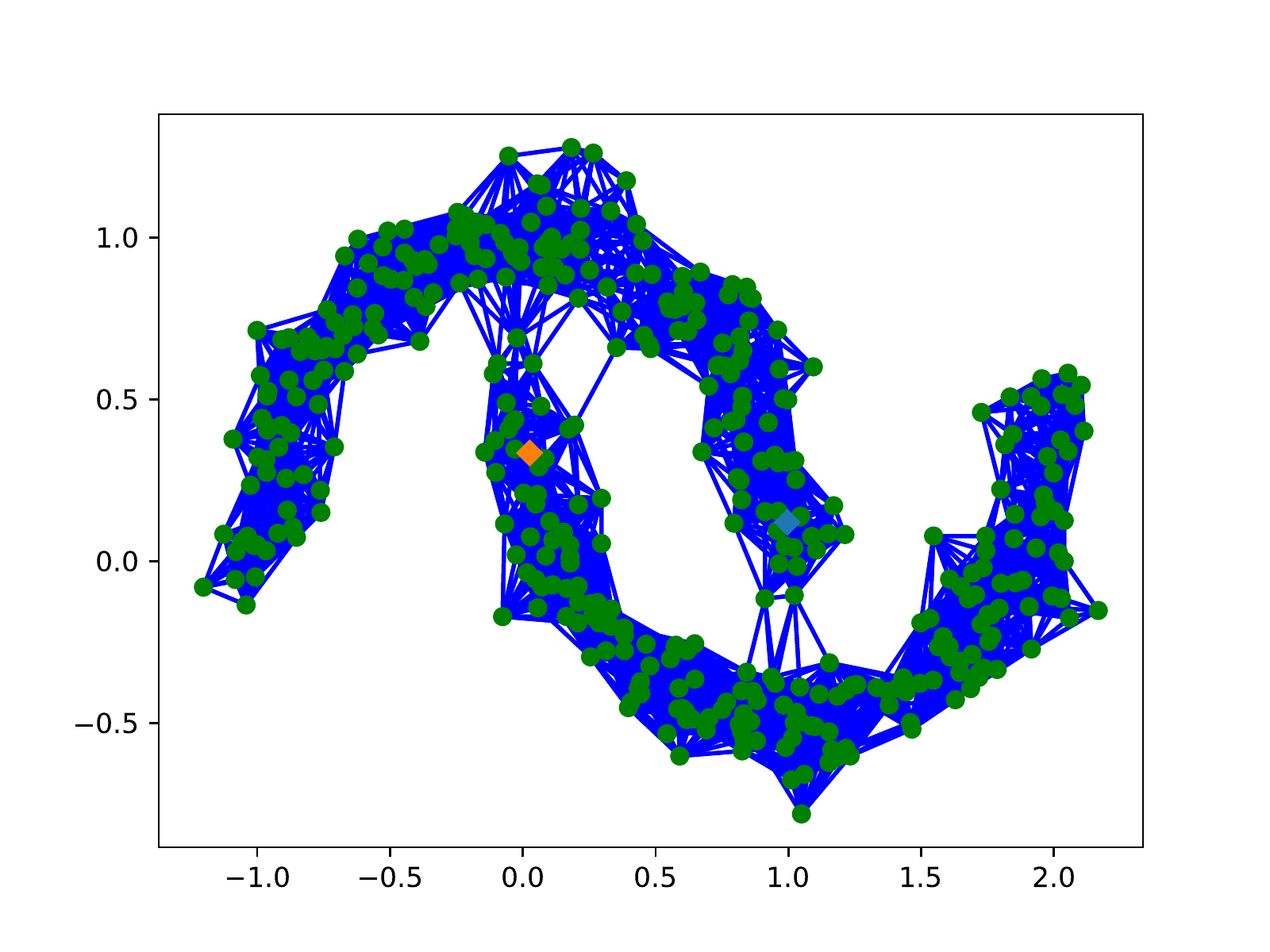}
		\caption{$t=0.009$.}
		\label{subfig:ablation_0090}
	\end{subfigure}
 \hfill
	\begin{subfigure}[b]{0.32\textwidth}
		\centering
		\includegraphics[width=\textwidth]{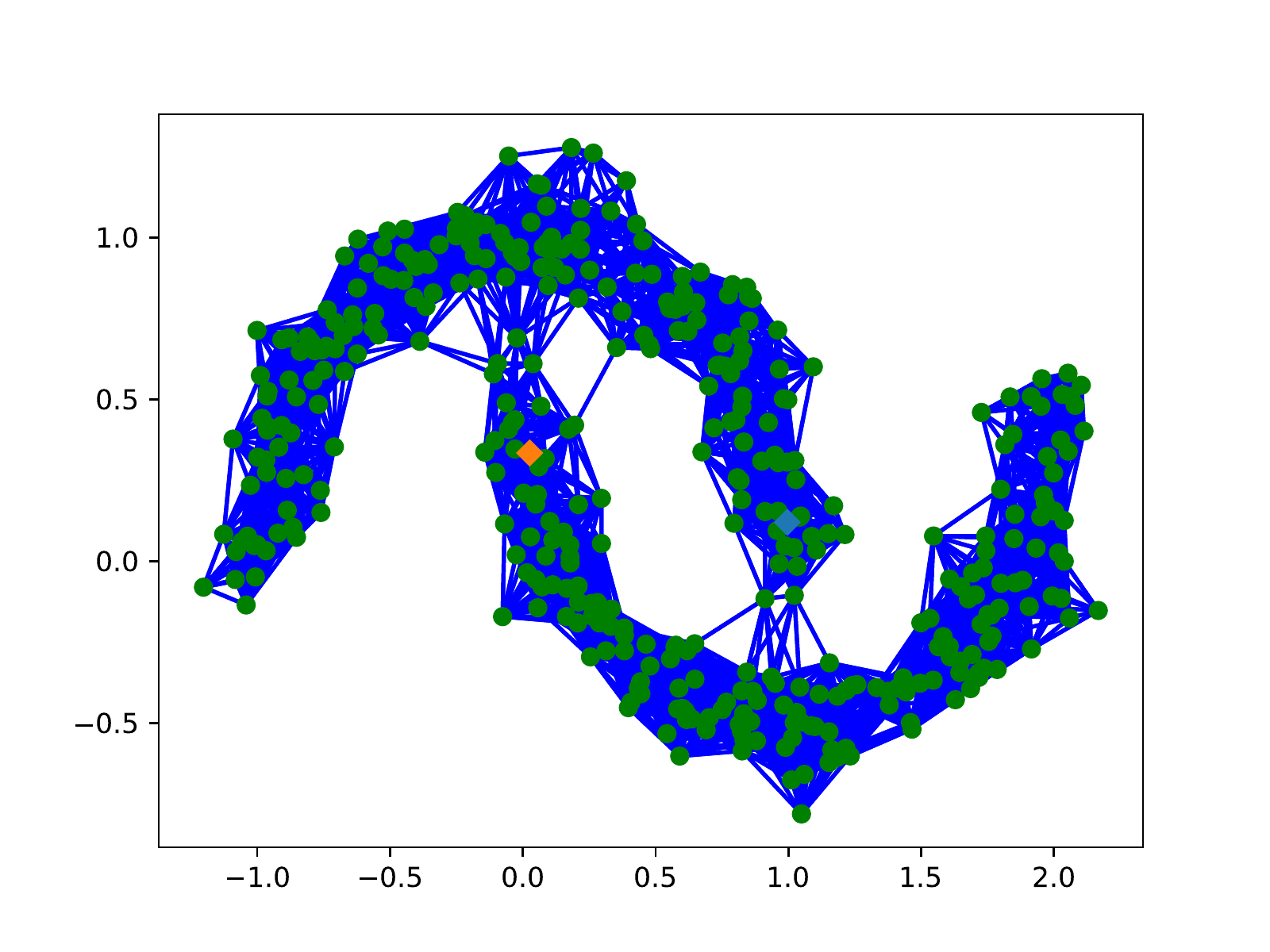}
		\caption{$t=0.01$.}
		\label{subfig:ablation_0100}
	\end{subfigure}
	\hfill
	\begin{subfigure}[b]{0.32\textwidth}
		\centering
		\includegraphics[width=\textwidth]{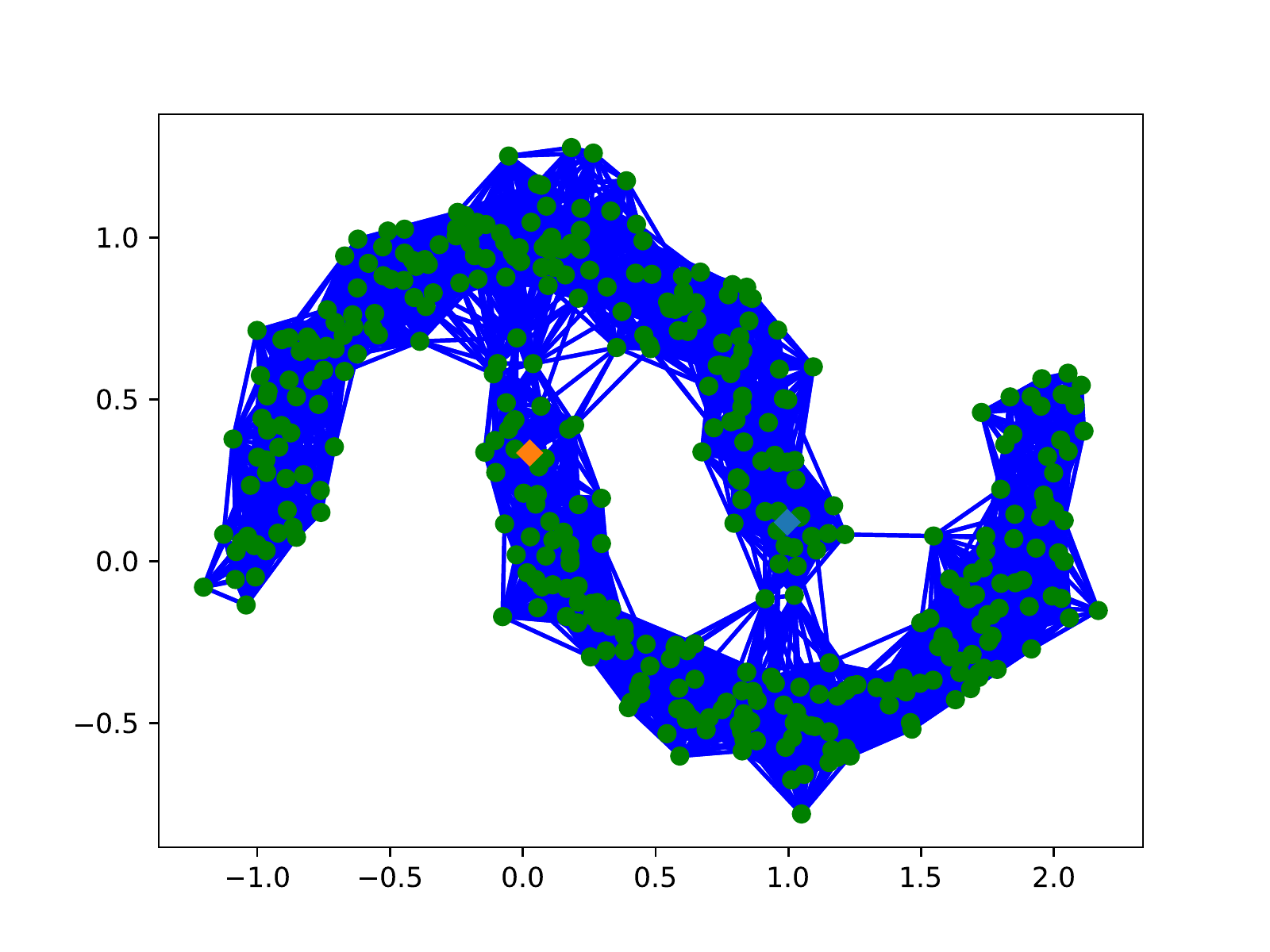}
		\caption{$t=0.015$.}
		\label{subfig:ablation_0150}
	\end{subfigure}
	\hfill
	\begin{subfigure}[b]{0.32\textwidth}
		\centering
		\includegraphics[width=\textwidth]{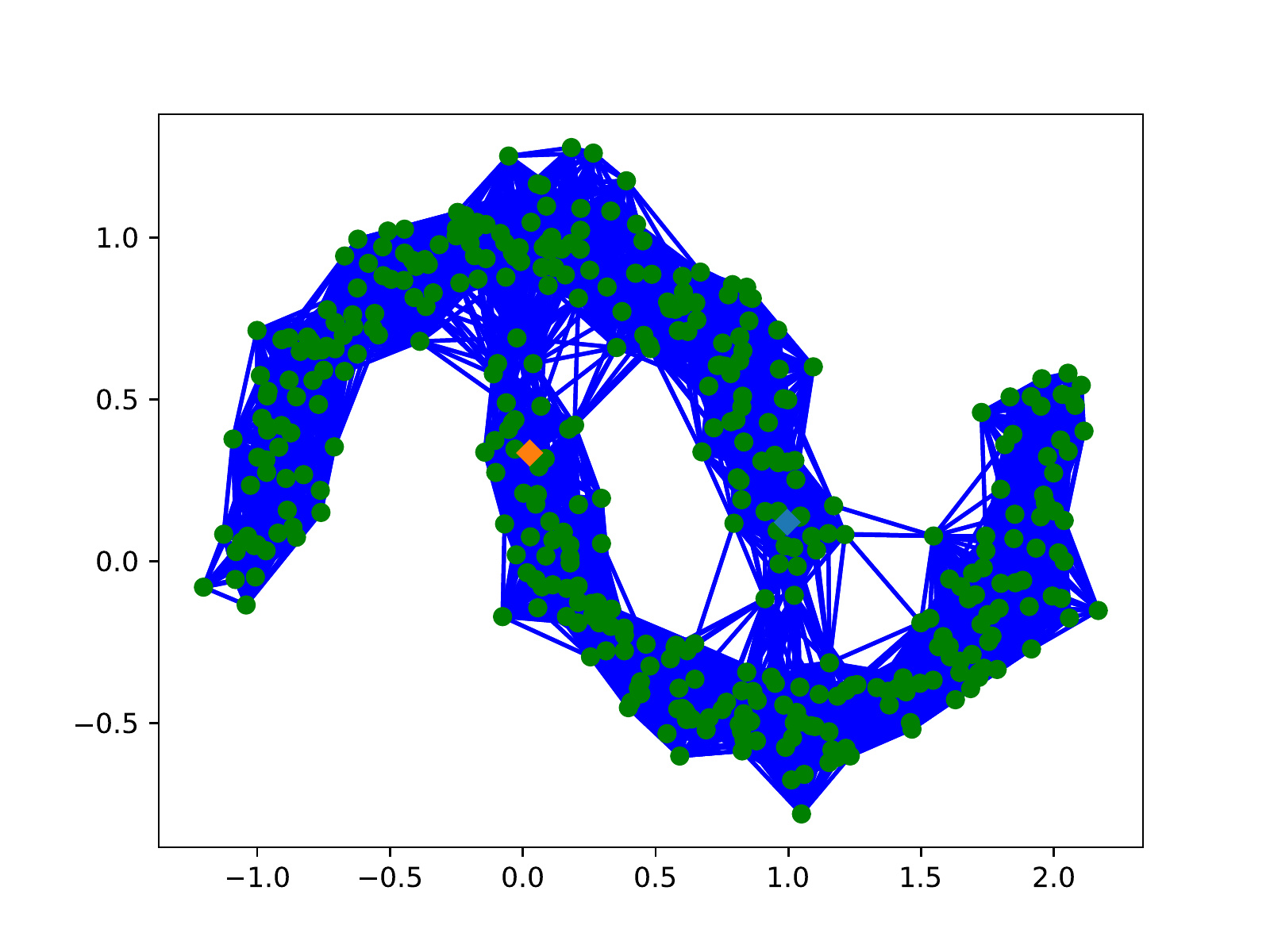}
		\caption{$t=0.0175$. }
		\label{subfig:ablation_0175}
	\end{subfigure}
 \hfill
	\begin{subfigure}[b]{0.32\textwidth}
		\centering
		\includegraphics[width=\textwidth]{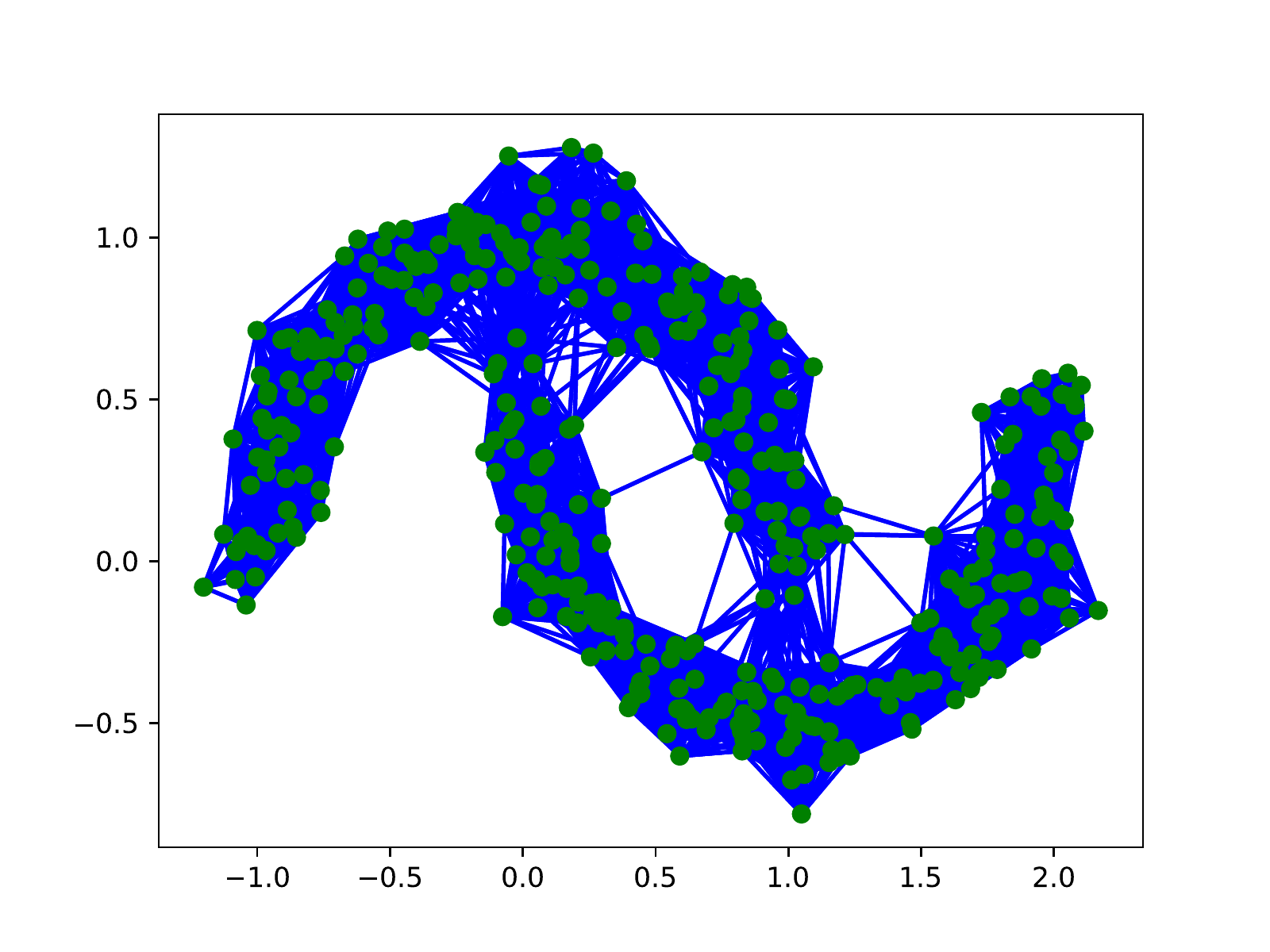}
		\caption{$t=0.018$. }
		\label{subfig:ablation_018}
	\end{subfigure}
 \hfill
 	\begin{subfigure}[b]{0.32\textwidth}
		\centering
		\includegraphics[width=\textwidth]{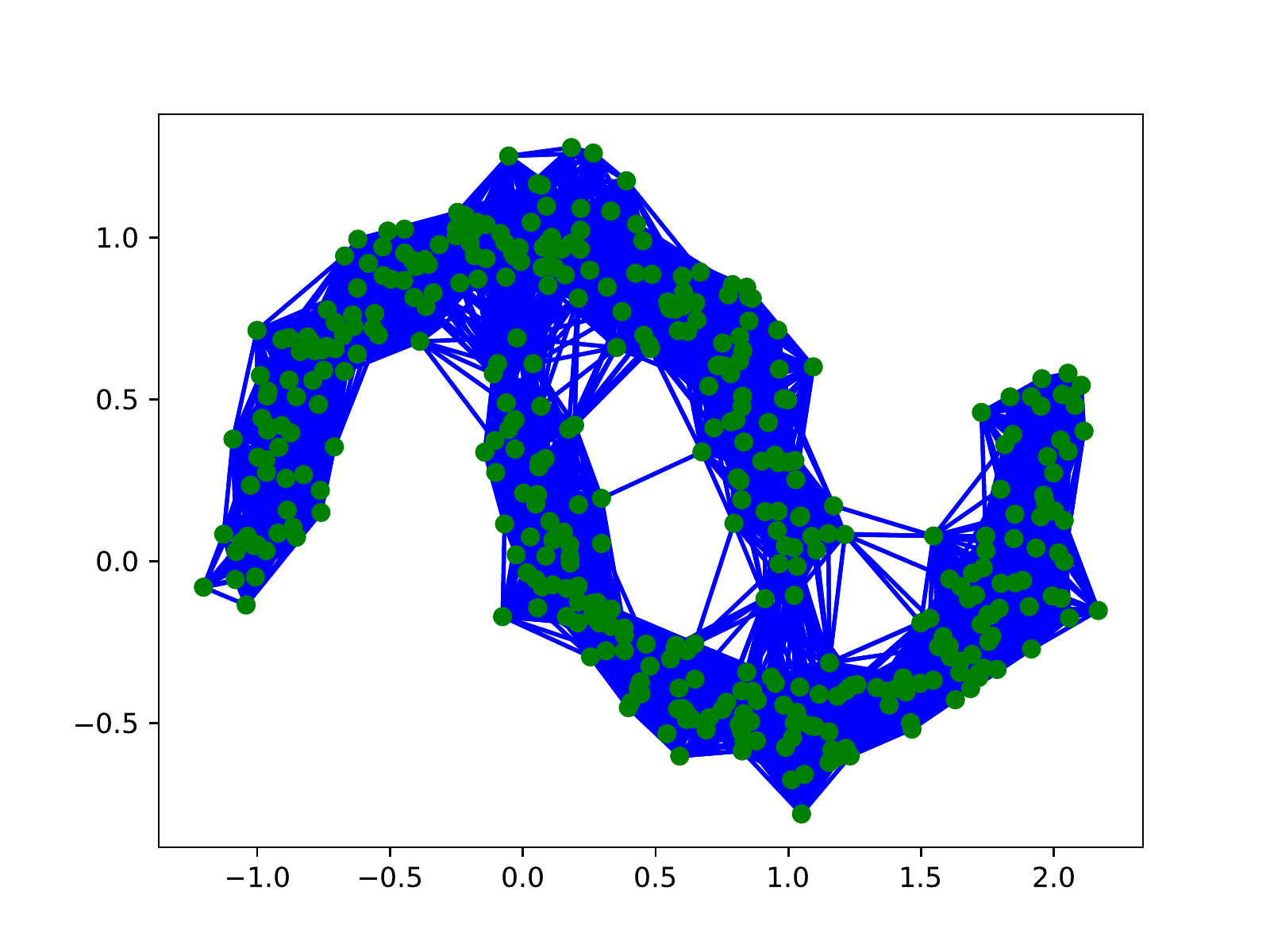}
		\caption{$t=0.019$. }
		\label{subfig:ablation_019}
	\end{subfigure}
 \hfill
	\begin{subfigure}[b]{0.32\textwidth}
		\centering
		\includegraphics[width=\textwidth]{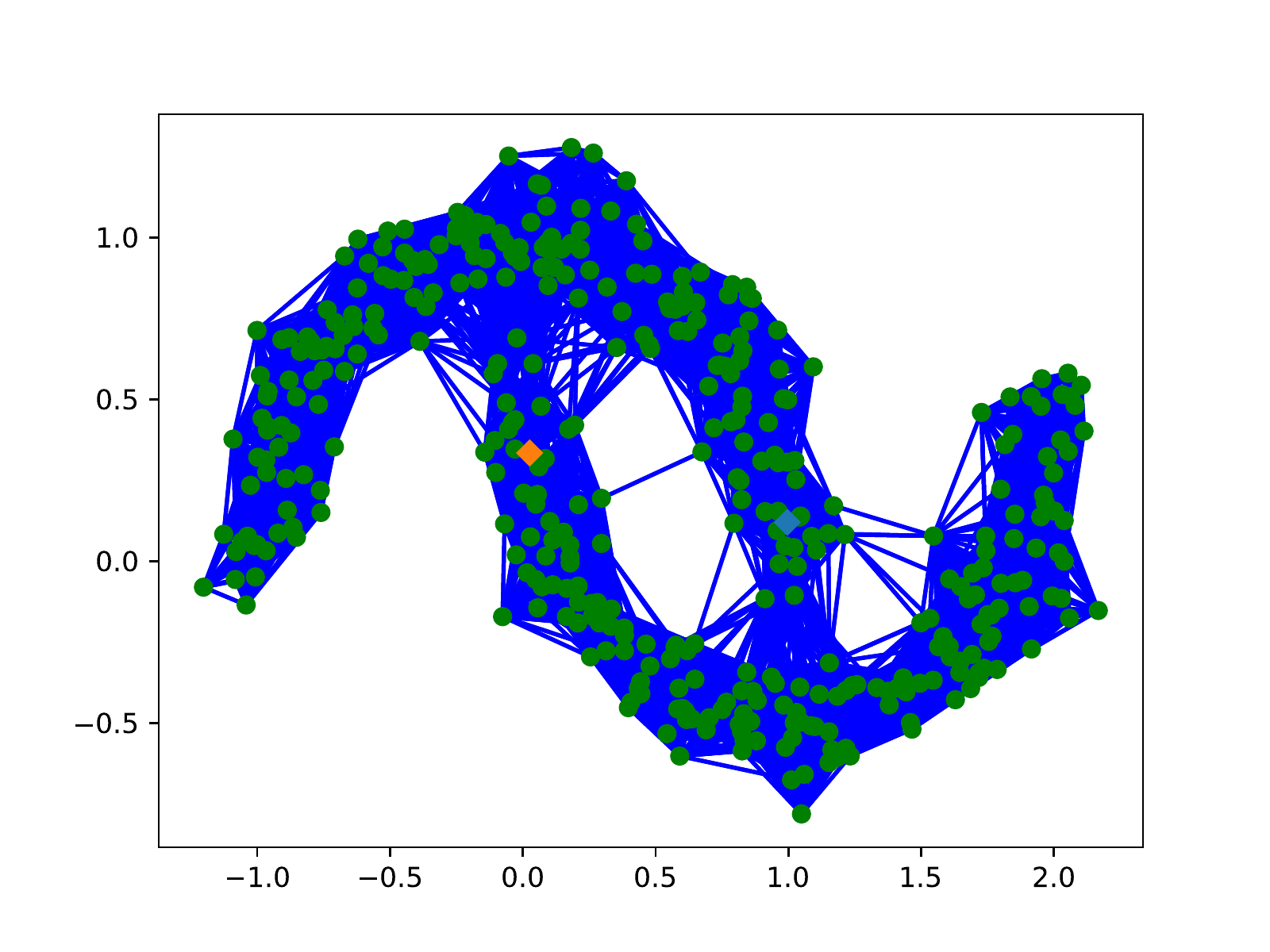}
		\caption{$t=0.02$. }
		\label{subfig:ablation_02}
	\end{subfigure}
	\caption{Laplacian for different values of temperature coefficient $t$. }
	\label{fig:ablation_laplacian}
\end{figure*}


\end{document}